\newcommand*\LyXZeroWidthSpace{\hspace{0pt}}
\providecommand{\tabularnewline}{\\}
\newenvironment{cellvarwidth}[1][t]
{\begin{varwidth}[#1]{\linewidth}}
	{\@finalstrut\@arstrutbox\end{varwidth}}
\providecommand{\algorithmname}{Algorithm}
\newtheorem{theorem}{Theorem}
\newtheorem{lemma}[theorem]{Lemma}
\newtheorem{fact}[theorem]{Fact}
\newtheorem{definition}[theorem]{Definition}
\newtheorem*{thm*}{Theorem}
\title{Optimal hypersurface decision trees}
\author{%
  Xi He \\
  Department of Computer Science\\
  Peking University\\
  Beijing, China, 1000871\\
  \texttt{xihe@pku.edu.cn} \\
}
\begin{document}

\maketitle

\begin{abstract}
The study of optimal decision trees has gained increasing attention
in recent years; however, despite substantial progress, it still suffers
from two major challenges: First, trees constructed by existing optimal
decision tree (ODT) algorithms have limited expressivity, as they
are typically restricted to axis-parallel splits or binary features.
Second, these algorithms generally do not scale well to large datasets.
These two challenges are intertwined: decision trees with more expressive
splitting rules incur significantly higher combinatorial complexity,
making the ODT problem even more difficult to solve when using complex
splits.

Building on \citet{he2025odt}'s proper decision tree framework, we
propose the first algorithm for solving the optimal hypersurface decision
tree  problem with time complexity $O\left(K!\times N^{DG+G}\right)$,
where $G$ is a variable depends on both $K$ (tree size), $M$ (polynomial
degree of hypersurface) and $D$ (data dimension). To the best of
our knowledge, no known algorithm is capable of producing decision
trees with hypersurface splits. Moreover, the proposed algorithm is
inherently amenable to vectorization, enabling efficient parallelization.
Its generic design pattern also allows it to be used to accelerate
other ODT variants, such as axis-parallel decision trees.

Furthermore, we identify an effective pruning strategy for the optimal
hypersurface decision tree problem, which enables our algorithm to
run significantly faster than the worst-case upper bound, together
with an incremental procedure that reduces the cost of checking the
feasibility of a single configuration from quadratic to linear time.
\end{abstract}

\begin{figure}[H]
	\centering
	\begin{subfigure}[b]{0.24\textwidth}
		\centering
		\includegraphics[viewport=50bp 60bp 480bp 475bp,clip,width=\textwidth]{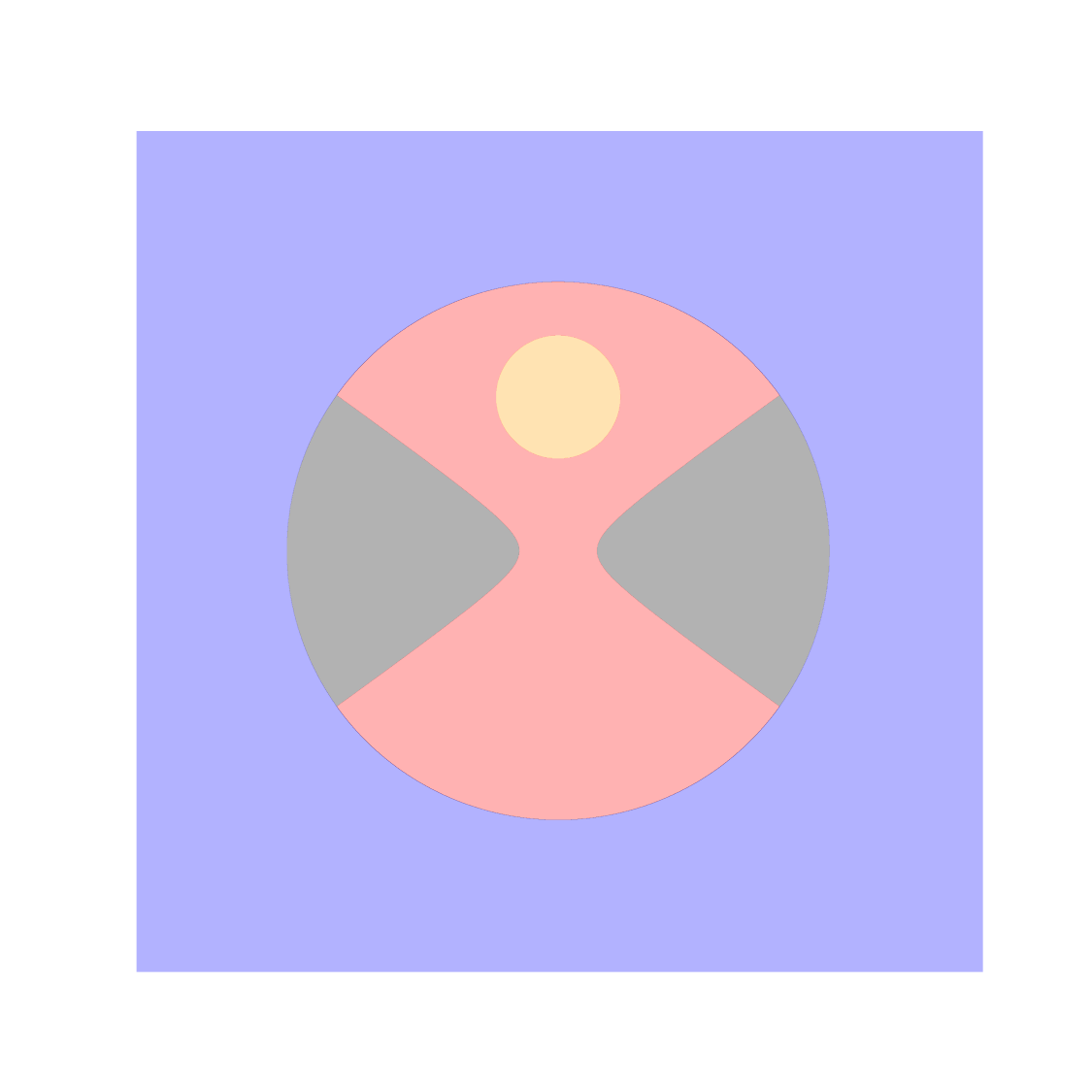}
		\caption{\centering Ground truth\\nmc: 0 $d$: 3, $K$: 3}
		\label{fig:syn_gt}
	\end{subfigure}
	\hfill
	\begin{subfigure}[b]{0.24\textwidth}
		\centering
		\includegraphics[viewport=50bp 60bp 480bp 475bp,clip,width=\textwidth]{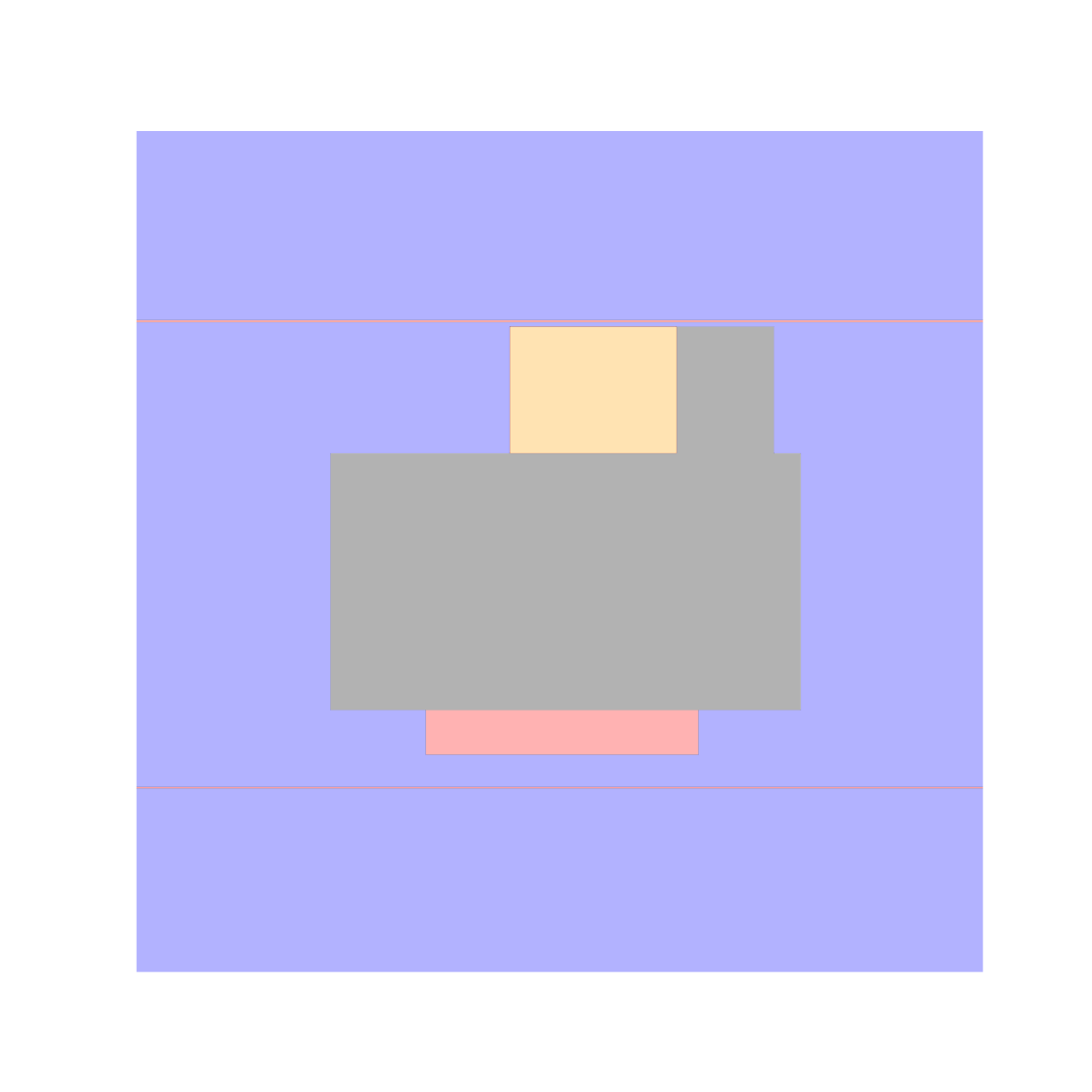}
		\caption{\centering CART\\nmc: 56 $d$: 5, $K$: 15}
		\label{fig:syn_cart}
	\end{subfigure}
	\hfill
	\begin{subfigure}[b]{0.24\textwidth}
		\centering
		\includegraphics[viewport=50bp 60bp 480bp 475bp,clip,width=\textwidth]{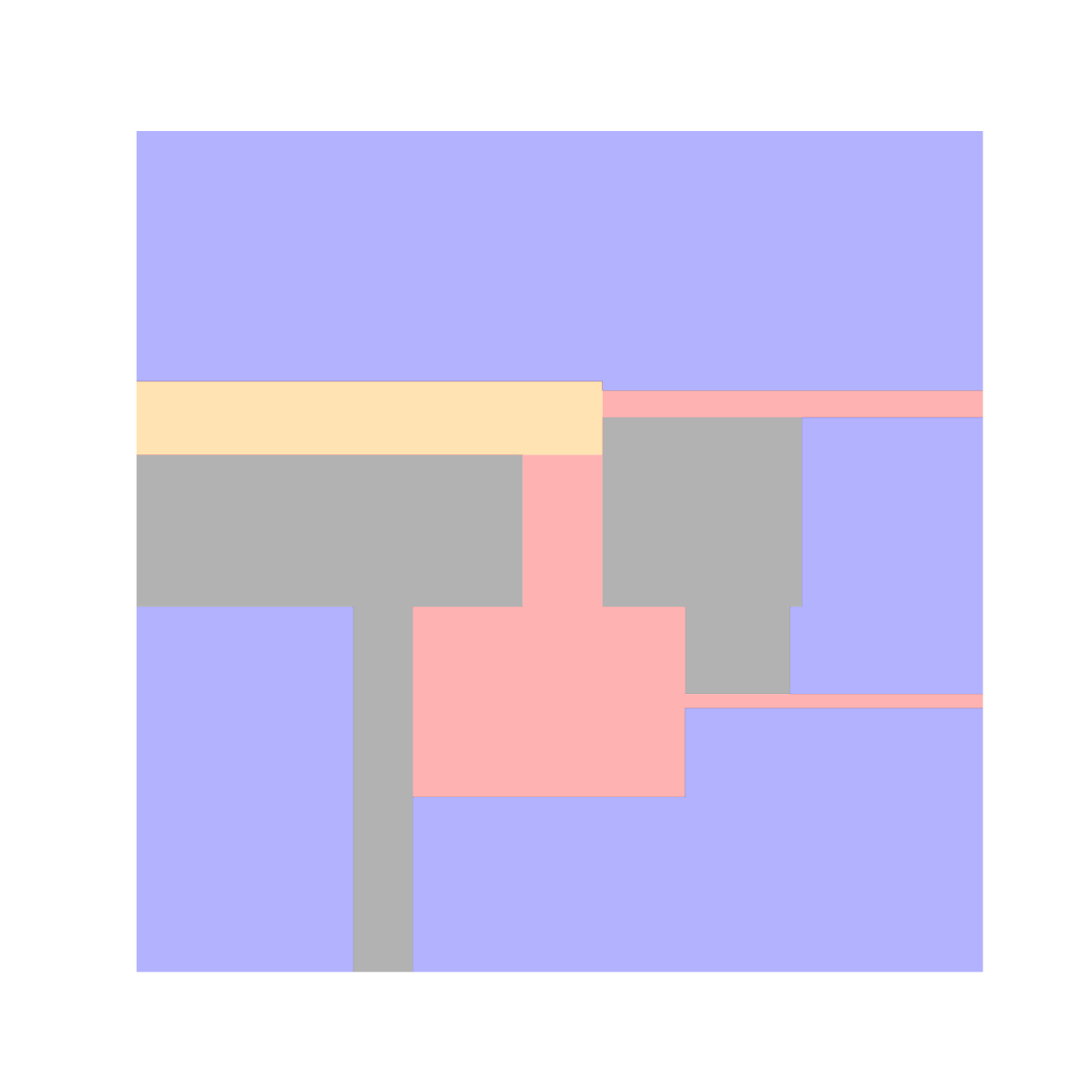}
		\caption{\centering ConTree\\nmc: 34 $d$: 4, $K$: 15}
		\label{fig:syn_contree}
	\end{subfigure}
	\hfill
	\begin{subfigure}[b]{0.24\textwidth}
		\centering
		\includegraphics[viewport=50bp 60bp 480bp 475bp,clip,width=\textwidth]{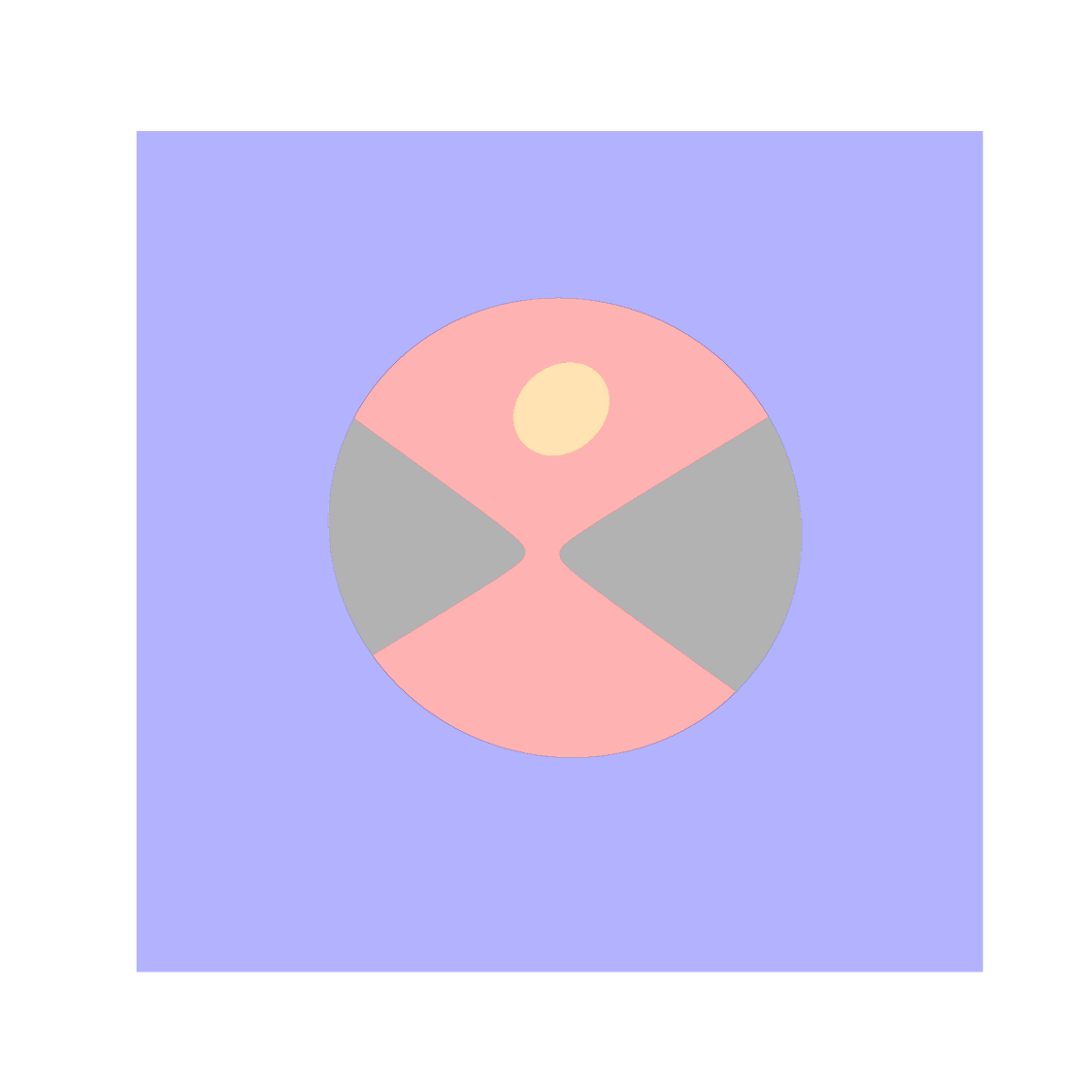}
		\caption{\centering HODT (ours)\\nmc: 18 $d$: 3, $K$: 3}
		\label{fig:syn_ours}
	\end{subfigure}
	
	\caption{Synthetic dataset (left) generated by degree-2 polynomials; axis-parallel
		decision trees learned by CART and the state-of-art optimal algorithm---ConTree (middle two);
		and hypersurface decision trees (HODT) learned using our proposed algorithm 
		(right), with corresponding number of misclassifications (nmc), tree depths ($d$),
		and tree sizes ($K$). \label{fig: comparision}}
	\label{fig:comparision}
\end{figure}

\section{Introduction}

In recent years, the study of ODT problems has attracted increasing
interest, as they can offer significantly better performance than
classical greedy training algorithms and exhibit \emph{fixed-parameter
	tractability}—\citet{ordyniak2021parameterized} showed that, although
the ODT problem is NP-hard in general, it becomes polynomial-time
solvable when the tree size or depth is bounded.

Substantial recent advances in solution methods have made ODT problems
more practical. Existing approaches can be broadly divided into two
main branches. The first focuses on mixed-integer programming (MIP)
formulations. Following the pioneering work of \citet{bertsimas2017optimal},
this line of research has inspired many subsequent studies \citep{boutilier2023optimal,gunluk2021optimal,zhu2020scalable,verwer2019learning,verwer2017learning}.
A key advantage of MIP-based methods is their flexibility in accommodating
various constraints and, in some cases, their ability to solve large-scale
instances efficiently. However, their computational complexity is
generally unpredictable, and they may exhibit poor performance even
when a polynomial-time solution exists.

The other main line of work focuses on combinatorial methods, such
as branch-and-bound (BnB) and dynamic programming (DP) algorithms
\citep{brita2025optimal,zhang2023optimal,aglin2021pydl8,nijssen2007mining,nijssen2010optimal,aglin2020learning,demirovic2022murtree,hu2019optimal,lin2020generalized,mazumder2022quant}.
However, beyond solving the standard axis-parallel decision tree problem,
this line of research has largely focused on a simplified variant—ODT
over binary feature data. This variant has substantially lower complexity;
indeed, according to the analysis of \citep{he2025FoODT_I}, its combinatorial
complexity is independent of the dataset size, thus polynomial-time
solutions exists, as also empirically observed in \citep{hu2019optimal}.

However, existing studies still suffer from two major challenges:
\begin{itemize}
	\item \textbf{Expressivity}: Classical decision trees exhibit limited expressivity
	due to their reliance on simple splitting rules, typically axis-parallel
	hyperplanes. As illustrated in Fig. \ref{fig: comparision}, this
	limitation leads to poor approximations of the underlying ground truth.
	For example, the model learned by CART or optimal axis-parallel tree
	learned by using \citet{brita2025optimal}'s algorithm uses 15 splitting
	rules with depths of 4 and 5, yet achieves only a coarse approximation.
	Additional comparisons in Fig. \ref{fig:comparision-full} (Appendix)
	show that a plausible fit is obtained only when the number of splitting
	rules reaches 45 for CART and 48 for the optimal tree. However, trees
	of such size are rarely practical in interpretable machine learning
	settings.
	\item \textbf{Scalability}: A central challenge in the study of the optimal
	decision tree (ODT) problem is scaling to large datasets. However,
	most existing studies have focused on designing pruning methods tailored
	to specific ODT variants. These techniques typically do not generalize
	to other classes of decision tree (DT) problems. For instance, the
	optimal decision tree problem over binary feature data studied by
	\citet{lin2020generalized,hu2019optimal} exploits the equivalent-points
	bound, which leverages duplicate data points—a property common in
	binary-feature datasets but rarely present in continuous-feature data.
	Consequently, researchers are discouraged from exploring more flexible
	tree models due to their intractable combinatorics and the lack of
	effective pruning strategies and efficient algorithmic procedure.
	Even for the simplest hyperplane trees, the number of candidate splits
	grows as $O\left(N^{D}\right)$, which is already prohibitive for
	greedy methods, compared to only $O\left(N\times D\right)$ splits
	in classical axis-aligned decision trees. Without efficient algorithms
	and effective pruning, solving these problems remains intractable
	even at very small scales.
\end{itemize}
In this paper, we address these two problems through the following
novel contributions:\textbf{ }
\begin{itemize}
	\item To address the expressivity issue, we build on the PDT framework of
	\citet{he2025odt} and develop a \textbf{unified algorithmic framework}
	that simultaneously handles axis-parallel, generic hyperplane, and
	hypersurface decision trees (ADT, HDT, HSDT) within a single computational
	paradigm.
	\item The OHSDT problem has $O\left(N^{G}\right)$ candidate splits, where
	$G=\left(\begin{array}{c}
		D+K\\
		K
	\end{array}\right)-1$, which is intractable even for greedy approach. To address the scalability
	issue, we introduce both \textbf{general-purpose} and \textbf{problem-specific}
	acceleration strategies. Existing DP approaches for ODT, such as those
	of \citet{he2025odt}, as well as structurally similar algorithms
	for ADT \citet{mazumder2022quant,brita2025optimal} rely on \emph{data-dependent
		recursion}\footnote{\emph{Data-dependent recursio}n refers to a recursive function or
		process where the number of recursive calls, the depth of recursion,
		or the control flow of the recursion depends on the actual input data
		(values or structure), rather than being fixed or statically predictable
		at compile time. \citet{he2025FoODT_I}'s algorithm is data dependent
		because he recursive pattern of their algorithm is governed an ancestry
		relation matrix $\boldsymbol{A}$, which is inheriently data-depended.
		Similarly, \citet{mazumder2022quant,brița2025optimal}'s algorithm
		is also data dependent becasuse the recursive pattern depends on the
		dataset $\mathcal{D}$ and the root choosed in each branch.}. As a result, these methods exhibit poor cache locality and are difficult
	to be vectorized, thus inefficient on modern hardware, such as GPUs
	and multi-level cache architectures. Our \emph{generic approach} addresses
	these limitations by introducing an algorithm with \emph{improved
		hardware compatibility} and \emph{parallelizability} while retaining
	the same worst-case complexity. Extensive experiments demonstrate
	that our method significantly outperforms classical DP approaches
	on both CPU and GPU. In addition, we propose a \emph{specialized pruning
		technique} for HDT and HSDT, which substantially reduces the practical
	computational cost far below the worst-case bound.
\end{itemize}
Finally, we conduct a comprehensive empirical analysis on both synthetic
and real-world datasets. The results show that decision trees with
more expressive splitting rules (HDT) are not only more robust to
noise than axis-aligned decision trees (ADT), but also achieve higher
predictive accuracy.

\section{Background—Proper decision tree framework}

Recently, \citet{he2025FoODT_I} proposed a generic algorithmic framework
for solving size-constrained ODT problems, formalized as following

\begin{equation}
	\begin{aligned}T^{*}= & \underset{T\in\mathcal{S}_{\text{size}}\left(K,\mathit{xs}\right)}{\text{ argmin }}E\left(T\right)\\
		\text{s.t., } & \left|l\right|\geq N_{\text{min}},\forall l\in\text{leaves}\left(T\right)
	\end{aligned}
	\label{eq: MIP-ODT-size}
\end{equation}
where $E$ is the objective function, and $\mathit{leaves}$ produce
the leaves of a tree. The above definition essentially constrained
a tree with \emph{internal nodes} of exactly $K$. 

The central idea underlying the framework of \citet{he2025FoODT_I}
is that, when a decision tree satisfies the axioms of a proper decision
tree (PDT), two key theorems follow. The first, referred to as the
\emph{PDT Characterization Theorem} \ref{thm: PDT-characterization},
establishes that any PDT can be characterized using $K$-permutations.
The second, referred to as the \emph{PDT Decomposition Theorem} \ref{thm:  PDT decomposition},
shows that this characterization enables a decomposition of the ODT
problem as follows.

\begin{equation}
	\mathit{odt}\left(\mathit{rs}\right)=\mathit{min}_{E}\circ\mathit{concatMapL}\left(\mathit{sodt}\left(\mathit{xs}\right)\right)\circ\mathit{kcombs}_{K}\left(\mathit{rs}\right).\label{eq: sodt specification}
\end{equation}
where $\circ$ is the function composition operator, defined as $f\circ g\left(a\right)=f\left(g\left(a\right)\right)$.
The left-hand side of the above definition is a generic optimal decision
tree algorithm $\mathit{odt}:\left[\mathcal{R}\right]\to\mathinner{DTree}\left(\mathcal{R},\mathcal{D}\right)$
which takes as input a list of rules $\mathit{rs}:\left[\mathcal{R}\right]$
and returns the optimal decision tree with respect to $\mathit{rs}$.
The right-hand side presents the efficient formulation of $\mathit{odt}$
derived by \citet{he2025FoODT_I}. It is composed of three functions:
$\mathit{kcombs}_{K}\left(\mathit{rs}\right)$ first generates all
possible $K$-combinations of the rules in $\mathit{rs}$, then $\mathit{concatMapL}\left(\mathit{sodt}\right)$
applies he $\mathit{sodt}$ function to each generated combination;
finally, $\mathit{kcombs}_{K}\left(\mathit{rs}\right)$, finally,
$\mathit{min}_{E}$ selects the optimal solution among them.

One of the key algorithmic insights from (\ref{eq: sodt specification})
is that the complex $\mathit{odt}$ problem can be factorized into
smaller subproblems over combinations of rules generated by $\mathit{kcombs}_{K}\left(\mathit{rs}\right)$.
Each $\mathit{sodt}$ instance defined on these combinations can be
solved independently, thereby providing a natural basis for large-scale
parallelization.

An astute reader may notice that (\ref{eq: sodt specification}) does
not solve the same problem as defined by (\ref{eq: MIP-ODT-size}).
The former takes a list of rules as input, whereas the latter defines
a search space $\mathcal{S}_{\text{size}}\left(K,\mathit{xs}\right)$
over the input data $\mathit{xs}:\left[\mathbb{R}\right]$. In practice,
this requires composing an additional function with (\ref{eq: sodt specification})
to generate the rule set from the dataset, thereby recovering the
problem in (\ref{eq: MIP-ODT-size}). This distinction arises from
the modular design of (\ref{eq: sodt specification}), which allows
different ODT variants to be solved by modifying the rule-generation
component while keeping the core formulation unchanged. We defer a
detailed discussion of this construction to Section \ref{par:Rules-generators-for}.

\section{Optimal hypersurface decision tree algorithm\label{sec:Optimal hodt algs}}

\subsection{Polynomial hypersurface decision trees}

\paragraph{Polynomial hypersurface decision trees are proper}

Polynomial hypersurfaces are geometric objects defined by \emph{polynomial
	equations}; that is, any hypersurface that can be expressed in the
form $P\left(\mathbf{x}\right)=\sum_{i}w_{i}\mathbf{x}^{\mathbf{\alpha}_{i}}$,
where $\mathbf{x}$ is the $D$-dimensional data vector and $\boldsymbol{\mathbf{\alpha}}$
is a multi-index exponent vector, and $\mathbf{x}^{\mathbf{\alpha}}=x_{1}^{\alpha_{1}}\cdot x_{2}^{\alpha_{2}}\ldots x_{D}^{\alpha_{D}}$.
The degree of a polynomial is defined as the maximum total degree
of its monomial terms, i.e., $\left|\boldsymbol{\mathbf{\alpha}}_{i}\right|$.
For example, a hyperplane corresponds to a first-degree polynomial
and can be written as $w_{1}x_{1}+w_{2}x_{2}\ldots w_{D}x_{D}+w_{0}=0$
while a conic section (a quadratic polynomial) can be expressed as
$w_{1}x_{1}^{2}+w_{2}x_{1}x_{2}\ldots w_{D}x_{D}^{2}+w_{0}=0$. 

The following lemma shows that decision trees with internal nodes
defined by polynomial equations satisfy the PDT axioms of \citet{he2025odt}.
\begin{lemma}
	\emph{Given a set of data $\mathit{xs}$ and list of $K$ hyperplanes
		$\mathit{hs}_{K}=\left[h_{1},h_{2},\ldots h_{K}\right]$ defined by
		normal vectors $ws=\left[w_{1},w_{2},\ldots w_{K}\right]$ in $\mathbb{R}^{D+1}$,
		where each hyperplane passes through exactly $D$ data points. Define
		the positive and negative regions of each hyperplane as }$h_{i}^{+}=\left\{ x\mid x\in\mathbb{R}^{D},\boldsymbol{w}_{i}^{T}\bar{x}\geq0\right\} $
	\emph{and }$h_{i}^{-}=\left\{ x\mid x\in\mathbb{R}^{D},\boldsymbol{w}_{i}^{T}\bar{x}<0\right\} $.\emph{
		Then, the decision tree constructed using these hyperplanes as splitting
		rules is a proper decision tree.}
\end{lemma}
As the result, we can apply \citet{he2025odt}'s framework can be
applied directly to the hypersurface decision tree problem. However,
as noted earlier, the $\mathit{sodt}$ algorithm proposed by \citet{he2025odt}
relies on data-dependent recursion, which hinders parallelization
and prevents it from achieving optimal efficiency on modern hardware.
To address this, in the next section we develop a novel algorithm
for the $\mathit{sodt}$ problem with a recursion pattern that can
be determined at compile time, thereby enabling full vectorization
and efficient utilization of modern hardware. This fills a key gap
in \citet{he2025FoODT_I}'s original exposition, making the $\mathit{odt}$
program more amenable to parallel execution and vectorization. For
clarity, we denote the $\mathit{sodt}$ algorithm proposed by \citet{he2025FoODT_I}
as $\mathit{sodt}_{\mathit{\text{rec}}}$ and ours as $\mathit{sodt}_{\text{vec}}$.

\paragraph{A novel hardware-friendly and parallelizable algorithm for simplified
	decision tree problems}

For a size-constrained decision tree, the number of leaves is fixed,
implying that at most $K$ recursive steps are required to construct
a full tree with $K$ splitting rules. Instead of recursively selecting
a branch node and constructing its subtrees as in $\mathit{sodt}_{\mathit{rec}}$,
an alternative approach is to incrementally add internal nodes one
at a time until all $K$ nodes are introduced. This yields a deterministic
process with at most $K$ steps, which can be determined at compile
time.

Building on this observation, we present a high-level abstraction
of our algorithm as follows

\begin{equation}
	\begin{aligned}\mathit{genDTs}_{\text{vec}} & :\mathcal{D}\times\left[\mathcal{R}\right]\to\left[\mathit{DTree}\left(\mathcal{R},\mathcal{D}\right)\right]\\
		\mathit{genDTs}_{\text{vec}} & \left(\mathit{xs},\left[\;\right]\right)=\left[\mathit{DL}\left(\mathit{xs}\right)\right]\\
		\mathit{genDTs}_{\text{vec}} & \left(\mathit{xs},\mathit{rs}\right)=\mathit{concat}\circ\left[\mathit{updates}_{\boldsymbol{A}}\left(r,\mathit{rs}\backslash r,xs\right)|\left(r,\mathit{rs}\backslash r\right)\longleftarrow\mathit{candidates}\left(\mathit{rs}\right)\right]
	\end{aligned}
\end{equation}
which exhaustively generates all possible decision trees in the search
space by recursively appending a new rule to the leaves of a partial
tree using $\mathit{updates}_{\boldsymbol{A}}$, where $\mathbf{A}$
is the ancestry relation matrix that determines whether a rule can
be appended to a given leaf. The optimal solution is then obtained
by selecting the best candidate

\[
\mathit{sodt}_{\text{vec}}\left(\mathit{xs},\mathit{rs}\right)=\mathit{min}_{E}\circ\mathit{genDTs}_{\text{vec}}\left(\mathit{xs},\mathit{rs}\right)
\]
The correctness of $\mathit{sodt}_{\text{vec}}$ is established by
showing that $\mathit{genDTs}_{\text{vec}}$ generates the same set
of trees as the search space defined by \citet{he2025FoODT_I}, while
retaining the same asymptotic complexity when $N$ is fixed (see Theorem
\ref{thm: complexity of sodt_vec}).

The key observation is that $\mathit{genDTs}_{\text{vec}}$ defines
a recursion that is no longer data-dependent: at each step, it consumes
exactly one rule $r$ from the candidate rule list $\mathit{rs}$.
This enables an implementation of $\mathit{sodt}_{\text{vec}}$ as
a deterministic loop, allowing full vectorization. The formal recursive
definition and its corresponding imperative implementation are provided
in Appendix \ref{subsec: sodt_vec} and Algorithm \ref{alg: sodt_vec_imperative}.

\subsection{Rules generators for hypersurface decision trees \label{par:Rules-generators-for}}

As noted earlier, the $\mathit{odt}$ problem is formulated in terms
of rules rather than data points to enable modularity. As a result,
the complete procedure for solving the ODT problem can be defined
as follows
\[
\mathit{odt}^{\prime}\left(xs\right)=\mathit{odt}\circ\mathit{genRules}\left(\mathit{xs}\right)
\]
where $\mathit{genRules}:\left[\mathbb{R}^{D}\right]\to\left[\mathcal{R}\right]$
generates a set of rules from a given dataset $\mathit{xs}$. The
program $\mathit{odt}^{\prime}$ solves \ref{eq: MIP-ODT-size} exactly;
or equivalently, $\mathit{odt}^{\prime}$ provides a programmatic
definition of the ODT problem. Modifying the definition of $\mathit{genRules}$
allows us to address different problem variants while leaving the
main program $\mathit{odt}$ unchanged, this offers great modularity
when solving different ODT problems. For example, in classical ADTs,
$\mathit{genRules}$ generates all axis-aligned splits, whereas for
hyperplane decision trees it generates all possible hyperplanes.

In this section, we describe how to construct a generic $\mathit{genRules}$
function capable of generating rules for ADTs, general hyperplanes,
and polynomial surfaces, based on geometric results from \citet{he2023efficient}.
At first sight, the space of all possible hyperplanes appears to exhibit
infinite combinatorial complexity—since each hyperplane is defined
by a continuous normal vector $\boldsymbol{w}_{k}$\LyXZeroWidthSpace —yet
the finiteness of the data imposes strong constraints. Specifically,
the number of distinct data partitions induced by hyperplanes is finite,
which introduces an \textbf{equivalence relation} among hyperplanes.
\citet{he2023efficient} showed that, when optimizing hyperplanes
under a 0–1 loss objective, we can exhaustively enumerate the equivalence
classes of hyperplanes and hypersurfaces using the following theorem.
\begin{theorem}
	Under the general position assumption, the 0–1 loss (accuracy) linear
	classification problem can be solved exhaustively by enumerating all
	$D$ point combinations from the dataset $\mathit{xs}$ in $\mathbb{R}^{D}$. Furthermore, classification by a hyperplane is isomorphic
	to classification by a polynomial hypersurface in a higher-dimensional
	feature space of dimension $G=\left(\begin{array}{c}
		M+D\\
		M
	\end{array}\right)-1$, where $M$ is denotes the polynomial degree and D the input dimension.\label{thm: 0-1 loss classification theorem}
\end{theorem}
Building on this geometric results, we can thus unify the generators
for axis-parallel hyperplanes, general hyperplanes, and hypersurfaces
via a simple combination generator $\mathit{genRules}\left(G,xs\right)=\mathit{kcombs}\left(G,xs\right)$—
a combination generator parameterized by $G$ (not to be confused
with the combination generator in $\mathit{odt}$, which is parameterized
by $K$, the number of splitting rules).

Accordingly, $\mathit{odt}^{\prime}$ can be defined as $\mathit{odt}^{\prime}\left(xs\right)=\mathit{min}_{E}\circ\mathit{concatMapL}\left(\mathit{sodt}\left(\mathit{xs}\right)\right)\circ\mathit{kcombs}\left(K\right)\circ\mathit{kcombs}\left(G,\mathit{xs}\right)$.
Interestingly, \citet{he2025CGs} show that when two combination generators
are composed sequentially, as in $\mathit{odt}^{\prime}$, the composition
can be fused into a single recursion $\mathit{nestedCombs}\left(K,G,xs\right)$,
which is equivalent to $\mathit{kcombs}\left(K\right)\circ\mathit{kcombs}\left(G,\mathit{xs}\right)$
(see \citet{he2025CGs} for proof; the imperative definition of $\mathit{nestedCombs}$
is given in Algorithm \ref{par:Rules-generators-for}). Therefore,
we can reformulate the $\mathit{odt}^{\prime}$ as follows

\begin{align*}
	\mathit{odt}^{\prime}\left(\mathit{xs}\right) & =\mathit{min}_{E}\circ\mathit{concatMapL}\left(\mathit{sodt}\left(\mathit{xs}\right)\right)\circ\mathit{nestedCombs}\left(K,G,\mathit{xs}\right)
\end{align*}
where each nested combination generated by $\mathit{nestedCombs}\left(K,G,xs\right)$
is a $K$-combination of rules of rules (each rule being a $D$-combination
of data points). Therefore, in later sections, we use the terms configurations,
nested combinations, and rule combinations interchangeably.

\subsection{Ancestry relation matrix and crossed-hyperplanes}

In the previous discussion, we implicitly assumed the existence of
the ancestry relation matrix $\boldsymbol{A}$, used in the definition
of the function $\mathit{update}_{\boldsymbol{A}}$. The matrix $\boldsymbol{A}:\left\{ 1,0,-1\right\} ^{K\times K}$
, encodes the ancestry relations for $K$ rules, where each entry
$\boldsymbol{A}_{i,j}\in\left\{ 1,0,-1\right\} $ indicates whether
node $j$ is the left child (1), right child (-1), or not a child
(0) of node $i$. The ancestry relation matrix is crucial for the
efficiency of the ODT algorithm, and generating it efficiently constitutes
a major computational challenge. More importantly, we identify a key
pruning property associated with the ancestry relation matrix, which
we term the crossed-hyperplane (CH) property.

In this section, we address the following questions: (1) what is the
crossed-hyperplanes (CH) property? (2) how can we generate the ancestry
relation matrix for each configuration produced by $\mathit{nestedCombs}$?
and (3) how can we efficiently prune configurations (rule combinations)
that contain CHs?

The latter two questions are closely intertwined. We develop an efficient
incremental procedure that simultaneously constructs the ancestry
relation matrices and reduces the complexity of pruning from quadratic
to linear.

\paragraph{Definition of CH}

\begin{figure}
	\begin{centering}
		\includegraphics[viewport=0bp 300bp 1280bp 600bp,clip,scale=0.25]{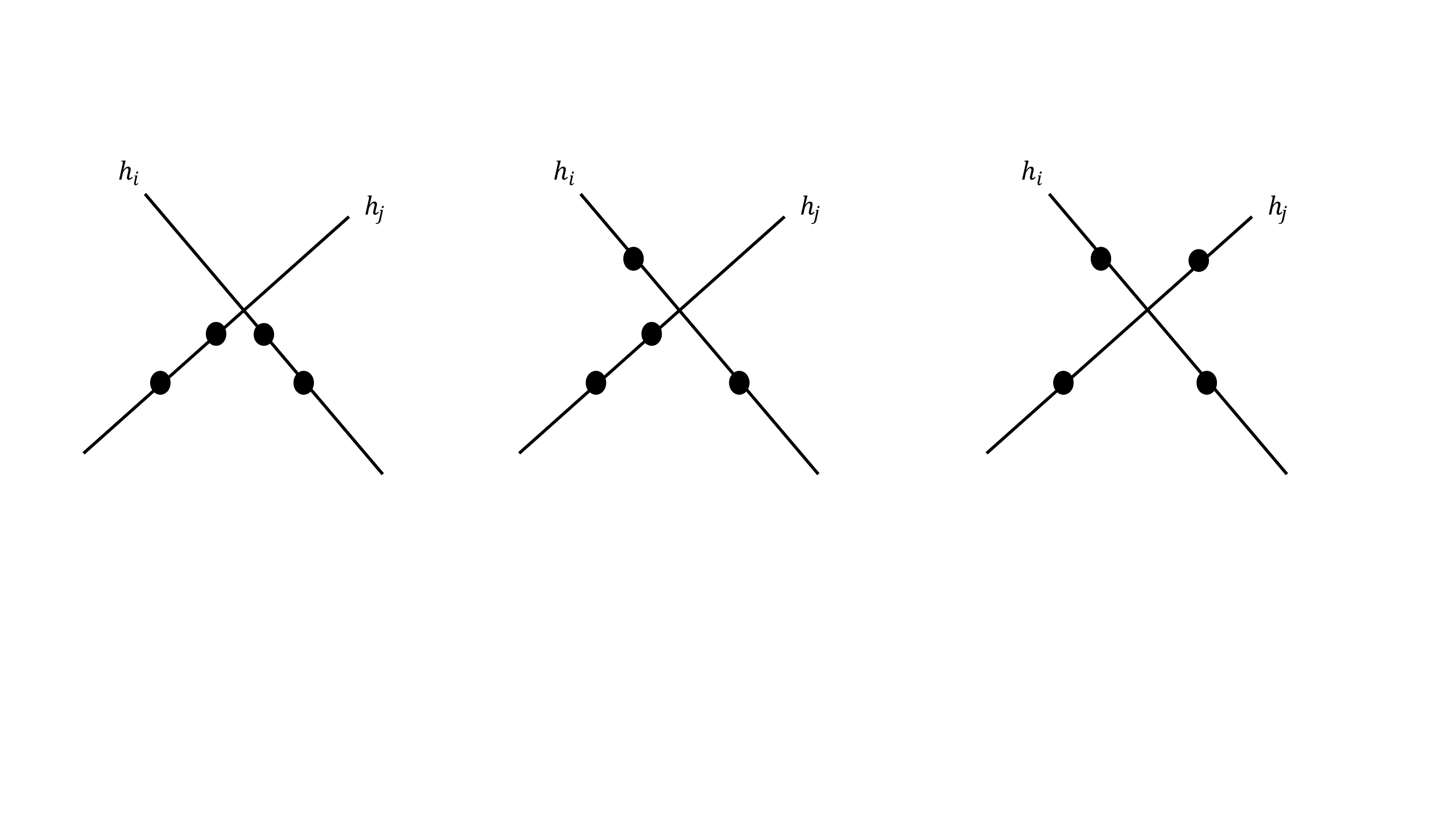}
		\par\end{centering}
	\caption{Three possible ancestry relations between two hyperplanes in $\mathbb{R}^{2}$
		are illustrated: mutual ancestry (left), asymmetrical ancestry (middle),
		and no ancestry (right). The black circles denote the data points
		used to define these hyperplanes. \label{fig:Relation-of-two hyper}}
\end{figure}

Consider a pair of rules $r_{i}$ and $r_{j}$, there are only three
possible ancestry relation
\begin{enumerate}
	\item \textbf{Mutual} \textbf{ancestry}: Both $r_{i}$ and $r_{j}$ can
	serve as ancestors of each other; in this case $\boldsymbol{A}_{ji}\neq0$
	and $\boldsymbol{A}_{ij}\neq0$ 
	\item \textbf{Asymmetrical ancestry}: Only one rule can be the ancestor
	of the other, either $\boldsymbol{A}_{ji}$ or $\boldsymbol{A}_{ij}$
	is zero 
	\item \textbf{No ancestry}: Neither rules can be the ancestor of the other;
	$\boldsymbol{A}_{ji}=0$ and $\boldsymbol{A}_{ij}=0$ 
\end{enumerate}
For classical axis-parallel rules, the third case cannot occur, because
any pair of axis-parallel hyperplanes satisfies either case (1) or
(2); specifically, a pair of axis-parallel hyperplanes always lies
entirely on one side of the other.

At first glance, this observation appears to extend to general hyperplanes
or hypersurfaces. However, by Theorem \ref{thm: 0-1 loss classification theorem},
hyperplanes are characterized by data points, which introduces a subtle
geometric configuration, as illustrated in the right panel of Fig.
3: the hyperplane $h_{i}$ separates the defining data of $h_{j}$
into two disjoint regions, and vice versa. In this case, neither $h_{i}$
and $h_{j}$ can be an ancestor of the other. This violates the \emph{Axiom
	3 }of the PDT axioms \ref{axioms: proper decision tree}, new defining
rules must be generated within subregions determined by their ancestors;
here, however, the defining data points of both $h_{i}$ and $h_{j}$
lie in different subregions.

This precisely characterizes the no-ancestry case. Formally, a pair
of hyperplanes\emph{ }$h_{i}$ and $h_{j}$, defined by two sets of
$D$ data points $\mathit{xs}$ and $\mathit{ys}$, is said to be
crossed, denoted using predicate $p_{\text{crs}}\left(h_{i},h_{j}\right)=\text{True}$
if there exist points $x,x^{\prime}\in\mathit{xs}$ and $y,y^{\prime}\in\mathit{ys}$
such that 
\[
\left(x\in h_{j}^{+}\wedge x^{\prime}\in h_{j}^{-}\right)\wedge\left(y\in h_{i}^{+}\wedge y^{\prime}\in h_{i}^{-}\right).
\]

Astute audience may hypothesize that any configuration containing
a pair of CHs cannot form a valid tree; however, it might seem possible
that a third hyperplane could separate $h_{i}$ and $h_{j}$ into
different branches, thereby avoiding the need for comparison between
them. Interestingly, the following theorem shows that this is impossible—no
such separating hyperplane exists. Consequently, any D-combination
of rules that contains a pair of CHs \textbf{cannot} yield a valid
PDT.
\begin{theorem}
	\emph{If two }hyperplanes\emph{ $h_{i}$ and $h_{j}$ cross each other
		then: no ancestry relation exists between }$h_{i}$ \emph{and} $h_{j}$\emph{,
		and no hyperplanes $h_{k}$ can separate} $h_{i}$ and $h_{j}$\emph{
		into different branches. Consequently, any combination of hypersurfaces
		containing such crossed hypersurfaces cannot form a proper decision
		tree.\label{When-two-hyperplanes}}
\end{theorem}
The following lemma allows us to determine whether two hyperplanes
are crossed using the ancestry relation matrix $\mathbf{A}$.
\begin{lemma}
	\emph{Given a list of $K$ hyperplanes $\mathit{hs}_{K}=\left[h_{1},h_{2},\ldots,h_{K}\right]$
		with ancestry matrix $\boldsymbol{A}$. If $\mathit{hs}_{K}$ contains
		a pair of CHs $h_{i}$ and $h_{j}$, then }$\boldsymbol{A}_{ij}=0$
	\emph{and} $\boldsymbol{A}_{ji}=0$\emph{. Thus }$p_{\text{crs}}\left(h_{i},h_{j}\right)=\text{False}$
	if there exists $h_{i},h_{j}\in\mathit{hs}_{K}$, such that $\boldsymbol{A}_{ij}=0$
	\emph{and} $\boldsymbol{A}_{ji}=0$\emph{.}
\end{lemma}
As a result, given the matrix $\mathbf{A}$, the CH predicate $p_{\text{crs}}\left(h_{i},h_{j}\right)$
can be evaluated in $O\left(1\right)$ time by simply checking whether
$\boldsymbol{A}_{ij}=0$ and $\boldsymbol{A}_{ji}=0$. Similarly,
for a set of $K$ hyperplanes $\mathit{hs}$, the predicate $p\left(\mathit{hs}\right)=\underset{h_{i},h_{j}\in\mathit{hs}}{\bigwedge}p_{\text{crs}}\left(h_{i},h_{j}\right)$
verifies that all pairs of hyperplanes in $\mathit{hs}$ are non-crossing.
This can be evaluated in $O\left(K^{2}\right)$ time by inspecting
all off-diagonal entries of $\mathbf{A}$ .

Since any configuration containing CH cannot form a valid decision
tree, such configurations can be safely discarded without compromising
optimality. The following program implements pruning based on CH property:
it first computes the ancestry relation matrix for each configuration
via $\mathit{map}_{\mathit{calARM}}$, after which $\mathit{filter}_{p}$
removes all configurations that contain crossed hyperplanes

\begin{equation}
	\mathit{\mathit{nestedCombsFA}}\left(K,G\right)=\mathit{filter}_{p}\circ\mathit{map}_{\mathit{calARM}}\circ\mathit{nestedCombs}\left(K,G\right)\label{eq: nestedcombsFA}
\end{equation}

Pruning based on the CH property proves highly effective in practice.
In Section \ref{sec:Empirical-results}, we empirically demonstrate
that although $\mathit{nestedCombs}\left(K,G\right)$ generates $O\left(N^{GK}\right)$
configurations, a large proportion of them is eliminated by $\mathit{filter}_{p}$.
Moreover, we observe that the ratio of infeasible configurations to
the total number of configurations increases with $K$, implying that
the effectiveness of this pruning procedure improves as the problem
size grows. This enables the solution of large-scale problems involving
more complex DT models.

The naive implementation of $p\left(\mathit{hs}\right)$ described
above incurs a cost of $O\left(K^{2}\right)$ per $\mathit{hs}$ of
size $K$. In the next subsection, we show how this cost can be reduced
to $O\left(K\right)$. Since $p$ must be evaluated for every configuration
and the total number of configurations can reach $O\left(N^{KG}\right)$
in the worst case, this reduction offers a substantial speedup.

\paragraph{Incremental crossed-hyperplane-free generation algorithm}

Furthermore, we develop an incremental algorithm $\mathit{\mathit{nestedCombsFA}}$
that integrates the computation of the ancestry relation matrix $\mathit{mapL}_{\mathit{calARMs}}$
directly into the $\mathit{nestedCombs}$ procedure. The following
theorem shows that the fused program $\mathit{\mathit{nestedCombsFA}}$
not only produced results identical to that of (\ref{eq: nestedcombsFA})
but also generates CH-free configurations in a more efficient way.
\begin{theorem}
	The programs $\mathit{filter}_{p_{\text{crs}}}$, $\mathit{map}_{\mathit{calARMs}}$,
	and $\mathit{nestedCombs}$ can be fused into a single algorithm $\mathit{\mathit{nestedCombsFA}}$
	, as defined in Algorithm \ref{alg:nestedCombsFA}, which decrease
	the cost of $p$ to $O\left(K\right)$.
\end{theorem}
Putting all the preceding steps together, the overall time complexity
of the $\mathit{odt}^{\prime}$ algorithm based on our $\mathit{sodt}_{\text{vec}}$
is $O\left(K!\times N^{GK+1+K}\right)$, as proved by Theorem \ref{thm: odt^prime complexity}.

\section{Empirical results\label{sec:Empirical-results}}

The experiments aim to provide a detailed analysis along four main
dimensions:

1) \textbf{Computational scalability} of $\mathit{sodt}_{\text{vec}}$
and $\mathit{sodt}_{\text{rec}}$ under both sequential and parallelized
settings; 2)\textbf{ Effectiveness of the CH pruning method}; and
3)\textbf{ Analysis on synthetic datasets}. We systematically examine
the effects of \emph{ground-truth tree size},\emph{ dimensionality},
\emph{dataset size}, \emph{label noise}, and \emph{feature noise}
on the performance of HDT and classical ADT; 4)\textbf{ Generalization
	performance on real-world datasets}. We evaluate performance across
30 real-world datasets, comparing hyperplane decision trees (HDT)
with axis-parallel decision trees (ADT) produced by both optimal and
heuristic algorithms.

All experiments are conducted on an Intel i9-14900KF CPU and a single
GeForce RTX 4060 Ti GPU. Due to space constraints, we report only
the most important results. Detailed discussions and additional results
are provided in Appendix \ref{sec: experiments-appendix}. 
\begin{figure}[H]
	\begin{centering}
		\includegraphics[viewport=10bp 0bp 850bp 500bp,clip,scale=0.3]{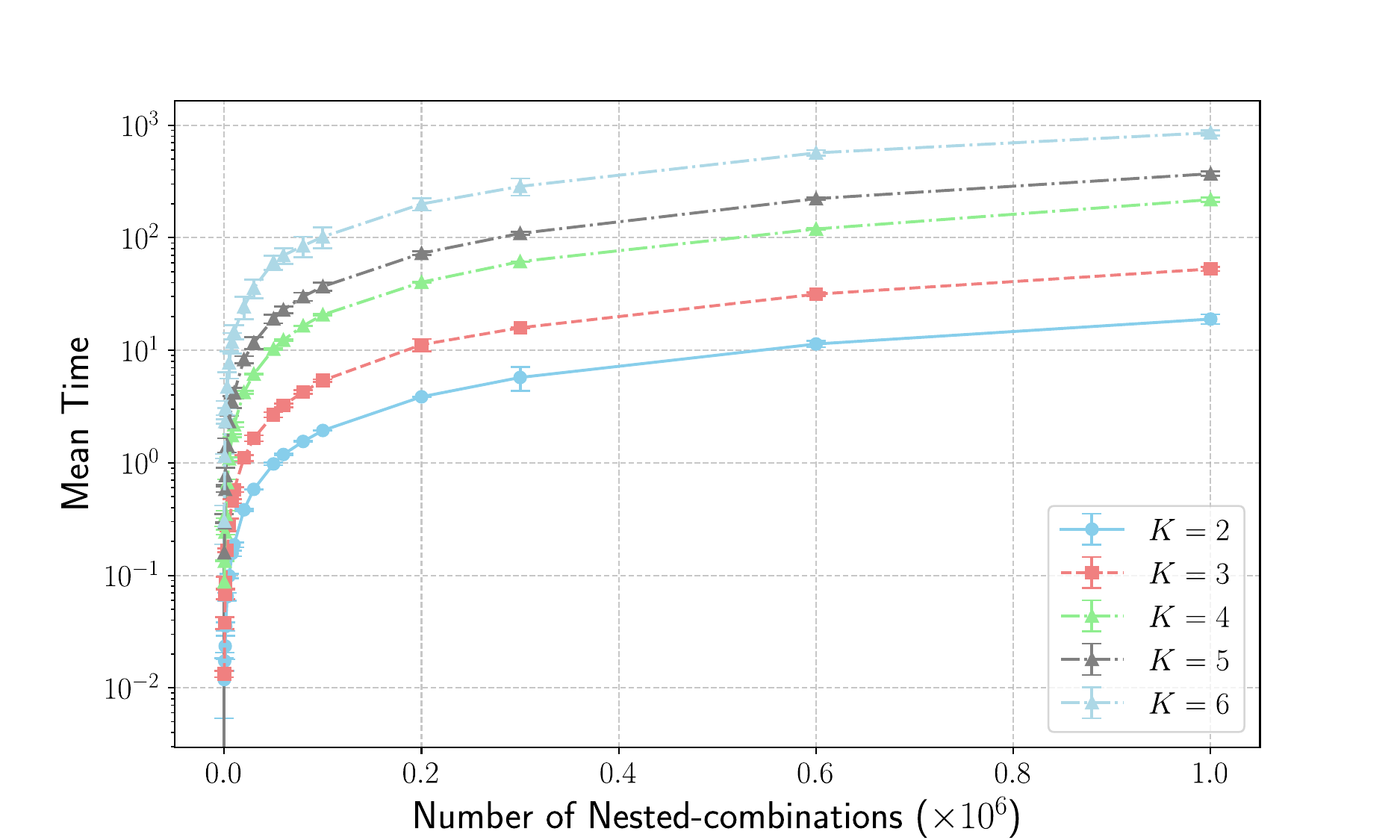}
		\par\end{centering}
	\caption{Log-log wall-clock run time (seconds) for the $\mathit{sodt}_{\text{vec}}$
		algorithm, across nested combinations of size up to $1\times10^{6}$.
		On this scale, linear run time appears as a logarithmic function of
		problem size. \label{fig:efficiency of sodt_vec}}
\end{figure}

\subsection{Efficiency comparison}

\begin{wrapfigure}{r}{0.45\textwidth}  
	\centering
	\includegraphics[width=0.43\textwidth]{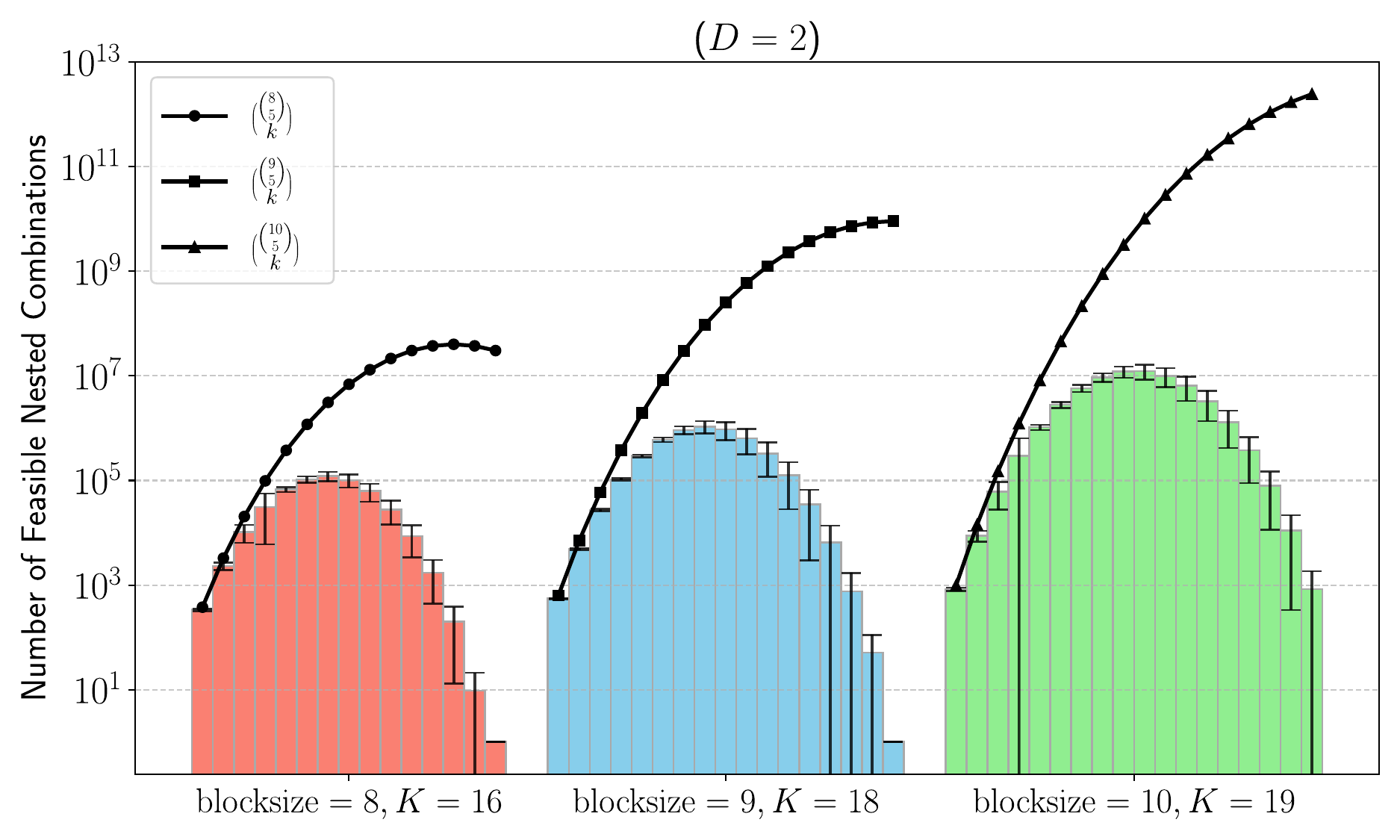}
	\caption{Combinatorial complexity of feasible nested combinations with varying
		$K$ when $D=2$.}
	\label{fig:D2_CH}
\end{wrapfigure}
In Appendix \ref{subsec:Computational-and-combinatorial analysis},
we present a thorough analysis of $\mathit{odt}^{\prime}$ based on
both $\mathit{sodt}_{\text{vec}}$ and $\mathit{sodt}_{\text{rec}}$.
Since $\mathit{sodt}_{\text{vec}}$ can better leverage modern hardware,
the results show that $\mathit{sodt}_{\text{vec}}$ significantly
outperforms $\mathit{sodt}_{\text{rec}}$ in both parallel and sequential
settings (without using the GPU). We then provide a detailed analysis
of the performance of $\mathit{sodt}_{\text{vec}}$ alone in Fig.
\ref{fig:efficiency of sodt_vec}. The results demonstrate that $\mathit{sodt}_{\text{vec}}$
can explore millions of configurations per second (note that each
configuration may generate up to $K!$ trees in the worse case)using
just a single GPU.

\subsection{Effectiveness of CH filtering}

As shown in Fig.~\ref{fig:D2_CH}, the true combinatorial complexity
after CH filtering (bars) is substantially smaller than the theoretical
upper bound $O(N^{DK})$ (black lines). Moreover, the complexity
curve exhibits an inverted U-shape, indicating that the ratio of infeasible
configurations increases with $K$, and the rate of increase becomes
faster as $K$ grows.

\subsection{Analysis on synthetic datasets}

We conduct comprehensive experiments to evaluate decision tree models
under various settings, including ground-truth tree size, data dimensionality,
dataset size, label noise, and feature noise. Our experimental design
builds upon the settings \citet{murthy1995decision} and \citet{bertsimas2017optimal},
while extending their axis-parallel tree experiments to the hyperplane
setting. Consequently, the underlying partitions in our synthetic
datasets consist of polygonal regions.

The results show that HDT models learned by our algorithms not only
achieve higher accuracy but also produce significantly smaller trees
compared with ADT models obtained by the optimal algorithm of \citet{brita2025optimal}
or CART.
	\begin{table}
	\begin{centering}
		\centering\tiny
		\renewcommand{\arraystretch}{1.1}
		\begin{tabular}
			{
				>{\raggedright\arraybackslash}p{1.3cm}  
				>{\centering\arraybackslash}p{0.3cm}
				>{\centering\arraybackslash}p{0.3cm}
				>{\centering\arraybackslash}p{0.3cm}
				>{\centering\arraybackslash}p{1.3cm}
				>{\centering\arraybackslash}p{1.3cm}
				>{\centering\arraybackslash}p{1.3cm}
				>{\centering\arraybackslash}p{1.3cm}
				>{\centering\arraybackslash}p{1.3cm}
				>{\centering\arraybackslash}p{1.3cm}
				>{\centering\arraybackslash}p{1.3cm}
			}
			
			Dataset & $N$ & $D$ & $C$ & \begin{cellvarwidth}[t]
				\centering
				CART-depth
				
				$d=2$
			\end{cellvarwidth} & \begin{cellvarwidth}[t]
				\centering
				ConTree
				
				$d=2$
			\end{cellvarwidth} & \begin{cellvarwidth}[t]
				\centering
				CART-size
				
				$K=2$
			\end{cellvarwidth} & \begin{cellvarwidth}[t]
				\centering
				CART-size
				
				$K=3$
			\end{cellvarwidth} & \begin{cellvarwidth}[t]
				\centering
				HODT
				
				$K=2$
			\end{cellvarwidth} & \begin{cellvarwidth}[t]
				\centering
				HODT
				
				$K=3$
			\end{cellvarwidth}\tabularnewline\\
			
			\Xhline{0.9pt}\\
			haberman & 283 & 3 & 2 & \begin{cellvarwidth}[t]
				\centering
				75.58/73.68
				
				(1.62/3.14)
			\end{cellvarwidth} & \begin{cellvarwidth}[t]
				\centering
				76.991/72.98
				
				(0.93/3.06)
			\end{cellvarwidth} & \begin{cellvarwidth}[t]
				\centering
				74.25/72.63
				
				(1.26/4.52)
			\end{cellvarwidth} & \begin{cellvarwidth}[t]
				\centering
				75.31/73.68
				
				(1.82/2.94)
			\end{cellvarwidth} & \begin{cellvarwidth}[t]
				\centering
				79.38/\textbf{78.25}
				
				(1.23/3.60)
			\end{cellvarwidth} & \begin{cellvarwidth}[t]
				\centering
				\textbf{80.70}/78.24
				
				(1.39/4.87)
			\end{cellvarwidth}\tabularnewline
			BldTrns & 502 & 4 & 2 & \begin{cellvarwidth}[t]
				\centering
				75.71/72.08
				
				(1.05/2.02)
			\end{cellvarwidth} & \begin{cellvarwidth}[t]
				\centering
				77.16/71.09
				
				(0.60/01.70)
			\end{cellvarwidth} & \begin{cellvarwidth}[t]
				\centering
				75.711/72.08
				
				(1.05/02.02)
			\end{cellvarwidth} & \begin{cellvarwidth}[t]
				\centering
				76.91/74.06
				
				(0.56/03.09)
			\end{cellvarwidth} & \begin{cellvarwidth}[t]
				\centering
				79.38/\textbf{78.50}
				
				(1.16/4.37)
			\end{cellvarwidth} & \begin{cellvarwidth}[t]
				\centering
				\textbf{80.71}/78.25
				
				(1.12/3.86)
			\end{cellvarwidth}\tabularnewline
			spesis & 975 & 3 & 2 & \begin{cellvarwidth}[t]
				\centering
				94.13/93.64
				
				(0.30/1.06)
			\end{cellvarwidth} & \begin{cellvarwidth}[t]
				\centering
				94.26/93.74
				
				(0.30/0.88)
			\end{cellvarwidth} & \begin{cellvarwidth}[t]
				\centering
				94.13/93.64
				
				(0.30/1.06)
			\end{cellvarwidth} & \begin{cellvarwidth}[t]
				\centering
				94.18/93.64
				
				(0.35/1.06)
			\end{cellvarwidth} & \begin{cellvarwidth}[t]
				\centering
				94.97/\textbf{93.85}
				
				(0.44/1.95)
			\end{cellvarwidth} & \begin{cellvarwidth}[t]
				\centering
				\textbf{95.39}/\textbf{93.85}
				
				(0.43/1.95)
			\end{cellvarwidth}\tabularnewline
			algerian & 243 & 14 & 2 & \begin{cellvarwidth}[t]
				\centering
				99.28/98.37
				
				(0.25/1.53)
			\end{cellvarwidth} & \begin{cellvarwidth}[t]
				\centering
				99.69/95.92
				
				(0.25/2.24)
			\end{cellvarwidth} & \begin{cellvarwidth}[t]
				\centering
				99.28/98.37
				
				(0.25/1.52)
			\end{cellvarwidth} & \begin{cellvarwidth}[t]
				\centering
				99.59/98.37
				
				(0.21/1.53)
			\end{cellvarwidth} & \begin{cellvarwidth}[t]
				\centering
				99.59/97.96
				
				(0.43/1.44)
			\end{cellvarwidth} & \begin{cellvarwidth}[t]
				\centering
				\textbf{100}/\textbf{98.78}
				
				(0.00/1.12)
			\end{cellvarwidth}\tabularnewline
			Cryotherapy & 89 & 6 & 2 & \begin{cellvarwidth}[t]
				\centering
				94.93/90.00
				
				(1.91/6.48)
			\end{cellvarwidth} & \begin{cellvarwidth}[t]
				\centering
				94.93/91.11
				
				(1.91/6.67)
			\end{cellvarwidth} & \begin{cellvarwidth}[t]
				\centering
				91.83/80.00
				
				(2.87/10.30)
			\end{cellvarwidth} & \begin{cellvarwidth}[t]
				\centering
				94.93/90.00
				
				(1.91/6.48)
			\end{cellvarwidth} & \begin{cellvarwidth}[t]
				\centering
				98.31/93.33
				
				(0.63/6.33)
			\end{cellvarwidth} & \begin{cellvarwidth}[t]
				\centering
				\textbf{99.16}/\textbf{93.33}
				
				(0.77/4.65)
			\end{cellvarwidth}\tabularnewline
			Caesarian & 72 & 5 & 2 & \begin{cellvarwidth}[t]
				\centering
				73.33/58.67
				
				(2.05/7.78)
			\end{cellvarwidth} & \begin{cellvarwidth}[t]
				\centering
				75.09/57.33
				
				(1.31/6.80)
			\end{cellvarwidth} & \begin{cellvarwidth}[t]
				\centering
				73.33/6133
				
				(2.05/7.78)
			\end{cellvarwidth} & \begin{cellvarwidth}[t]
				\centering
				75.79/64.00
				
				(2.05/09.04)
			\end{cellvarwidth} & \begin{cellvarwidth}[t]
				\centering
				88.42/\textbf{85.33}
				
				(1.57/5.58)
			\end{cellvarwidth} & \begin{cellvarwidth}[t]
				\centering
				\textbf{91.23}/84.00
				
				(2.15/3.68)
			\end{cellvarwidth}\tabularnewline
			ecoli & 336 & 7 & 8 & \begin{cellvarwidth}[t]
				\centering
				80.37/79.12
				
				(1.28/2.35)
			\end{cellvarwidth} & \begin{cellvarwidth}[t]
				\centering
				81.11/\textbf{80.59}
				
				(0.80/3.65)
			\end{cellvarwidth} & \begin{cellvarwidth}[t]
				\centering
				76.04/75.00
				
				(1.38/2.94)
			\end{cellvarwidth} & \begin{cellvarwidth}[t]
				\centering
				81.11/77.35
				
				(0.80/1.77)
			\end{cellvarwidth} & \begin{cellvarwidth}[t]
				\centering
				80.73/77.64
				
				(0.97/4.08)
			\end{cellvarwidth} & \begin{cellvarwidth}[t]
				\centering
				\textbf{82.76}/79.12
				
				(1.10/3.81)
			\end{cellvarwidth}\tabularnewline
			GlsId & 213 & 9 & 6 & \begin{cellvarwidth}[t]
				\centering
				62.35/62.33
				
				(1.18/5.38)
			\end{cellvarwidth} & \begin{cellvarwidth}[t]
				\centering
				67.29/\textbf{62.79}
				
				(1.15/04.88)
			\end{cellvarwidth} & \begin{cellvarwidth}[t]
				\centering
				61.76/62.33
				
				(1.18/5.38)
			\end{cellvarwidth} & \begin{cellvarwidth}[t]
				\centering
				66.47/63.72
				
				(1.12/5.22)
			\end{cellvarwidth} & \begin{cellvarwidth}[t]
				\centering
				71.18/61.86
				
				(2.46/4.53)
			\end{cellvarwidth} & \begin{cellvarwidth}[t]
				\centering
				\textbf{74.25}/\textbf{62.79}
				
				(2.29/5.20)
			\end{cellvarwidth}\tabularnewline
			
		\end{tabular}
		\par\end{centering}
	\caption{Five-fold cross-validation results on the UCI dataset. We compare
		the performance of our HODT algorithm, with $K$ (number of splitting
		rules) ranging from 2 to 3, trained using our algorithms—against approximate
		methods:size- and depth-constrained CART algorithms (CART-size and
		CART-depth), as well as the state-of-the-art optimal ADT algorithm,
		ConTree \citep{brita2025optimal}. The depth of the CART-depth and
		ConTree algorithms are fixed at 2. Results are reported as mean 0-1
		loss on the training and test sets in the format \emph{Training Error
			/ Test Error (Standard Deviation: Train / Test)}. The best-performing
		algorithm in each row is shown in \textbf{bold}. The results is partial
		due to space constrains. See \ref{tab:Five-fold-cross-validation-resul}
		for complete results \label{tab: subset table} }
\end{table}

\subsection{Analysis on real-world datasets }

We demonstrate that, when model complexity is properly controlled,
hyperplane decision trees (HDT) consistently outperform axis-parallel
decision tree (ADT) models across various datasets. In extreme cases,
HDT achieves improvements of over 20\% in training accuracy and nearly
30\% in test accuracy compared to the optimal axis-parallel tree algorithm.
Due to space constraints, we present only a subset of the experimental
results in Table \ref{tab: subset table}, More detailed experiments
and explanations are provided in Appendix \ref{subsec: real-world}.

\section{Conclusion}

In this paper, we present the first optimal hypersurface decision
tree algorithm. Unlike most previous work on the optimal decision
tree problem, which typically offers specialized speed-ups based on
pruning procedures tailored to specific tree structures, we introduce
speed-up techniques from two perspectives: one generic and one specific
to hypersurface decision trees (HDT).

1) Generic speed-up: We propose a vectorized algorithm designed for
solving the $\mathit{sodt}$ problem that enables efficient parallelization
on GPUs. Our algorithm is more efficient than custom dynamic programming
(DP) approaches even in the sequential CPU setting, and significantly
faster once the computational power of GPUs is leveraged. The algorithm
is generic and can be applied to any tree in the PDT family.

2) Specialized speed-up: We introduce an efficient pruning procedure
based on the crossed-hyperplane property for the HDT problem, along
with a fused algorithm that integrates the filtering process directly
into the generation phase.

Empirically, we conduct a comprehensive analysis to verify the efficiency
of our algorithm and the effectiveness of the proposed pruning method.
Furthermore, experimental results on both synthetic datasets and over
30 real-world datasets demonstrate that our algorithm significantly
outperforms classical axis-parallel decision tree (ADT) models under
controlled model complexity.

\bibliographystyle{plainnat}
\bibliography{Bibliography}

\appendix

\section{Additional comparison}

\begin{figure}[H]
	\centering
	
	\begin{subfigure}[b]{0.3\textwidth}
		\centering
		\includegraphics[viewport=50bp 60bp 480bp 475bp,clip,width=\textwidth]{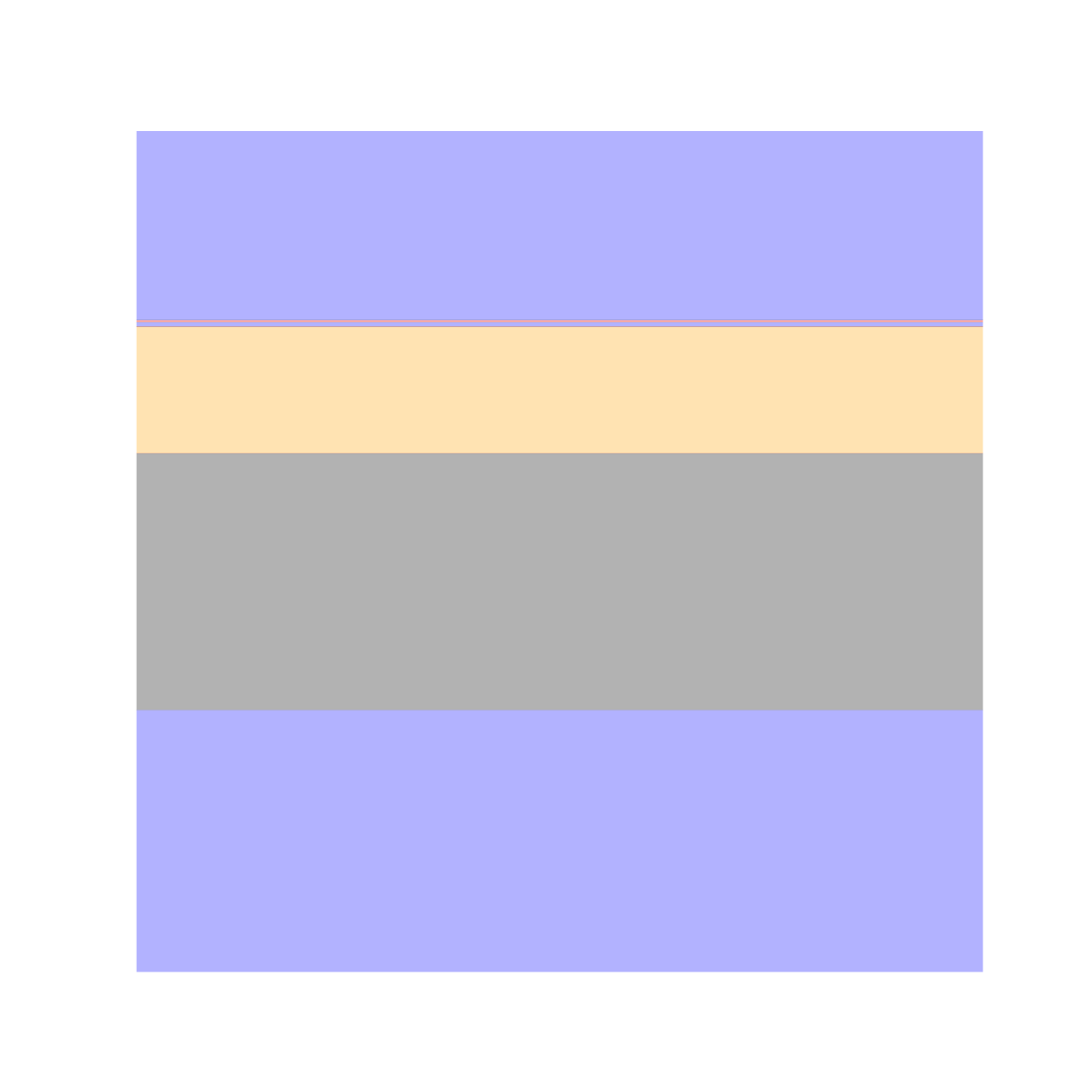}
		\caption{\centering CART\\nmc: 114 $d$: 3, $K$: 6}
		\label{fig:cart1}
	\end{subfigure}
	\hfill
	\begin{subfigure}[b]{0.3\textwidth}
		\centering
		\includegraphics[viewport=50bp 60bp 480bp 475bp,clip,width=\textwidth]{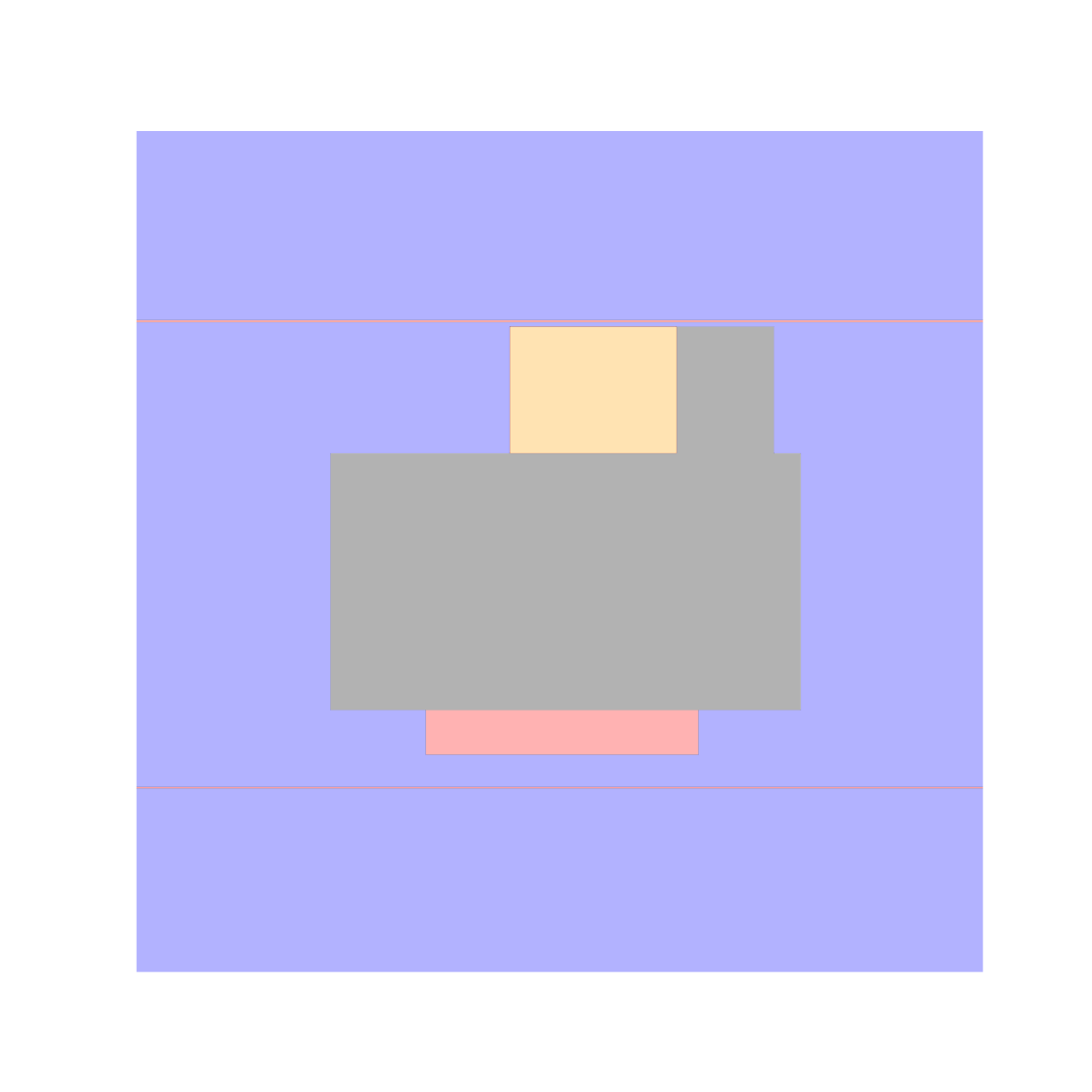}
		\caption{\centering CART\\nmc: 56 $d$: 5, $K$: 15}
		\label{fig:cart2}
	\end{subfigure}
	\hfill
	\begin{subfigure}[b]{0.3\textwidth}
		\centering
		\includegraphics[viewport=50bp 60bp 480bp 475bp,clip,width=\textwidth]{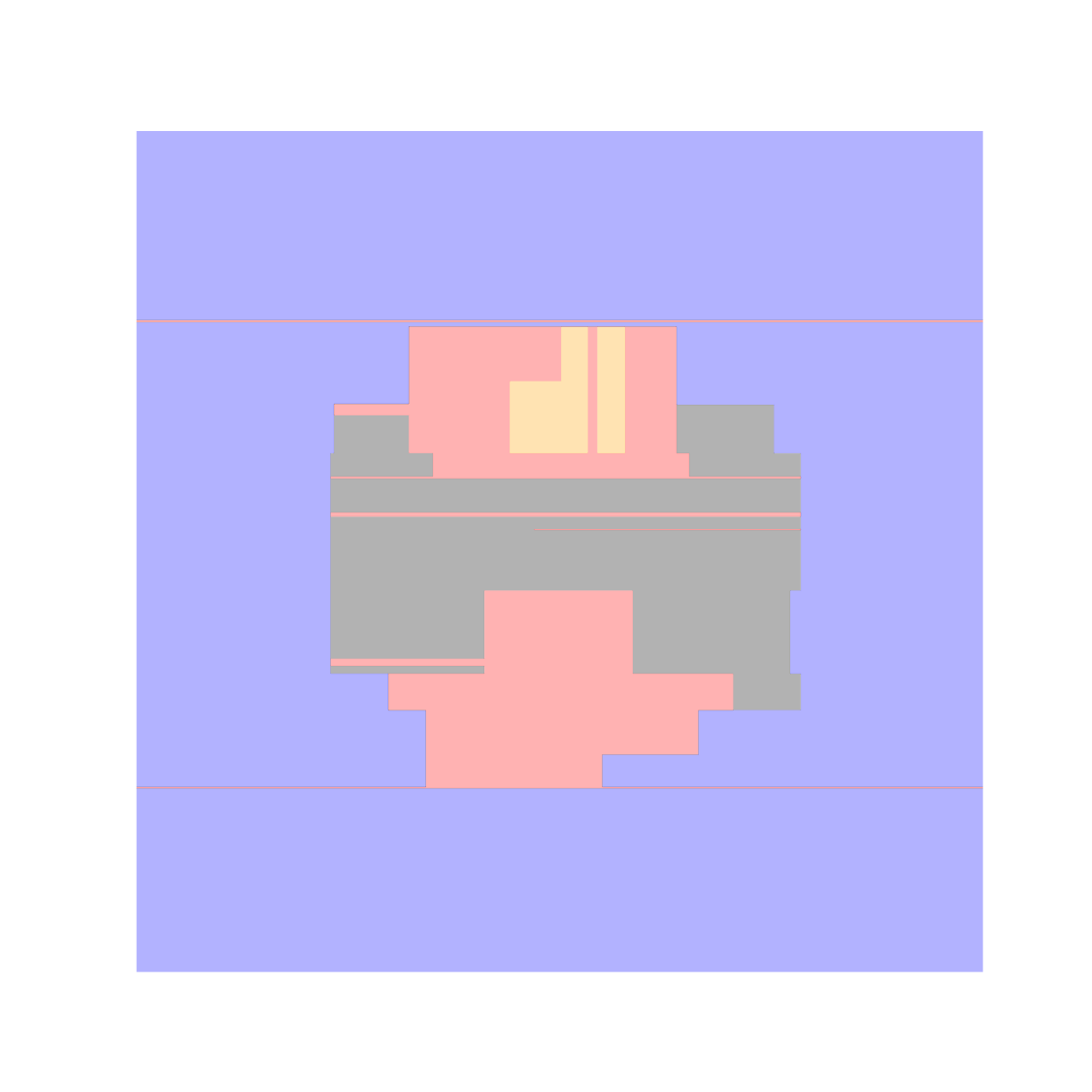}
		\caption{\centering CART\\nmc: 0 $d$: 12, $K$: 45}
		\label{fig:cart3}
	\end{subfigure}
	
	\vspace{0.5em}
	
	\begin{subfigure}[b]{0.3\textwidth}
		\centering
		\includegraphics[viewport=50bp 60bp 480bp 475bp,clip,width=\textwidth]{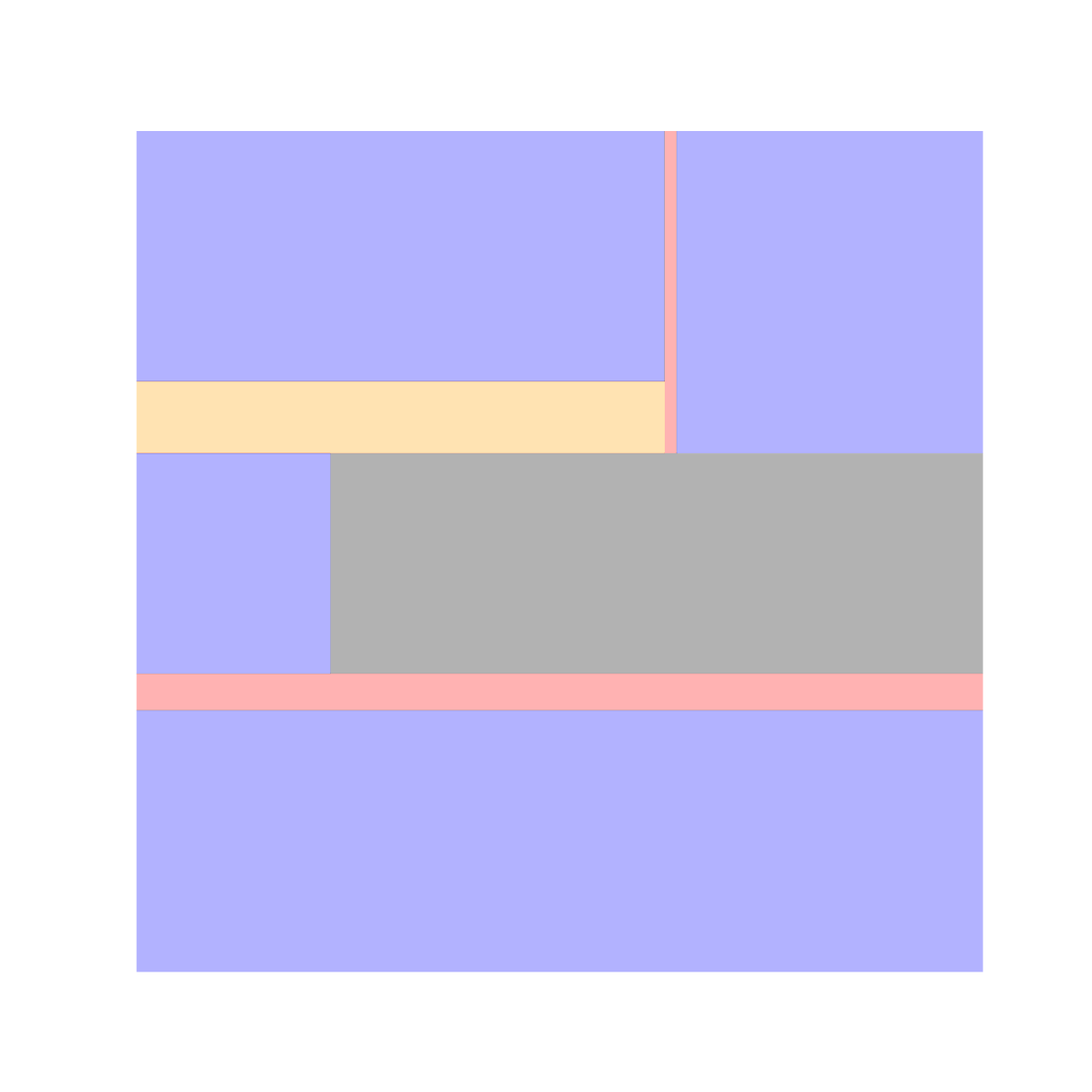}
		\caption{\centering ConTree\\nmc: 69 $d$: 3, $K$: 7}
		\label{fig:contree1}
	\end{subfigure}
	\hfill
	\begin{subfigure}[b]{0.3\textwidth}
		\centering
		\includegraphics[viewport=50bp 60bp 480bp 475bp,clip,width=\textwidth]{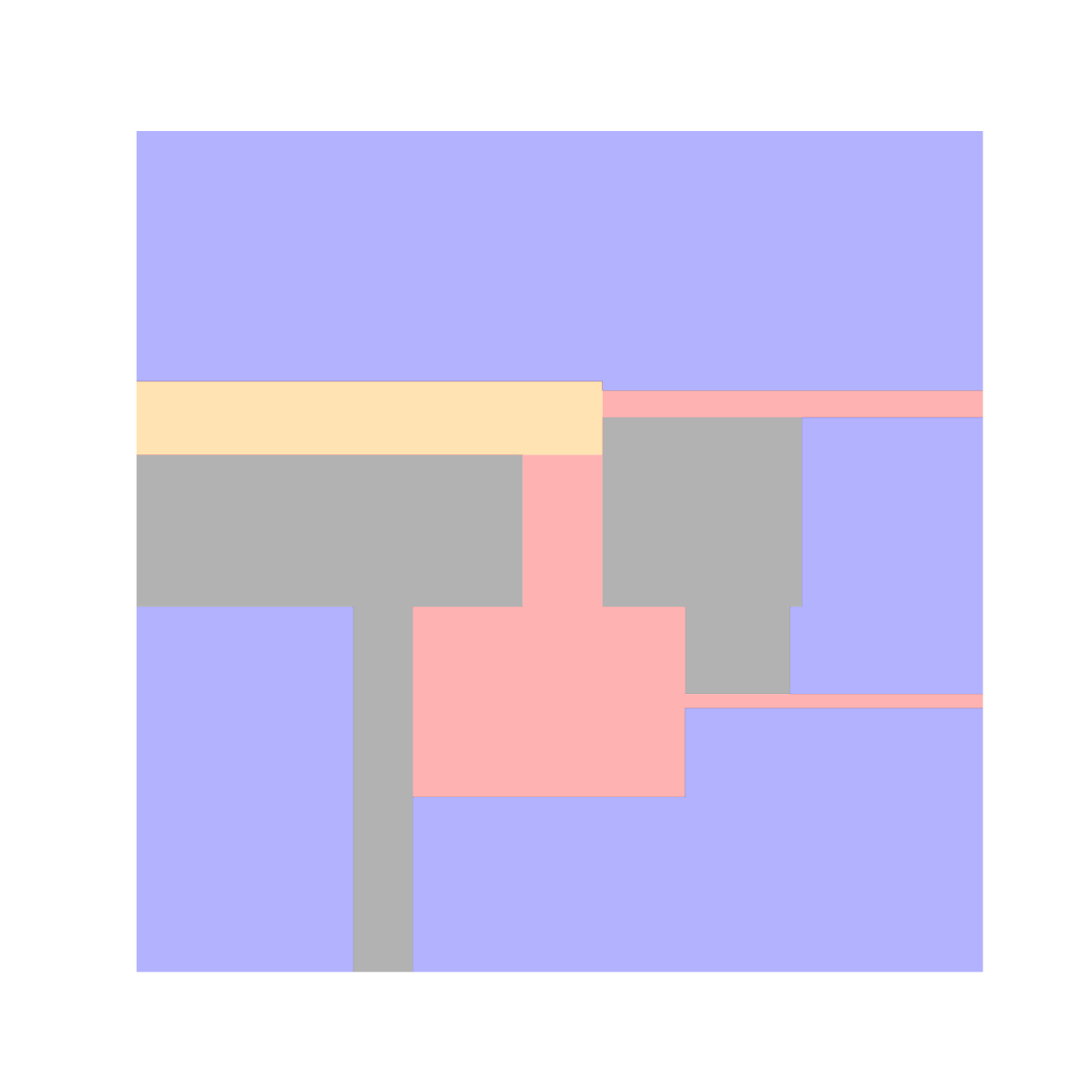}
		\caption{\centering ConTree\\nmc: 34 $d$: 4, $K$: 15}
		\label{fig:contree2}
	\end{subfigure}
	\hfill
	\begin{subfigure}[b]{0.3\textwidth}
		\centering
		\includegraphics[viewport=50bp 60bp 480bp 475bp,clip,width=\textwidth]{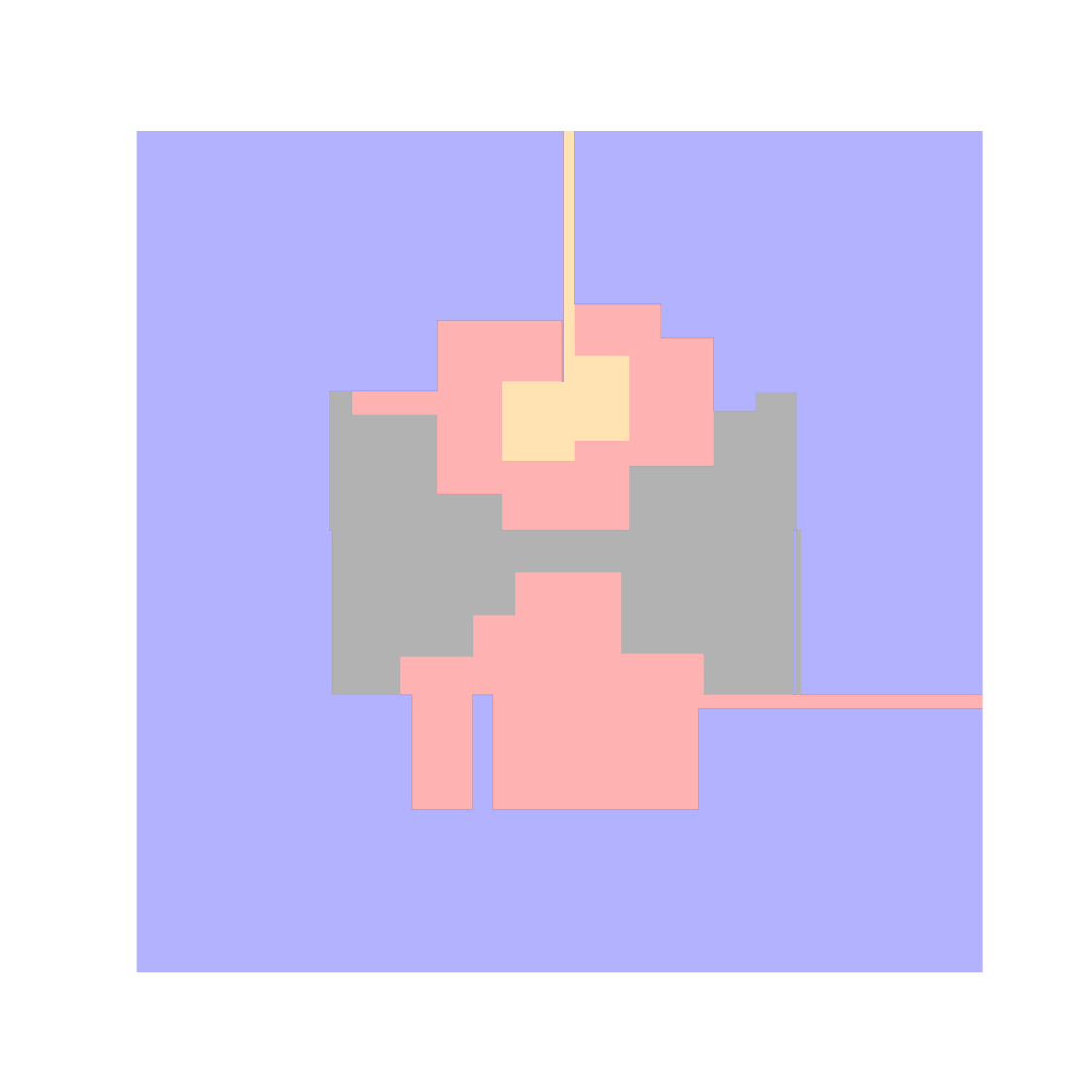}
		\caption{\centering ConTree\\nmc: 0 $d$: 6, $K$: 48}
		\label{fig:contree3}
	\end{subfigure}
	
	\caption{Same ground truth synthetic dataset as \ref{fig:comparision}. 
		The above three models are axis-parallel decision trees learned by CART and the below three are axis-parallel decision trees learned by ConTree algorithm.}
	\label{fig:comparision-full}
\end{figure}

\section{Proper decision tree \label{sec: PDT}}

\subsection{Axiomatic definition }

1) Each branch (internal) node of the decision tree subdivides the
ambient space into two \emph{disjoint} and connected subspaces 2)
and the leave of the decision tree is the intersection of subspaces
for all the splitting rules 3) the relationship between spliting rules
is transitive, namely if $r_{j}$ lies in the subspace of $r_{i}$
and $r_{i}$ lies in the subspace of $r_{k}$, then $r_{j}$ lies
in the subsapce of $r_{k}$; 4) Notably a distinguish feature of \citet{he2025FoODT_I}'s
axiom is that the left-children of a splitting rule \textbf{cannot}
be the\textbf{ }right-chidren. 

Formally, we have following definitions 
\begin{definition}
	\emph{Axioms for proper decision trees}. We call a decision tree consists
	of splitting rules that satisfies the following axioms, a \emph{proper}
	decision tree:\label{axioms: proper decision tree}
\end{definition}
\begin{enumerate}
	\item \emph{Structural constraint one} (Ambient space partition): Each branch
	node is defined by a single splitting rule $r:\mathcal{R}$, and each
	splitting rule subdivides the ambient space into two \emph{disjoint}
	and connected subspaces, $r^{+}$ and $r^{-}$.
	\item \emph{Structural constraint two} (Path intersection in leafs): Each
	leaf $L$ is defined by the intersection of subspaces $\bigcap_{p\in P_{L}}r_{p}^{\pm}$
	for all the splitting rules $\left\{ r_{p}\mid p\in P_{L}\right\} $
	in the path $P_{L}$ from the root to leaf $L$. The connected region
	(subspace) defined by $\bigcap_{p\in P_{L}}r_{p}^{\pm}$ is referred
	to as the \emph{decision} \emph{region}.
	\item \emph{Structural constraint three} (Partition transitivity): The ancestry
	relation between any pair of splitting rules $r_{i}\left(\swarrow\vee\searrow\right)r_{j}$
	is transitive; in other words, if $r_{i}\left(\swarrow\vee\searrow\right)r_{j}$
	and $r_{j}\left(\swarrow\vee\searrow\right)r_{k}$ then $r_{i}\left(\swarrow\vee\searrow\right)r_{k}$.
	Moreover, $\boldsymbol{K}_{ij}=\pm1$ if $r_{j}$ can be generated
	from $r_{i}^{\pm}$. As a result, any new decision rule $r$ added
	to a leaf must be generated within its corresponding decision region.
	\item \emph{Ancestral constraint} (Uniqueness of the ancestry relation):
	For any pair of splitting rules $r_{i}$ and $r_{j}$, only one of
	the following three cases is true: $r_{i}\swarrow r_{j}$, $r_{i}\searrow r_{j}$,
	and $r_{i}\overline{\left(\swarrow\vee\searrow\right)}r_{j}$; additionally,
	$r_{i}\overline{\left(\swarrow\vee\searrow\right)}r_{i}$ is always
	true; in other words, the possible value of $\boldsymbol{K}_{ij}\in\left\{ 1,0,-1\right\} $
	is unique determined for all $i,j$ , and $\boldsymbol{K}_{ii}=0$
	for all $i$.
\end{enumerate}
There are two important theorems that can be derived based on the
PDT axioms. 
\begin{theorem}
	Proper decision tree characterization theorems\emph{. A decision tree
		consisting of $K$ splitting rules corresponds to a unique $K$-permutation
		permutation if and only if it is proper.\label{thm: PDT-characterization}}
\end{theorem}
Furthermore, $K$-permutations are among the most extensively studied
combinatorial objects in the literature. By characterizing decision
trees in terms of these permutations, we obtain the following theorem,
which describes how to decompose the size-constrained ODT problem
as follows.
\begin{thm*}
	Simplified optimal decision tree problem\emph{. Given a list of rules
		$\mathit{rs}:\left[\mathcal{R}\right]$ and a size constraint $K:\mathbb{N}$.
		Let the search space }$\mathcal{S}\left(K,\mathit{rs}\right)$ \emph{of
		size-$K$ decision trees be defined by the program $\mathit{genDTKs}\left(K,\mathit{rs}\right)$.
		Then there exists a dynamic programming algorithm $\mathit{sodt}$
		such that the following equivalence holds:} \label{thm:  PDT decomposition}\emph{
		\begin{equation}
			\mathit{min}_{E}\left(\mathit{concatMapL}\left(\mathit{sodt},\mathit{kcombs}\left(K,\mathit{rs}\right)\right)\right)\subseteq\mathit{min}_{E}\left(\mathit{genDTKs}\left(K,\mathit{rs}\right)\right)\label{eq: SODT-introduction}
		\end{equation}
		where the symbol ``$\subseteq$'' indicates that the solution on the
		left-hand side is also a solution on the right-hand side. Here, $\mathit{kcombs}\left(K,\mathit{rs}\right)$
		generates all possible $K$-combinations of rules from $\mathit{rs}$.
		The function $\mathit{concatMapL}$ applies $\mathit{sodt}$ to each
		combination returned by $\mathit{kcombs}\left(K,\mathit{rs}\right)$
		and flattens the resulting list of lists into a single list. Finally,
		the operator $\mathit{min}_{E}$ is the programmatic definition of
		$\text{argmin}$, which selects the }first\emph{ optimal solution
		with respect to $E$ from a list of candidates. \label{thm: sodt-introduction}}
\end{thm*}

\section{Algorithms\label{appsec:algorithms}}

\subsection{A novel, parallelizable and generic algorithm for simplified optimal
	decision tree problem}

\subsubsection{Recursive definition\label{subsec: sodt_vec}}

Building on this idea, the following function exhuastively generate
all decision tree in the search space

\begin{equation}
	\begin{aligned}\mathit{genDTs}_{\text{vec}} & :\mathcal{D}\times\left[\mathcal{R}\right]\to\left[\mathit{DTree}\left(\mathcal{R},\mathcal{D}\right)\right]\\
		\mathit{genDTs}_{\text{vec}} & \left(\mathit{xs},\left[\;\right]\right)=\left[\mathit{DL}\left(\mathit{xs}\right)\right]\\
		\mathit{genDTs}_{\text{vec}} & \left(\mathit{xs},\mathit{rs}\right)=\mathit{concat}\circ\left[\mathit{updates}_{\mathbf{A}}\left(r,\mathit{rs}\backslash r,xs\right)|\left(r,\mathit{rs}\backslash r\right)\longleftarrow\mathit{candidates}\left(\mathit{rs}\right)\right]
	\end{aligned}
\end{equation}

where $\mathit{candidates}\left(\mathit{rs}\right)=\left[\left(r,\mathit{rs}\backslash r\right)\mid r\leftarrow\mathit{rs}\right]$
generates all possible rules in $\mathit{rs}$ and $\mathit{rs}\backslash r$
denotes eliminating $r$ from list $\mathit{rs}$, and $\mathit{updates}$
is defined as
\begin{equation}
	\begin{aligned}\mathit{updates}_{\mathbf{A}} & :\mathcal{R}\times\left[\mathcal{R}\right]\to\left[\mathit{DTree}\left(\mathcal{R},\mathcal{D}\right)\right]\\
		\mathit{updates}_{\mathbf{A}} & \left(r,\mathit{rs},\mathit{xs}\right)=\mathit{catMaybes}\left[\mathit{update}_{\boldsymbol{A}}\left(r,t\right)\mid t\leftarrow\mathit{genDTs}_{\text{vec}}\left(\mathit{rs},\mathit{xs}\right)\right]
	\end{aligned}
\end{equation}
which call $\mathit{genDTs}_{\text{vec}}$ recursively, and append
a new rule $r$ to every tree $t$ generated by $\mathit{genDTs}_{\text{vec}}\left(\mathit{rs}\backslash r,xs\right)$
by using $\mathit{update}$ function

\begin{equation}
	\begin{aligned}\mathit{update}_{\boldsymbol{A}} & :\mathcal{R}\to\mathit{DTree}\left(\mathcal{R},\mathcal{D}\right)\to\mathit{Maybe}\left(\mathit{DTree}\left(\mathcal{R},\mathcal{D}\right)\right)\\
		\mathit{update}_{\boldsymbol{A}} & \left(r,\mathit{DL}\left(\mathit{xs}\right)\right)=\mathit{Just}\left(DN\left(\mathit{DL}\left(\mathit{xs}^{+}\right),r,\mathit{DL}\left(\mathit{xs}^{-}\right)\right)\right)\\
		\mathit{update}_{\boldsymbol{A}} & \left(r,\mathit{DN}\left(u,s,v\right)\right)=\begin{cases}
			\begin{cases}
				\mathit{DN}\left(\mathit{update}_{\boldsymbol{A}}\left(r,u\right),s,v\right) & \mathit{update}\left(r,u\right)\neq\mathit{Nothing}\\
				\mathit{Nothing} & \text{otherwise}
			\end{cases} & \boldsymbol{A}_{sr}=1\\
			\begin{cases}
				\mathit{DN}\left(u,s,\mathit{update}_{\boldsymbol{A}}\left(r,v\right)\right) & \mathit{update}\left(r,v\right)\neq\mathit{Nothing}\\
				\mathit{Nothing} & \text{otherwise}
			\end{cases} & \boldsymbol{A}_{sr}=-1\\
			\mathit{Nothing} & \text{otherwise}
		\end{cases}.
	\end{aligned}
\end{equation}
where $\mathit{xs}^{+},\mathit{xs}^{-}$ are disjoint data sets partitioned
by $r$ with respect to $\mathit{xs}$, and $\mathit{Maybe}\left(\mathcal{A}\right)=\mathit{Nothing}\mid\mathit{Just}\left(\mathcal{A}\right)$
represents a value that may or may not be present and $\texttt{\ensuremath{\mathit{catMaybes}}}:\left[\mathit{Maybe}\left(\mathcal{A}\right)\right]\to\left[\mathcal{A}\right]$
filters out $\mathit{Nothing}$ from a list of $\mathit{Maybe}$ values
and extracts the contents of$\mathit{Just}$ values into a plain list.
Although the definition of e $\mathit{update}_{\boldsymbol{A}}$ update
appears complex, it is applied recursively along the path of the old
tree $\mathit{DN}\left(u,s,v\right)$ to determine whether adding
a new splitting rule $r$ would result in a non-proper tree. If the
tree remains proper, $r$ is added to the leaf; otherwise, $\mathit{Nothing}$
is returned, indicating a non-proper decision tree. In this way, the
algorithm filtered out all non-proper decision tree during the generation
process. 

Finally, $\mathit{sodt}_{\text{vec}}$ is defined as following
\[
\mathit{sodt}_{\text{vec}}=\mathit{min}_{E}\circ\mathit{genDTs}_{\text{vec}}.
\]

\subsubsection{Imperative definition definition}

The imperative pseudocode for $\mathit{sodt}_{\text{vec}}$ is presented
in Algorithm \ref{alg: sodt_vec_imperative}.

\begin{algorithm}[H]
	\textbf{Input}: $\mathit{rs}:\left[\mathcal{R}\right]$: A size $K$
	rule list $\mathit{rs}=\left[r_{1},r_{2},\ldots,r_{K}\right]$; $\mathit{xs}:\left[\mathbb{R}^{D}\right]$:
	input data list $\mathit{xs}=\left[x_{1},x_{2},\ldots,x_{N}\right]$,
	where $x_{n}\in\mathbb{R}^{D}$; $\boldsymbol{A}:K\times K$: The
	ancestry relation matrix, encode the ancestry relation between rules
	in $\mathit{rs}$.
	
	\textbf{Output}: $t:\mathit{DTree}\left(\mathcal{R},\mathbb{R}^{D}\right)$—The
	optimal decision tree of size $K$ ($K$ internal node) with respect
	to $\mathit{rs}$ and $\mathit{xs}$.
	\begin{enumerate}
		\item $ts=\left[\:\right]$;
		\item $l=0$;
		\item While $l\leq K$ do:
		\item $\quad$$l=l+1$;
		\item $\quad$$\mathit{ts}_{new}=\left[\:\right]$;
		\item $\quad$$\mathit{ts}^{\prime}=\left[\:\right]$;
		\item $\quad$for $t\in\mathit{ts}$ do:
		\item $\quad$$\quad$if $\mathit{ts}=\emptyset$ do:
		\item $\quad$$\quad$$\quad$$\mathinner{ts}=\mathit{ts}+\!\!+\left[DN\left(\mathit{DL}\left(xs^{+}\right),r_{i},\mathit{DL}\left(xs^{-}\right)\right)\mid r_{i}\leftarrow\mathit{rs}\right]$;
		// if candidate tree list is empty, create $K$ tree with each rule
		as the root of the tree
		\item $\quad$$\quad$else do:
		\item $\quad$$\quad$$\quad$for $r_{j}\in\mathit{rs}$ and $r_{j}\notin t$
		do: // rules that have not yet used in $t$ 
		\item $\quad$$\quad$$\quad$ $\quad$ $t^{\prime}=\mathit{update}_{\boldsymbol{A}}\left(r_{j},t\right)$;
		// update the tree with new rule $r_{j}$
		\item $\quad$$\quad$$\quad$$\quad$if $t^{\prime}\neq\mathit{Nothing}$: 
		\item $\quad$$\quad$$\quad$$\quad$$\quad$$\mathit{ts}_{new}=\mathit{ts}_{new}+\!\!+t^{\prime}$;
		//add the new tree as in the new candidate tree list
		\item $\quad$$\quad$$\quad$$\quad$else do:
		\item $\quad$$\quad$$\quad$$\quad$$\quad$Continue; // the updated tree
		is invalid with respect to ancestry relation $\boldsymbol{A}$
		\item $\quad$$\quad$$\quad$$\mathit{ts}^{\prime}=\mathit{ts}^{\prime}+\!\!+\mathit{ts}_{new}$;
		\item $\quad$$\quad$$\mathit{ts}=\mathit{ts}^{\prime}$;
		\item $t_{\text{odt}}=\mathit{min}_{E}\left(\mathit{ts}\right)$;
		\item \textbf{return} $\mathit{cnfg}_{\text{opt}}$, $l_{\text{opt}}$
	\end{enumerate}
	\caption{$\mathit{sodt}_{\text{vec}}$: The imperative definition of the vectorized
		simplified decision tree problem \label{alg: sodt_vec_imperative}}
\end{algorithm}

\subsubsection{Complexity analysis}

The complexity of $\mathit{sodt}_{\text{vec}}$ is given by following
theorem.
\begin{theorem}
	Given a list of K rule $\mathit{rs}$, and a size $N$ data list $\mathit{xs}$,
	$\mathit{sodt}_{\text{vec}}\left(\mathit{xs},\mathit{rs}\right)$
	has a worse-case complexity of $O\left(K!\times N^{K}\right)$\emph{\label{thm: complexity of sodt_vec}}.
\end{theorem}
\begin{proof}
	The complexity of the $\mathit{sodt}_{\text{vec}}$ is governed $\mathit{genDTs}_{\text{vec}}$,
	since $\mathit{min}_{E}$ has linear in the size of the output configuration
	generated by $\mathit{genDTs}_{\text{vec}}$. The complexity of $\mathit{genDTs}_{\text{vec}}$
	is determined by $\mathit{updates}_{\mathbf{A}}$ which is applies
	to each pair generated by $\mathit{candidates}\left(\mathit{rs}\right)$,
	since $\mathit{rs}$ has size $K$, and shrink by one in each recursive
	step. Then $\mathit{updates}_{\mathbf{A}}$ recursive call $\mathit{genDTs}_{\text{vec}}$
	recursively, which updates each sub-tree generated by $\mathit{genDTs}_{\text{vec}}\left(\mathit{rs}\backslash r\right)$,
	each call of $\mathit{update}$ has a complexity of $O\left(N\right)$.
	Thus the complexity of $\mathit{genDTs}_{\text{vec}}$ is calculated
	by following program
	
	\[
	\begin{aligned}T & \left(1\right)=O\left(N\right)\\
		T & \left(K\right)=K\times\left(T\left(K-1\right)\times O\left(N\right)\right),
	\end{aligned}
	\]
	By induction, it is easy to verify that this recursion has a solution
	of $O\left(K!\times N^{K}\right)$, since $N$ is considered as constant
	here, the overall worse complexity is $O\left(K!\right)$ which is
	same as $\mathit{sodt}_{\text{rec}}$ in \citet{he2025ROF}'s work,
	but with higher constant coefficient.
\end{proof}
\citet{he2025odt}'s $\mathit{sodt}_{\text{rec}}$ has a complexity
of $O\left(K!\times N\right)$, while more efficient, both $\mathit{sodt}_{\text{rec}}$
and $\mathit{sodt}_{\text{vec}}$ share the same asymptotic complexity
of $O\left(K!\right)$ when $N$ is fixed.

\subsection{An novel CH-free nested-combination generator}

The nested-combination generator is previously explored by \citet{he2025CGs}
in functional programming setting, we present the imperative version
directly here.

\begin{algorithm}[H]

	\caption{Nested Combination Generator ($\mathit{nestedCombs}(K, G, \mathit{xs})$) \label{alg:nested-combination-generator-sequential}}
	\begin{algorithmic}[1]
		\State \textbf{Input}: $\mathit{xs}$: data list, length $N$; $K$: outer combination size; $G$: inner-combination size
		\State \textbf{Output}: Array of $(K, G)$-nested-combinations
		\State $\mathit{css} \gets [[\,],[]^k]$ \Comment{Initialize combinations}
		\State $\mathit{ncss} \gets [[\,],[]^k]$ \Comment{Initialize nested-combinations}
		\State $\mathit{asgn}^{+}, \mathit{asgn}^{-} \gets \emptyset\left(\binom{N}{D}, N\right)$ \Comment{Empty $\binom{N}{D} \times N$ matrices}
		\For{$n \in \mathit{range}(0, N)$}
		\For{$j \in \mathit{reverse}(\mathit{range}(G, n+1))$}
		\State $\mathit{updates} \gets \mathit{reverse}(\mathit{map}(\cup \rho_M(\mathit{xs})[n], \mathit{css}[j-1]))$
		\State $\mathit{css}[j] \gets \mathit{css}[j] \cup \mathit{updates}$ \Comment{Update combinations}
		\State $\mathit{asgn}^{+}, \mathit{asgn}^{-} \gets \mathit{genModels}(\mathit{css}[G], \mathit{asgn}^{+}, \mathit{asgn}^{-})$ \Comment{Generate predictions}
		\EndFor
		\State $C_1 \gets \binom{n}{G}$, $C_2 \gets \binom{n+1}{G}$
		\For{$i \in \mathit{range}(C_1, C_2)$}
		\For{$k \in \mathit{reverse}(\mathit{range}(K, i+1))$}
		\State $\mathit{ncss}[k] \gets \mathit{map}(\cup [i], \mathit{ncss}[k-1]) \cup \mathit{ncss}[k]$ \Comment{Update nested combinations}
		\EndFor
		\EndFor
		\EndFor
		\State \textbf{return} $\mathit{ncss}[K]$
	\end{algorithmic}
\end{algorithm}

\subsection{Incremental ancestry relation matrix generator\label{subsec:Incremental-ancestry-relation}}

As discuss previously the $\mathit{\mathit{nestedCombsFA}}$ is defined
as below

\begin{equation}
	\mathit{\mathit{nestedCombsFA}}\left(K,G\right)=\mathit{filter}_{p}\circ\mathit{map}_{\mathit{calARM}}\circ\mathit{nestedCombs}\left(K,G\right)\label{eq:}
\end{equation}
Before we process, the following lemma shows that the computation
of $p$ can be processed more efficiently when we have $\boldsymbol{A}$.
\begin{lemma}
	\emph{Given a list of $K$ hyperplanes $\mathit{hs}_{K}=\left[h_{1},h_{2},\ldots,h_{K}\right]$
		with ancestry matrix $\boldsymbol{A}$. If $\mathit{hs}_{K}$ contains
		a pair of CHs $h_{i}$ and $h_{j}$, then }$\boldsymbol{A}_{ij}=0$
	\emph{and} $\boldsymbol{A}_{ji}=0$\emph{. If $\mathit{hs}_{K}$ contains
		no CHs, it can form at least one proper decision tree.\label{lem: crossed-hyperplane and ancestry relation matrix}}
\end{lemma}
\begin{proof}
	By definition, if a pair $h_{i}$ and $h_{j}$ in $\mathit{hs}_{K}$
	are crossed, then neither $h_{i}\left(\swarrow\vee\searrow\right)h_{j}$
	nor $h_{j}\left(\swarrow\vee\searrow\right)h_{i}$ is viable, and
	therefore $\boldsymbol{A}_{ij}=0$ and $\boldsymbol{A}_{ji}=0$. Conversely,
	if $\mathit{hs}_{K}$ contains no CHs, then for any pair $h_{i},h_{j}\in\mathit{hs}_{K}$,
	either a mutual ancestry or an asymmetrical ancestry exists. Hence,
	a proper decision tree can be constructed.
\end{proof}
This lemma implies that if we fuse $\mathit{map}_{\mathit{calARM}}$
with $\mathit{nestedCombs}$ first, then all rule combinations generated
by $\mathit{nestedCombs}$ with associate with an ancestry relation
matrix, and thus can be used to compute $\mathit{filter}_{p_{\text{crs}}}$
more efficiently.

Therefore, we demonstrate the fusion of $\mathit{\mathit{nestedCombsFA}}$
in two steps, we first shows that how to compute ancestry relation
matrix for each rule combinations in $\mathit{nestedCombs}$ and then
demonstrate shows how to fuse $\mathit{filter}_{p_{\text{crs}}}$.

\subsubsection*{Incremental update of the ancestry relation matrix}

This part we examine how to fuse $\mathit{map}_{\mathit{calARM}}$
with $\mathit{nestedCombs}$, since $\mathit{nestedCombs}$ is an
incremental update process, it add one rule to each partial rule list,
we need to figure out how to update rule incrementally along the generation
process of $\mathit{nestedCombs}$. 

Since every rule combination is a nested combination, we create type
$\mathit{NC}$ (short for ``nested combinations'') as a type synonyms
for $\left[\mathcal{R}\right]$, i.e., $\mathit{rs}:\mathit{NC}$
or $\mathit{rs}:\left[\mathcal{R}\right]$. A size $K$ rule combination
using type $\mathit{rs}:\mathit{NC}$. Since the ancestry relation
matrix $\boldsymbol{A}$ for a $K$-length rule combination $\mathit{rs}_{K}$
is a $K\times K$ square matrix. Both $\boldsymbol{A}$ and $\mathit{rs}_{K}$
are integer-valued and can be stored compactly by stacking $\mathit{rs}_{K}$
above $\boldsymbol{A}$. That is, a \emph{configuration} $\mathit{ncr}:\mathit{NCR}$
is defined as

\begin{equation}
	\mathit{ncr}=\left[\begin{array}{c}
		\mathit{rs}_{K}\\
		\boldsymbol{A}
	\end{array}\right]
\end{equation}
Thus, a collection of $M$ configurations can be stored as a tensor
of size $M\times\left(K+1\right)\times K$.

Thus $\mathit{nestedCombs}\left(K,G\right):\left[\mathbb{R}^{D}\right]\to\left[\mathit{NC}\right]$,
and $\mathit{map}_{\mathit{calARM}}:\left[\mathit{NC}\right]\to\left[\mathit{NCR}\right]$,
the $\mathit{map}$ is a high-order function which applies its parameter
function $\mathit{calARM}:\mathit{NC}\to\mathit{NCR}$ to each $\mathit{rs}:\mathit{NC}$
generated by $\mathit{nestedCombs}\left(K,G\right)$. A critical observation
is that $\mathit{calARM}$ can be updated incrementally. 

Specifically, consider a partial configuration $\mathit{ncr}_{K-1}$
consisting of a list of rules $\mathit{rs}_{K-1}$ and a $\left(K-1\right)\times\left(K-1\right)$
ancestry relation matrix $\boldsymbol{K}^{\prime}$. We can extend
$\mathit{ncr}_{K-1}$ to a complete configuration $\mathit{ncr}_{K}$
as follows: first, appending a new rule $r$ to $\mathit{rs}_{K-1}$;
second, compute the ancestry relation of $r$ with each rule in $\mathit{rs}_{K-1}$,
and vice versa. This yields the following incremental (sequentially
recursive) program $\mathit{calARM}$ for constructing $\mathit{ncr}_{K}$
\begin{align*}
	\mathit{calARM} & \left(\left[r\right]\right)=\left[\begin{array}{c}
		r\\
		0
	\end{array}\right]\\
	\mathit{calARM} & \left(r:\mathit{rs}\right)=\mathit{update}_{\text{arMat}}\left(r,\mathit{calARM}\left(\mathit{rs}\right)\right)
\end{align*}
where $\left[\begin{array}{c}
	r\\
	0
\end{array}\right]$ represents a $2\times1$ matrix. Let $\mathit{calARM}\left(\mathit{rs}\right)=\left[\begin{array}{c}
	\mathit{rs}_{k}\\
	\boldsymbol{K}^{\prime}
\end{array}\right]$, the update function $\mathit{update}_{\text{arMat}}:\mathit{NCR}\to\mathit{NCR}$
is defined as 
\begin{equation}
	\begin{aligned}\mathit{update}_{\text{arMat}} & \left(r_{j},\left[\begin{array}{c}
			\mathit{rs}_{k}\\
			\boldsymbol{A}^{\prime}
		\end{array}\right]\right)= & \left[\begin{array}{ccccc}
			r_{0} & r_{1} & \ldots & r_{k-1} & r_{j}\\
			\boldsymbol{A}_{0,0}^{\prime} & \boldsymbol{A}_{01}^{\prime} & \ldots & \boldsymbol{A}_{0,k-1}^{\prime} & \mathit{AR}_{0,j}\\
			\boldsymbol{A}_{1,0}^{\prime} & \boldsymbol{A}_{1,1}^{\prime} & \ldots & \boldsymbol{A}_{1,k-1}^{\prime} & \mathit{AR}_{1,j}\\
			\vdots & \vdots & \ddots & \vdots & \vdots\\
			\boldsymbol{A}_{k-1,0}^{\prime} & \boldsymbol{A}_{k-1,1}^{\prime} & \ldots & \boldsymbol{A}_{k-1,k-1}^{\prime} & \mathit{AR}_{k-1,j}\\
			\mathit{AR}_{j,0} & \mathit{AR}_{j,1} & \ldots & \mathit{AR}_{j,k-1} & 0
		\end{array}\right]\end{aligned}
	,\label{eq: update function}
\end{equation}
which extends a $\left(k+1\right)\times k$ matrix to a $\left(k+2\right)\times\left(k+1\right)$
matrix. The updated matrix consists of:
\begin{itemize}
	\item $\boldsymbol{A}^{\prime}$: a $k\times k$ ancestry relation matrix
	from the previous step,
	\item $\mathit{rs}_{k}$: a $k\times1$ vector of the indices of the $k$
	splitting rules,
	\item $\mathit{AR}_{i,j}$: the ancestry relation between rule where $r_{i}$
	and $r_{j}$, where 
	\[
	\mathit{AR}_{i,j}=\begin{cases}
		1 & \text{if \ensuremath{r_{i}} can be the left child of \ensuremath{r_{j}}},\\
		-1 & \text{if \ensuremath{r_{i}} can be the right child of \ensuremath{r_{j}}},\\
		0 & \text{if \ensuremath{r_{i}} can not be the child of \ensuremath{r_{j}}}.
	\end{cases}
	\]
\end{itemize}
The incremental nature of the $\mathit{calARM}$ is crucial, as it
allows the ancestry matrix to be updated on-the-fly for each rule
combination during the recursive generation process of $\mathit{nestedCombs}$.
This fusion itself does not add extra computaional advantgaes, but
enables the fusion of $\mathit{filter}_{p_{\text{crs}}}$ becomes
possible in the discussion of next part.

The single step update function $\mathit{update}_{\text{arMat}}$
cab be easily generate to apply to multiple configurations in a single
step, the vectorized version is given in (\ref{alg:Update-ancestry-relation}).
Incorporating this function inside (\ref{alg:nested-combination-generator-sequential})
in the loop after $\mathit{ncss}\left[k\right]$ (line 14) can generate
all possible rule combinations with associated ancestry relation matrix
associates with it, we will provide the full algorithm after explained
the technique introduced next 

\subsubsection*{Prefix-closed filtering for nested combinations}

After the process of computing ancestry relation matrix in the definition
of $\mathit{nestedCombs}\left(K,G\right)$, we then address the problem
of eliminating rule combinations that contains a pair of CHs during
the generation process of $\mathit{nestedCombs}\left(K,G\right)$.
According to Lemma (\ref{lem: crossed-hyperplane and ancestry relation matrix}),
one approach is to check whether every symmetric elements with respect
to the main diagonal of the ancestry relation matrix $\boldsymbol{A}$
contain a pair of zeros, i.e., $K_{i,j}=K_{j,i}=0$. 

However, this process has two limitations, first: this method can
only be applied after we have a complete ancestry relation matrix
$\boldsymbol{A}$ which waste computation to generate configurations
that are invalid (contains CHs). Second, suppose $p_{\text{crs}}$
has a complexity of $O\left(t\right)$, check every symmetric elements
along the diagonal of the ancestry relation matrix has a size of $O\left(K^{2}\right)$,
this incurs a total computation of $O\left(K^{2}\times t\right)$
.

To resolve this, we introduce a prefix-closed filtering process. It
allows the post-hoc filtering to be \emph{fused directly into the
	incremental generation process} of $\mathit{nestedCombs}\left(K,G\right)$,
enabling the identification of CHs before partial configurations are
extended to full ones. This provides an efficient solution for constructing
a CH-free nested combination generator. Specifically, when we have
a generator function, such as $\mathit{nestedCombs}\left(K,G\right)$,
defined sequentially consume one data item in each recursive process,
we have the following filter fusion theorem applies.
\begin{theorem}
	Filter fusion theorem. \emph{Let a sequential generator $\mathit{gen}$
		be defined recursively as:
		\begin{align*}
			\mathit{gen} & \left(\left[\:\right]\right)=\mathit{alg}_{1}\left(\left[\:\right]\right)\\
			\mathit{gen} & \left(x:\mathit{xs}\right)=\mathit{alg}_{2}\left(x,\mathit{gen}\left(\mathit{xs}\right)\right),
		\end{align*}
		and consider a post-hoc filtering process. Then $\mathit{filtgen}_{q}=\mathit{filter}_{p}\circ\mathit{gen}$
		can be }fused into a single program\emph{ defined as:}
	
	\emph{
		\begin{align*}
			\mathit{filtgen}_{q} & \left(\left[\:\right]\right)=\mathit{filter}_{q}\left(\mathit{alg}_{1}\left(\left[\:\right]\right)\right)\\
			\mathit{filtgen}_{q} & \left(x:\mathit{xs}\right)=\mathit{filter}_{q}\left(\mathit{alg}_{2}\left(x,\mathit{filtgen}_{q}\left(\mathit{xs}\right)\right)\right),
		\end{align*}
		provided that the fusion condition holds:
		\begin{equation}
			\mathit{filter}_{p}\left(\mathit{alg}_{2}\left(x,\mathit{gen}\left(\mathit{xs}\right)\right)\right)=\mathit{filter}_{q}\left(\mathit{alg}_{2}\left(x,\mathit{filter}_{p}\left(\mathit{gen}\left(\mathit{xs}\right)\right)\right)\right)\label{eq: filtering fusion condition}
		\end{equation}
		In particular, if $\mathit{alg}_{2}:\mathcal{A}\times\left[\left[\mathcal{A}\right]\right]\to\left[\left[\mathcal{A}\right]\right]$
		is defined as an extension operation that prepend $a:\mathcal{A}$
		to $\mathit{as}:\left[\mathcal{A}\right]$ for all $\mathit{as}$
		in $\mathit{ass}:\left[\left[\mathcal{A}\right]\right]$, then proving
		the fusion condition (\ref{eq: filtering fusion condition}) is equivalent
		to proving the }prefix-closed property\emph{:
		\begin{equation}
			p\left(a:\mathit{as}\right)=q\left(a:\mathit{as}\right)\wedge p\left(\mathit{as}\right)\label{eq: prefix-closed property}
		\end{equation}
	}
\end{theorem}
\begin{proof}
	We prove the fusion theorem by following reasoning
	\[
	\begin{aligned} & \mathit{filtgen}_{q}\\
		= & \mathit{filter}_{p}\circ\mathit{gen}\\
		= & \mathit{filter}_{p}\left(\mathit{alg}_{2}\left(x,\mathit{gen}\left(\mathit{xs}\right)\right)\right)\\
		= & \{\text{fusion condition (\ref{eq: filtering fusion condition})}\}\\
		& \mathit{filter}_{q}\left(\mathit{alg}_{2}\left(x,\mathit{filter}_{p}\left(\mathit{gen}\left(\mathit{xs}\right)\right)\right)\right)\\
		= & \{\text{definition of \ensuremath{\mathit{filtgen}_{q}}}\}\\
		& \mathit{filter}_{q}\left(\mathit{alg}_{2}\left(x,\mathit{filtgen}_{q}\left(\mathit{xs}\right)\right)\right)
	\end{aligned}
	\]
	The equivalence between filter fusion condition (\ref{eq: filtering fusion condition})
	and prefix-closed property (\ref{eq: prefix-closed property}) is
	straight forwards. A configuration $a:\mathit{as}$ survived in $\mathit{filter}_{p}$
	will also survived in $\mathit{filter}_{q}\left(\mathit{alg}_{2}\left(a,\mathit{filter}_{p}\left(\left[\mathit{as}\right]\right)\right)\right)$
	because $p\left(a:\mathit{as}\right)=q\left(a:\mathit{as}\right)\wedge p\left(\mathit{as}\right)$.
\end{proof}
The reason the prefix-closed property (\ref{eq: prefix-closed property})
introduces an additional predicate $q$ is that, if we already know
$p\left(\mathit{as}\right)$ holds, it is often more efficient to
evaluate $q\left(a:\mathit{as}\right)\wedge p\left(\mathit{as}\right)$
rather than directly computing $p\left(a:\mathit{as}\right)$. This
explains why the fused generator $\mathit{filtgen}_{q}$ is more efficient
than the post-hoc approach $\mathit{filter}_{p}\circ\mathit{gen}$.
For example, in the classical \emph{eight queens problem}, $p$ checks
that no queen attacks any other, while the auxiliary predicate $q$
only verifies that the newly added queen does not attack the others.

Similarly, for our problem, since $\mathit{nestedCombs}\left(K,D\right)$
is also defined sequentially, we can establish a prefix-closed property
for the ancestry relation matrix, analogous to the auxiliary check
in the eight queens problem. The correctness of the prefix-closed
condition for CHs property is witnessed by following fact.
\begin{fact}
	Prefix-closed property for CHs\emph{. Assume we have a feasible combination
		of hyperplanes $\mathit{hs}_{K}$ (i.e., $p\left(\mathit{hs}\right)=\text{True}$),
		when adding a new hyperplane $h$ to $\mathit{hs}_{K}$ the following
		prefix-closed property holds:}
	\[
	p\left(h:\mathit{hs}_{K}\right)=q\left(h:\mathit{hs}_{K}\right)\wedge p\left(\mathit{hs}_{K}\right),
	\]
	\emph{where 
		\[
		q\left(h:\mathit{hs}_{K}\right)=\lnot\exists h_{i}\in\mathit{hs}_{K}:p_{\text{crs}}\left(h,h_{i}\right)=\text{True},
		\]
		i.e., $q\left(h:\mathit{hs}_{K}\right)=\text{True}$ if $h$ does
		not cross any hyperplane.}\label{fact:Prefix-closed-property-for hypers}
\end{fact}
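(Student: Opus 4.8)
The plan is to unfold both sides of the claimed equation using the definitions of $p$, $q$, and $p_{\text{crs}}$, and argue that the two Boolean expressions are literally the same predicate. Recall that $p(\mathit{hs})$ is true exactly when $\mathit{hs}$ contains no pair of crossed hyperplanes, i.e.
\[
p(\mathit{hs}) = \lnot\exists h_a, h_b \in \mathit{hs},\; h_a \neq h_b : p_{\text{crs}}(h_a, h_b) = \text{True}.
\]
First I would rewrite $p(h:\mathit{hs}_K)$ by splitting the universe of unordered pairs drawn from $h:\mathit{hs}_K$ into two disjoint cases: (i) pairs that include the new hyperplane $h$ together with some $h_i \in \mathit{hs}_K$, and (ii) pairs that lie entirely within $\mathit{hs}_K$. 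This partition is exhaustive and disjoint because $h$ is prepended as a fresh element (any subtlety about $h$ coinciding with an existing hyperplane is handled by the paper's standing assumption that duplicate splitting rules are removed without loss of generality, so all $K+1$ hyperplanes are distinct). Negating the existential over the union turns it into a conjunction of two negated existentials, which is exactly $\bigl(\lnot\exists h_i \in \mathit{hs}_K : p_{\text{crs}}(h,h_i)\bigr) \wedge \bigl(\lnot\exists h_a,h_b \in \mathit{hs}_K : p_{\text{crs}}(h_a,h_b)\bigr)$, i.e. $q(h:\mathit{hs}_K) \wedge p(\mathit{hs}_K)$.

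Second, I would note one point that needs care: $p_{\text{crs}}$ must be symmetric in its two arguments for the pair-decomposition above to be well-defined on \emph{unordered} pairs, and indeed it is — the defining condition (\ref{eq: crossed hyperplane predicate}) is a conjunction of two clauses, one asserting two defining points of $h_i$ straddle $h_j$ and the other asserting two defining points of $h_j$ straddle $h_i$, and swapping $h_i \leftrightarrow h_j$ merely swaps the two conjuncts. So I would state this symmetry explicitly as a one-line observation before the main decomposition, since the whole argument rests on it.

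So the proof is essentially a set-theoretic bookkeeping exercise: decompose the pair-set, distribute the negation over the union/disjunction, and read off the two factors. The only genuine obstacle is not mathematical depth but making the case split airtight — specifically confirming that no crossed pair of $h:\mathit{hs}_K$ is missed by the two cases (it is not, since every pair of distinct elements either contains $h$ or does not) and that the standing no-duplicates convention legitimately lets us treat $h$ as distinct from every element of $\mathit{hs}_K$. Once those are pinned down, the equality $p(h:\mathit{hs}_K) = q(h:\mathit{hs}_K) \wedge p(\mathit{hs}_K)$ follows immediately, and I would close by remarking (as the surrounding text already motivates) that this is precisely the prefix-closed form required to instantiate the filter fusion theorem, with $q$ being the cheaper incremental check that only tests the newly added hyperplane against the existing ones.
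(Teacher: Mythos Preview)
Your argument is correct. The paper itself does not supply a proof for this Fact at all: it is stated as self-evident and immediately followed by a discussion of the complexity savings it affords (reducing the $O((K+1)^2 \times t)$ check to $O(K \times t)$). Your decomposition of the pair-set into ``pairs containing $h$'' versus ``pairs entirely within $\mathit{hs}_K$'', together with the explicit observation that $p_{\text{crs}}$ is symmetric, is exactly the argument the paper leaves implicit, so there is nothing to contrast.
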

The naive predicate $p\left(h:\mathit{hs}_{K}\right)$ requires checking
all pairs of hyperplanes in $h:\mathit{hs}_{K}$ for crossings, which
has complexity $O\left(\left(K+1\right)^{2}\times t\right)$, where
$t$ is the cost of evaluating $p_{\text{crs}}$. Fact \ref{fact:Prefix-closed-property-for hypers}
shows that, since $\mathit{hs}_{K}$ is already feasible ($p\left(\mathit{hs}_{K}\right)=\text{True}$),
it suffices to check only whether the newly added hyperplane $h$
crosses any existing $h_{i}\in\mathit{hs}_{K}$. This reduces the
complexity to $O\left(K\times t\right)$.

\begin{algorithm}

	\caption{Update Ancestry Relation ($\mathit{updates}_{arMat}$) \label{alg:Update-ancestry-relation}}
	\begin{algorithmic}[1]
		\State \textbf{Input}: $\mathit{r}_j$: index of new splitting rule; $\mathit{ncrs}$: $M \times (k+1) \times k$ tensor of $\mathit{ncr}$ configurations; $\mathit{css}$: matrix of $G$-combinations; $\mathit{asgn}^{\pm}$: positive/negative predictions
		\State \textbf{Output}: Updated nested combinations $\mathit{ncrs}'$: $\mathit{NCRs}$ without crossed hyperplanes, as $(k+2) \times (k+1)$ matrices
		\State $\mathit{ncrs}' \gets [\,]$ \Comment{Initialize empty list}
		\For{$\mathit{ncr} \in \mathit{ncrs}$}
		\For{$\mathit{r}_i \in \mathit{ncr}[0]$}
		\State $p_1 \gets \top$ if $\mathit{css}[G][\mathit{r}_j] \in \mathit{asgn}^{+}[\mathit{r}_i] \vee \mathit{css}[G][\mathit{r}_j] \in \mathit{asgn}^{-}[\mathit{r}_i]$ else $\bot$ \Comment{Check if $G$-combination lies on one side of $\mathit{r}_i$}
		\State $p_2 \gets \top$ if $\mathit{unrank}(\mathit{r}_i) \in \mathit{asgn}^{+}[\mathit{r}_j] \vee \mathit{unrank}(\mathit{r}_i) \in \mathit{asgn}^{-}[\mathit{r}_j]$ else $\bot$ \Comment{Check if rule $\mathit{r}_i$ lies on one side of $\mathit{r}_j$}
		\If{$p_1 \vee p_2$}
		\State $\mathit{ncr}' \gets \mathit{update}_{arMat}(\mathit{ncr})$ \Comment{Update ancestry relation matrix}
		\State $\mathit{ncrs}' \gets \mathit{ncrs}' + [\mathit{ncr}']$ \Comment{Append updated configuration}
		\Else
		\State \textbf{break} \Comment{Skip infeasible $\mathit{ncr}$}
		\EndIf
		\EndFor
		\EndFor
		\State \textbf{return} $\mathit{ncrs}'$
	\end{algorithmic}
\end{algorithm}

We can now incorporate $\mathit{update}_{\text{arMat}}$ along with
the additional filtering process for eliminating CHs directly into
the definition of $\mathit{nestedCombs}\left(K,G\right)$. For simplicity,
we define a batched version of $\mathit{update}_{\text{arMat}}$,
denoted as $\mathit{updates}_{\text{arMat}}$, that operates on a
list of configurations $\mathit{ncrs}$, as implemented in Algorithm
\ref{alg:Update-ancestry-relation}.

With these components, we can construct the incremental, CH-free ancestry
relation matrix generator $\mathit{\mathit{nestedCombsFA}}$. This
is achieved by introducing a single additional line of code after
line 12 of Algorithm using Algorithm \ref{alg:Update-ancestry-relation}
to update all feasible nested combinations while avoiding non-feasible
combinations containing CHs. The resulting modification is minimal—just
one line—and the complete pseudo-code is provided in Appendix \ref{subsec:Incremental-ancestry-relation},
Algorithm \ref{alg:nestedCombsFA}.

\begin{algorithm}[H]

	\caption{Nested Combination Generator with Ancestry ($\mathit{nestedCombsFA}(K, G, \mathit{xs})$) \label{alg:nestedCombsFA}}
	\begin{algorithmic}[1]
		\State \textbf{Input}: $\mathit{xs}$: data list, length $N$; $K$: outer combination size; $G$: inner-combination size
		\State \textbf{Output}: Array of $(K, G)$-nested-combinations
		\State $\mathit{css} \gets [[\,],[]^k]$ \Comment{Initialize combinations}
		\State $\mathit{ncss} \gets [[\,],[]^k]$ \Comment{Initialize nested-combinations}
		\State $\mathit{asgn}^{+}, \mathit{asgn}^{-} \gets \emptyset\left(\binom{N}{D}, N\right)$ \Comment{Empty $\binom{N}{D} \times N$ matrices}
		\For{$n \in \mathit{range}(0, N)$}
		\For{$j \in \mathit{reverse}(\mathit{range}(G, n+1))$}
		\State $\mathit{updates} \gets \mathit{reverse}(\mathit{map}(\cup \rho_M(\mathit{xs})[n], \mathit{css}[j-1]))$
		\State $\mathit{css}[j] \gets \mathit{css}[j] \cup \mathit{updates}$ \Comment{Update combinations}
		\State $\mathit{asgn}^{+}, \mathit{asgn}^{-} \gets \mathit{genModels}(\mathit{css}[G], \mathit{asgn}^{+}, \mathit{asgn}^{-})$ \Comment{Generate predictions}
		\EndFor
		\State $C_1 \gets \binom{n}{G}$, $C_2 \gets \binom{n+1}{G}$
		\For{$i \in \mathit{range}(C_1, C_2)$}
		\For{$k \in \mathit{reverse}(\mathit{range}(K, i+1))$}
		\State $\mathit{ncss}[k] \gets \mathit{map}(\cup [i], \mathit{ncss}[k-1]) \cup \mathit{ncss}[k]$ \Comment{Update nested combinations}
		\State $\mathit{ncss}[k] \gets \mathit{updates}_{arMat}(i, \mathit{ncss}[k], \mathit{css}[G], \mathit{asgn}^{+}, \mathit{asgn}^{-})$ \Comment{Update ancestry matrix, Alg. \ref{alg:Update-ancestry-relation}}
		\EndFor
		\EndFor
		\EndFor
		\State \textbf{return} $\mathit{ncss}[K]$
	\end{algorithmic}
\end{algorithm}

\subsection{Complexity of the $\mathit{odt}^{\prime}$ algorithm}

Putting together, since $\mathit{odt}^{\prime}\left(xs\right)=\mathit{odt}_{\text{vec}}\circ\mathit{min}_{E}\circ\mathit{concatMapL}\left(\mathit{sodt}_{\text{vec}}\left(\mathit{xs}\right)\right)\circ\mathit{\mathit{nestedCombsFA}}\left(\mathit{xs}\right)$.
The complexity of the $\mathit{odt}^{\prime}$ is given by following
theorem.
\begin{theorem}
	The $\mathit{odt}^{\prime}\left(xs\right)$ algorithm has a complexity
	of $O\left(K!\times N^{DG+G}\right)$.\label{thm: odt^prime complexity}
\end{theorem}
\begin{proof}
	Since $\mathit{odt}^{\prime}\left(xs\right)=\mathit{min}_{E}\circ\mathit{concatMapL}\left(\mathit{sodt}_{\text{vec}}\left(\mathit{xs}\right)\right)\circ\mathit{\mathit{nestedCombsFA}}\left(\mathit{xs}\right)$,
	the complexity of $\mathit{odt}^{\prime}$ is equal to the \emph{sum}
	of complexity of each composed component, thus dominate by the one
	with highest computational cost. $\mathit{min}_{E}$ is linear in
	the size of the output produced by $\mathit{concatMapL}\left(\mathit{sodt}_{\text{vec}}\left(\mathit{xs}\right)\right)$,
	thus the complexity $\mathit{odt}^{\prime}$ is dominated by either
	$\mathit{concatMapL}\left(\mathit{sodt}_{\text{vec}}\left(\mathit{xs}\right)\right)$
	or $\mathit{\mathit{nestedCombsFA}}\left(\mathit{xs}\right)$.
	
	As \citet{he2025CGs} analysed, $\mathit{\mathit{nestedCombs}}\left(\mathit{xs}\right)$
	produce $O\left(N^{KG}\right)$ configuration in $O\left(N^{KG+1}\right)$
	time where $K$ is the number of rules in each configuration, $G$
	is the embedded dimension of hypersurface and $N$is the size of the
	input $xs$. Compared with $\mathit{\mathit{nestedCombs}}\left(\mathit{xs}\right)$,
	the $\mathit{\mathit{nestedCombsFA}}$ has just extra computation
	of checking the feasibility of the configuration using $p$, which
	has $O\left(K\right)$ cost for each configuration, thus the over
	cost is $O\left(K\times N^{KG+1}\right)=O\left(N^{KG+1}\right)$.
	
	The $\mathit{concatMapL}\left(\mathit{sodt}_{\text{vec}}\left(\mathit{xs}\right)\right)$
	applies $\mathit{sodt}_{\text{vec}}$ to each configuration generated
	by $\mathit{\mathit{nestedCombsFA}}$, and there are $O\left(N^{KG}\right)$
	of them, the complexity of $\mathit{sodt}_{\text{vec}}$ is $O\left(K!\times N^{K}\right)$,
	as discussed in Theorem \ref{thm: complexity of sodt_vec}, thus the
	overall complexity is $O\left(K!\times N^{K}\times N^{KG}\right)=O\left(K!\times N^{DG+G}\right)$.
	Hence the overall complexity of the $\mathit{odt}^{\prime}$ program
	is dominated by $O\left(K!\times N^{DG+G}\right)$.
\end{proof}

\section{Heuristic methods\label{sec:Heuerestic-methods}}

The use of heuristics is pervasive in the study of optimal algorithms.
Representative examples include imposing a time limit with random
initialization \citep{dunn2018optimal}, employing depth-first search
with a time limit \citep{hu2019optimal,lin2020generalized}, or using
\emph{binarization} for continuous data \citet{brita2025optimal}.
Although some BnB algorithms assert that the use of depth-first search
with a time limit does not preclude finding the optimal solution if
the algorithm is allowed to continue running, this claim is largely
vacuous in practice: without a feasible bound on running time, identifying
the optimal solution may still require exponential time in the worst
case. A similar phenomenon arises in the study of algorithms based
on MIP solvers \citet{bertsimas2017optimal,bertsimas2019machine}.
Although MIP-based methods guarantee attainment of the optimal solution
upon full termination, their running times are often unpredictable
and impractical for most problems. Consequently, studies employing
MIP solvers typically impose time limits and aim to obtain near-optimal
solutions within the allotted time, which can likewise be regarded
as a heuristic.

It should be emphasized that\emph{ the role of these heuristics in
	the study of optimal algorithms is not to demonstrate their superiority—unless
	supported by rigorous performance guarantees, which would constitute
	a distinct line of research—but rather to provide rapid, plausible
	solutions, thereby illustrating that higher training accuracy under
	controlled model complexity does not necessarily result in overfitting}.
Consequently, most MIP-based algorithms, as well as BnB algorithms
that incorporate depth-first search strategies, do not provide guarantees
on the quality of the solution obtained within a given execution time.

Therefore, the establishment of rigorous approximation bounds constitutes
a separate line of research concerned with the design of approximate
algorithms endowed with provable guarantees. By contrast, heuristics
in the context of exact algorithms primarily serve to demonstrate
the practical effectiveness of exact solutions and to motivate further
investigation into scaling exact methods to larger datasets.

The HODT problem exhibits formidable combinatorics. Even for a modest
dataset with $N=100$, $D=3$, $K=3$, there are $\left(\begin{array}{c}
	\left(\begin{array}{c}
		100\\
		3
	\end{array}\right)\\
	3
\end{array}\right)\times3!\approx4\times10^{15}$ possible decision trees in the worst case.  We emphasize that \textbf{this
	is not a deficiency of our algorithm but is inherent to the problem
	itself}, as even its simplest instance—the linear classification problem
(a tree with a single internal node)—is NP-hard. Consequently, solving
this problem exactly is currently intractable for our algorithms.
In this section, we develop two simple yet effective heuristics for
addressing the HODT problem.

We do not claim novelty for these two heuristics, nor do we seek to
emphasize their intrinsic importance; rather, they are employed as
practical tools to provide insight into the behavior of solutions
when they are exact or nearly exact. In principle, any other effective
heuristics could be adopted for the same purpose. However, without
the geometric and algorithmic insights developed in this paper, the
construction of such heuristics would not be possible.

\subsection{Coreset selection method}

\begin{algorithm}

	\caption{Coreset Selection ($\mathit{hodtCoreset}(K, M, \mathit{xs}, \mathit{BS}, R, L, B_{\max}, c)$) \label{alg:Coreset-selection}}
	\begin{algorithmic}[1]
		\State \textbf{Input}: Coreset parameters: $\mathit{BS}$: block size; $R$: shuffle iterations; $L$: max-heap size; $B_{\max}$: max input size; $c \in (0,1]$: shrinking factor
		\State \hspace{4em} $\mathit{hodt}$ parameters: $K$: splitting rules; $M$: polynomial degree; $\mathit{xs}$: data list
		\State \textbf{Output}: Max-heap with top $L$ configurations and data blocks
		\State $\mathcal{C} \gets \mathit{xs}$ \Comment{Initialize coreset}
		\While{$|\mathcal{C}| \leq B_{\max}$}
		\State $\mathcal{C}_B \gets \{ C_1, C_2, \ldots, C_{\lceil |\mathcal{C}| / \mathit{BS} \rceil} \}$ \Comment{Reshuffle and divide into blocks}
		\State $\mathcal{H}_L \gets \emptyset_L$ \Comment{Initialize size $L$ max-heap}
		\For{$r \in \mathit{range}(1, R)$}
		\For{$C \in \mathcal{C}_B$}
		\State $\mathit{cnfg} \gets \mathit{hodt}(K, M, C)$ \Comment{Compute configuration}
		\State $\mathcal{H}_L.\mathit{push}(\mathit{cnfg}, C)$ \Comment{Push to heap}
		\EndFor
		\EndFor
		\State $\mathcal{C} \gets \mathit{unique}(\mathcal{H}_L)$ \Comment{Merge blocks, remove duplicates}
		\State $L \gets L \times c$ \Comment{Shrink heap size}
		\EndWhile
		\State $\mathit{cnfg} \gets \mathit{hodt}(K, M, \mathcal{C})$ \Comment{Final refinement}
		\State $\mathcal{H}_L.\mathit{push}(\mathit{cnfg}, \mathcal{C})$
		\State \textbf{return} $\mathcal{H}_L$
	\end{algorithmic}
\end{algorithm}

The first method, as reported by \citet{he2025deepice}, has demonstrated
superior performance for empirical risk minimization in two-layer
neural networks. The method is based on the following idea: instead
of computing the exact solution across the entire dataset—which is
computationally infeasible for large $K$ and $D$—the coreset method
identifies the exact solution for the most representative subsets.

The coreset method functions as a ``layer-by-layer'' data filtering
process, in each loop, we ran the $\mathit{hodt}$ algorithm for a
subset of the dataset, and only keep the $L$ best solutions with
respect to the whole datasets. Since better configurations tend to
have lower training accuracy, they are more likely to ``survive''
during the selection process. By recursively reduces the data size
until the remaining subset can be processed by running the complete
optimal algorithm. The algorithm process is detailed in Algorithm
\ref{alg:Coreset-selection}.

\subsection{$\mathit{sodt}$ with selected hyperplanes}

\begin{algorithm}

	\caption{$\mathit{sodt}$ with selected hyperplanes ($\mathit{sodtWSH}(K, M, \alpha, \mathit{xs})$) \label{alg:sodtWSH}}
	\begin{algorithmic}[1]
		\State \textbf{Input}: $K$: splitting rules; $M$: polynomial degree; $\alpha$: duplicate threshold; $\mathit{xs}$: data list
		\State \textbf{Output}: Nested combinations $\mathit{NCs}$ without crossed hyperplanes, as $(k+2) \times (k+1)$ matrices
		\State $t_{best} \gets \emptyset(K, 1)$ \Comment{Initialize vector}
		\State $\mathcal{H}_C \gets \mathit{hodtCoreset}(1, M, \mathit{xs}, \mathit{BS}, R, L, B_{\max}, c)$ \Comment{Select candidate hypersurfaces}
		\State $G \gets \binom{M+D}{D} - 1$
		\State $\mathit{ncss} \gets [[\,],[]^k]$ \Comment{Initialize nested-combinations}
		\For{$n \in \mathit{range}(0, N)$}
		\For{$k \in \mathit{reverse}(\mathit{range}(K, n+1))$}
		\State $\mathit{ncss}' \gets \mathit{filter}_{q(\alpha)}(\mathit{map}(\cup [n], \mathit{ncss}[k-1]))$ \Comment{Filter duplicates below $\alpha$}
		\State $t_{best}[k] \gets \min_{E_{0-1}}(\mathit{mapL}(\mathit{sodt}_{\mathit{xs}}, \mathit{ncss}'))$ \Comment{Compute best configuration}
		\State $\mathit{ncss}[k] \gets \mathit{ncss}[k] \cup \mathit{reverse}(\mathit{ncss}')$ \Comment{Update combinations}
		\State $\mathit{ncss}[K] \gets [\,]$ \Comment{Clear size $K$ combinations}
		\EndFor
		\EndFor
		\State \textbf{return} $t_{best}$
	\end{algorithmic}
\end{algorithm}

Although the coreset selection method provides plausible solutions
for low-dimensional datasets, an obvious limitation arises for high-dimensional
datasets: the number of possible splitting rules is too large, even
when the dataset is partitioned into small blocks. Moreover, defining
a decision tree with $K$ splitting rules requires at least $K\times D$
distinct data points. Running $\mathit{hodt}$ for $K\times D$ points
becomes intractable for most high-dimensional datasets. For example,
when $K=2$, and $D=20$, the block size is $2\times20$ and the number
of possible hyperplanes is $\left(\begin{array}{c}
	40\\
	25
\end{array}\right)\approx1\times10^{11}$, which is prohibitively large. In practice, our current implementation
can efficiently process at most $D+2$ or $D+3$ points when $D\geq20$,
far below the ideal block size of $K\times D$.

To address this limitation, we leverage a key advantage of the size-constrained
$\mathit{hodt}$ algorithm—its ability to decompose the original difficult
ODT problem into many smaller subproblems, each of which can be solved
efficiently using $\mathit{sodt}$. Solving the full ODT problem directly
is computationally prohibitive due to the enormous combinatorial space.
Instead, we focus on finding a set of ``good candidate'' hyperplanes,
and then construct decision trees from these candidates rather than
exploring the entire search space.

This insight motivated the development of a new heuristic, $\mathit{sodtWSH}$,
(short for ``$\mathit{sodt}$ with selected hyperplanes''), described
in Algorithm \ref{alg:sodtWSH}. The algorithm identifies candidate
hyperplanes with relatively \emph{low training loss }and constructs
optimal decision trees based on these hyperplanes. Rather than generating
hyperplanes sequentially from continuous data blocks—where each hyperplane
differs from the previous by only one or two points—we generate a
large set of candidate hyperplanes and then apply $\mathit{sodt}$
only to combinations in which the data points defining each splitting
rule are sufficiently distinct—we want to only evaluate those that
contains unique data points greater than a threshold $\alpha<D\times K$.
This will helps us to find sufficiently distinct hyperplanes.

\section{Experiments \label{sec: experiments-appendix}} 

The experiments aim to provide a detailed analysis along four dimensions:
\begin{enumerate}
	\item \textbf{Computational complexity and scalability}: We compare $\mathit{sodt}_{\text{vec}}$
	and $\mathit{sodt}_{\text{rec}}$ under both sequential and parallelized
	settings, evaluating the scalability of the more efficient method.
	\item \textbf{Empirical combinatorial complexity}: We analyze the combinatorial
	complexity of hyperplane and hypersurface decision tree models, demonstrating
	that the true complexity—after filtering out crossed hyperplanes—is
	substantially smaller than the theoretical upper bound provided in
	Part I.
	\item \textbf{Analysis over synthetic datasets}: Using synthetic datasets
	generated from hyperplane decision trees (with data lying in convex
	polygon regions), we benchmark the performance of $\mathit{\mathit{hodt}Coreset}$.
	We systematically test the effects of ground truth tree size, data
	dimensionality, dataset size, label noise, and data noise separately.
	Our results show that the hyperplane decision tree model learned by
	our algorithm is not only more accurate in prediction but also more
	robust to noise.
	\item \textbf{Generalization performance on real-world datasets}: We evaluate
	performance across 30 real-world datasets, comparing our hyperplane
	decision tree model learned by $\mathit{sodtWSH}$ against the state-of-the-art
	optimal decision tree algorithm \citet{brița2025optimal} and the
	well-known approximate algorithm CART. We demonstrate that, when model
	complexity is properly controlled, hyperplane decision trees consistently
	outperform axis-parallel models in out-of-sample tests.
\end{enumerate}
As discussed, the combinatorial complexity of the HODT problem is
currently intractable even for moderately sized datasets. Therefore,
the experiments for points (3) and (4) are conducted exclusively for
the $M=1$ (hyperplane) case. To obtain high-quality solutions efficiently,
we employ the two heuristic methods developed in Section \ref{sec:Heuerestic-methods}.

In these experiments, we adapt \citet{brița2025optimal}'s ConTree
algorithm rather than \citet{mazumder2022quant}'s Quant-BnB algorithm.
We observed that ConTree often provides more accurate solutions while
being considerably more efficient, making it more suitable for our
comparisons. We note that the current experiments do not explore the
effects of hyperparameter tuning, such as minimum data per leaf or
tree depth when optimizing size-constrained decision trees. The performance
impact of these fine-grained controls represents an interesting avenue
for future research.

All experiments were conducted on an Intel Core i9 CPU with 24 cores
(2.4–6 GHz), 32 GB RAM, and a GeForce RTX 4060 Ti GPU.

\subsection{Computational analysis\label{subsec:Computational-and-combinatorial analysis}}

In this section, we analyze in detail the computational efficiency
of $\mathit{sodt}$ for solving the size-constrained ODT problem.
Since $\mathit{sodt}_{\text{vec}}$ and $\mathit{sodt}_{\text{kperms}}$
share similar advantages—both are fully vectorized but unable to exploit
efficient dynamic programming (DP)—we focus on comparing $\mathit{sodt}_{\text{vec}}$
against the dynamic programming algorithm $\mathit{sodt}_{\text{rec}}$.

Rather than using a binary tree data structure, which relies on pointers
to locate subtrees and suffers from poor cache performance, both $\mathit{sodt}_{\text{vec}}$
and $\mathit{sodt}_{\text{rec}}$ are implemented entirely with array
(heap) data structures. This design ensures contiguous memory allocation,
significantly reducing cache misses.

We compare their performance in two settings: 1) \emph{Sequential
	setting}: Nested combinations are processed one-by-one using a standard
for-loop. 2) \emph{Parallelized setting}. A large batch of feasible
nested combinations is processed simultaneously. Specifically, we
store $\mathit{ncrs}:\mathit{NCRs}$ in a single large tensor instead
of a list and pass it as input to $\mathit{mapL}\left(\mathit{sodt},\mathit{ncrs}\right)$.
For $\mathit{mapL}\left(\mathit{sodt}_{\text{vec}}\right)$, we implement
this as a single function ($\texttt{batch\_sodt}$). This allows us
to efficiently process the large batch of feasible nested combinations
$\mathit{ncrs}$ on both CPU and GPU. We denote the results as $\mathit{sodt}_{\text{vec}}^{\text{cpu}}$
and $\mathit{sodt}_{\text{vec}}^{\text{gpu}}$, respectively.

In contrast, since $\mathit{sodt}_{\text{rec}}$ cannot be fully vectorized,
we parallelize $\mathit{mapL}\left(\mathit{sodt}_{\text{vec}}\right)$
using multi-core CPU execution, initialized in a multi-process setting.

\subsubsection{Comparison between the vectorized and recursive implementation}

\begin{figure}
	\begin{centering}
		\includegraphics[scale=0.2]{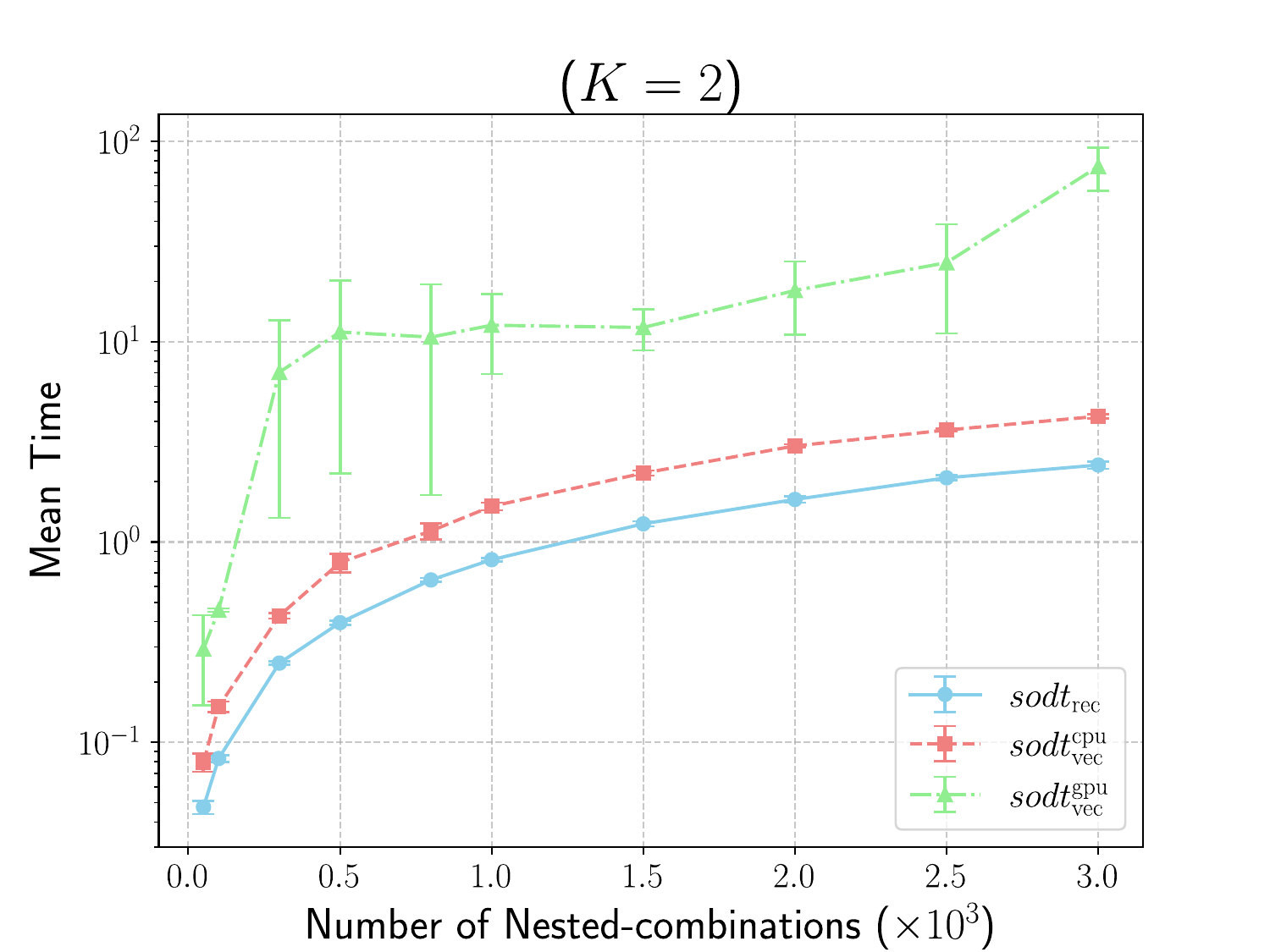}\includegraphics[scale=0.2]{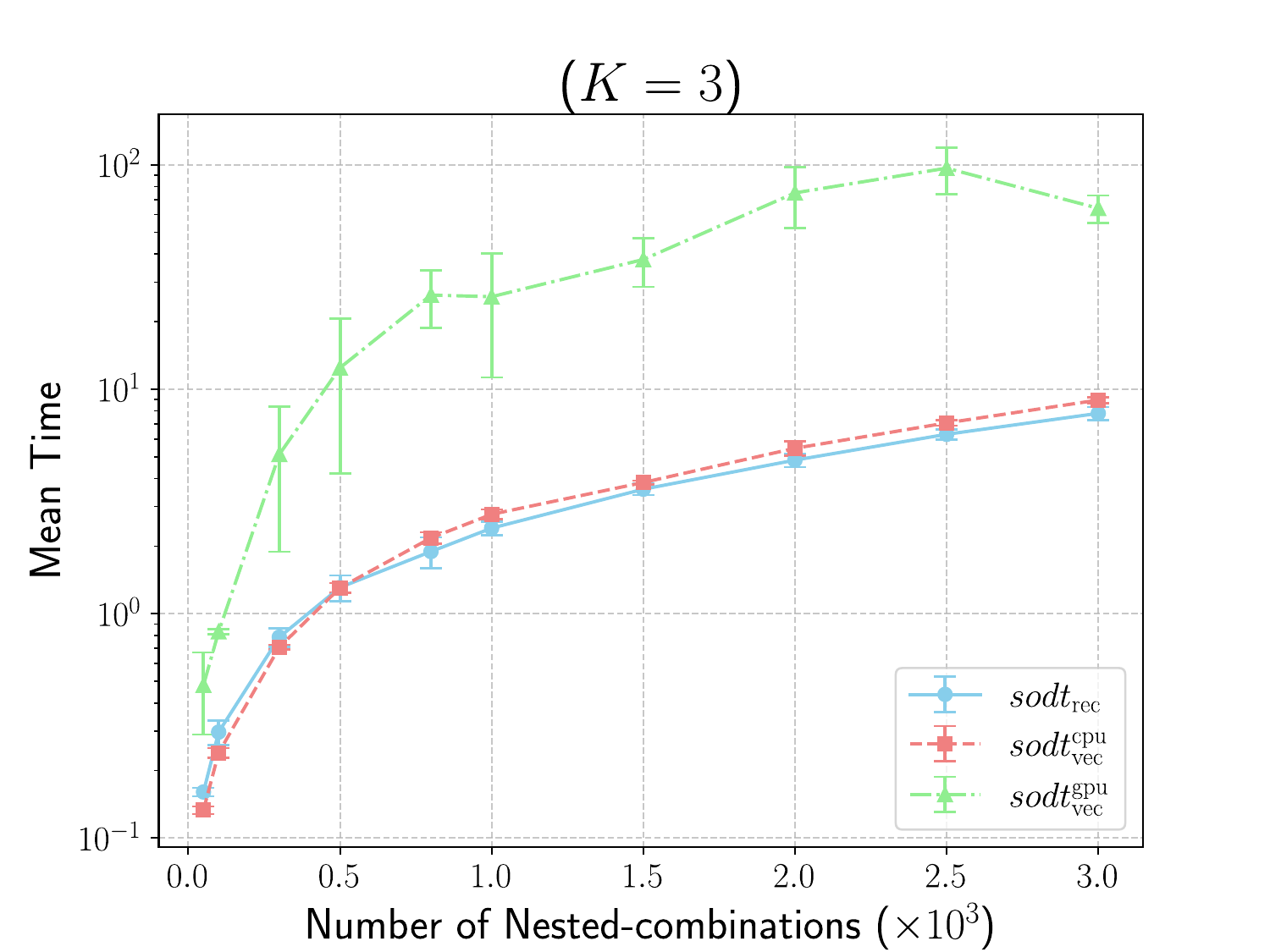}\includegraphics[scale=0.2]{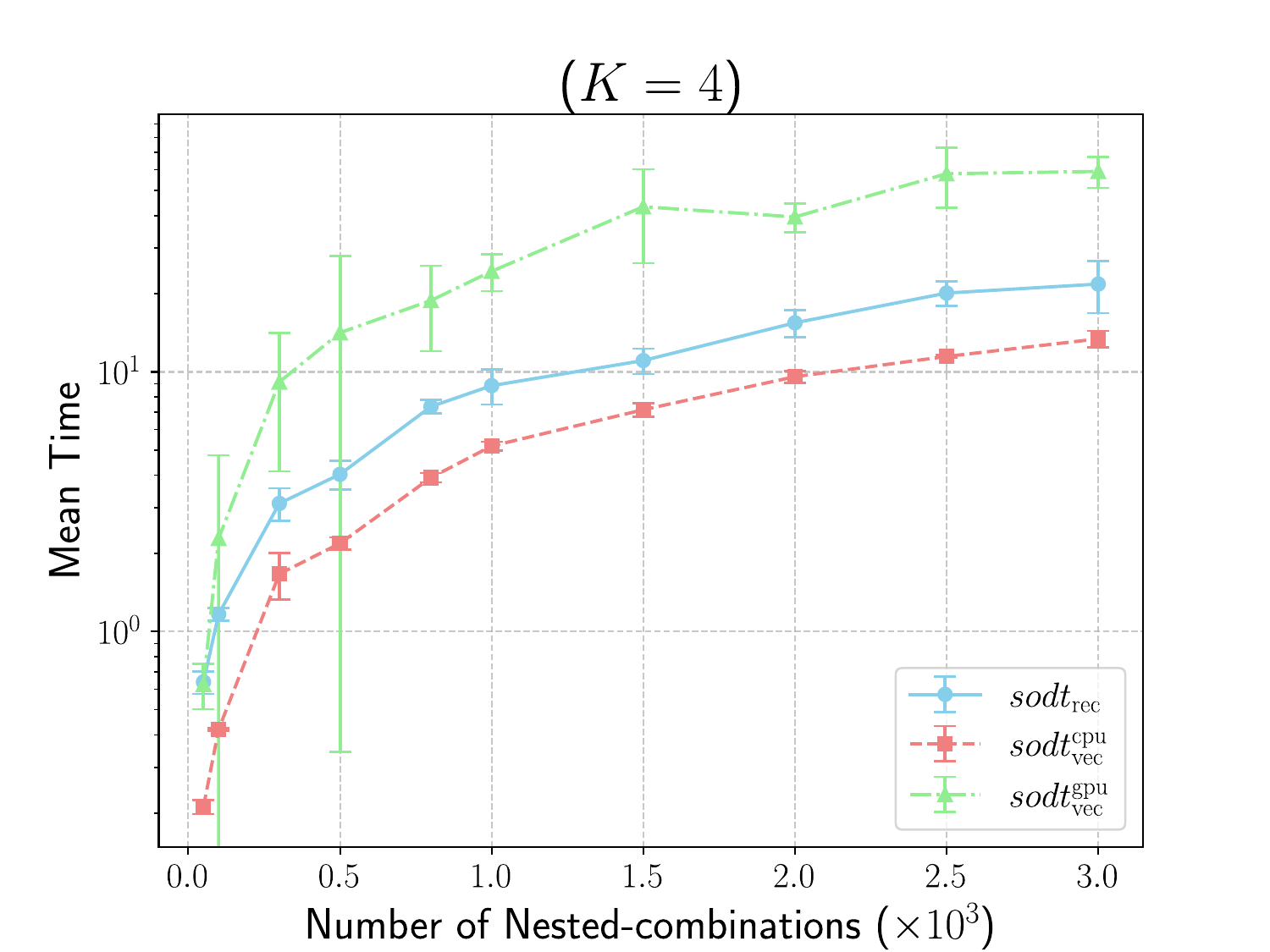}
		\par\end{centering}
	\begin{centering}
		\includegraphics[scale=0.2]{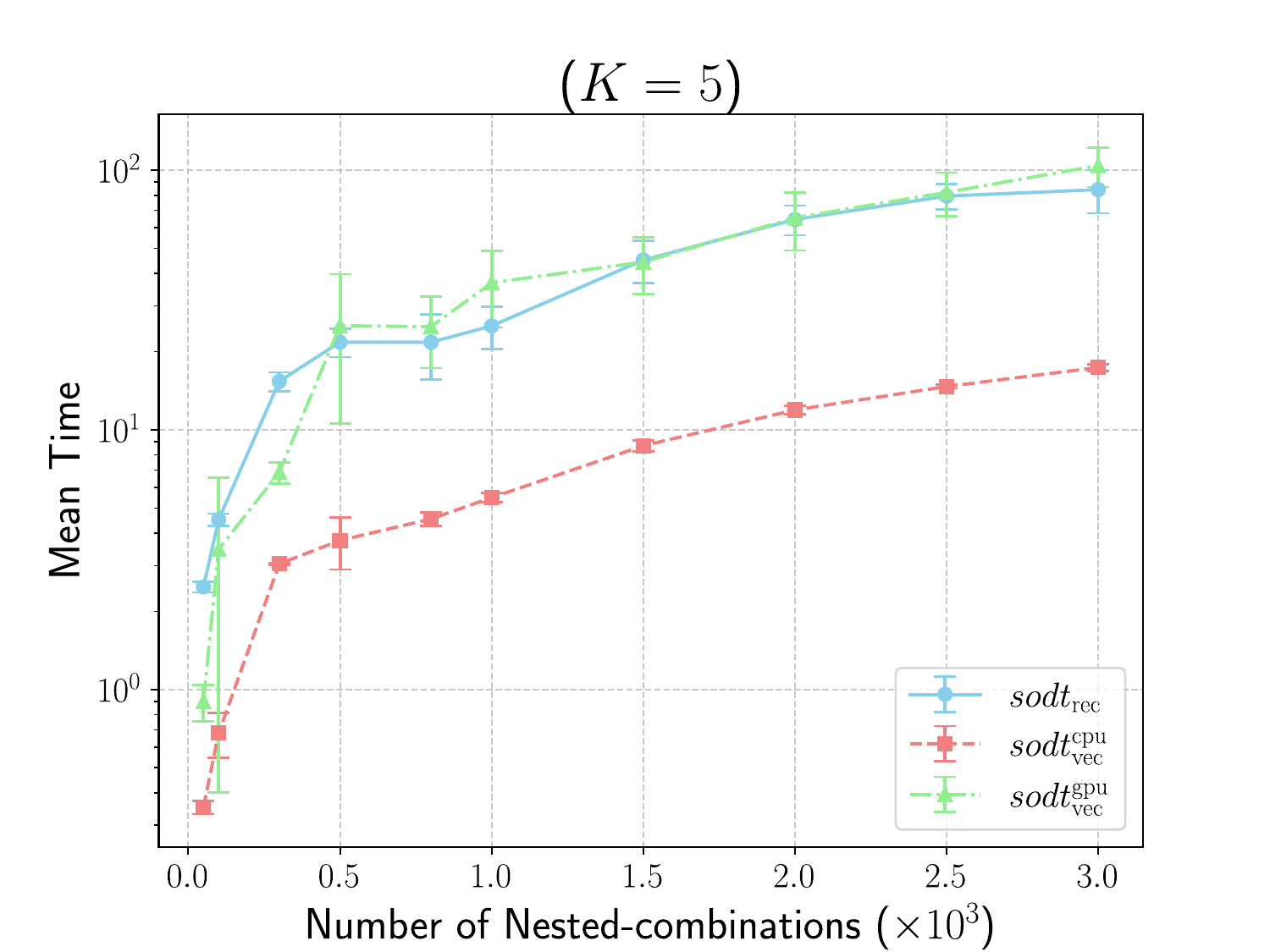}\includegraphics[scale=0.2]{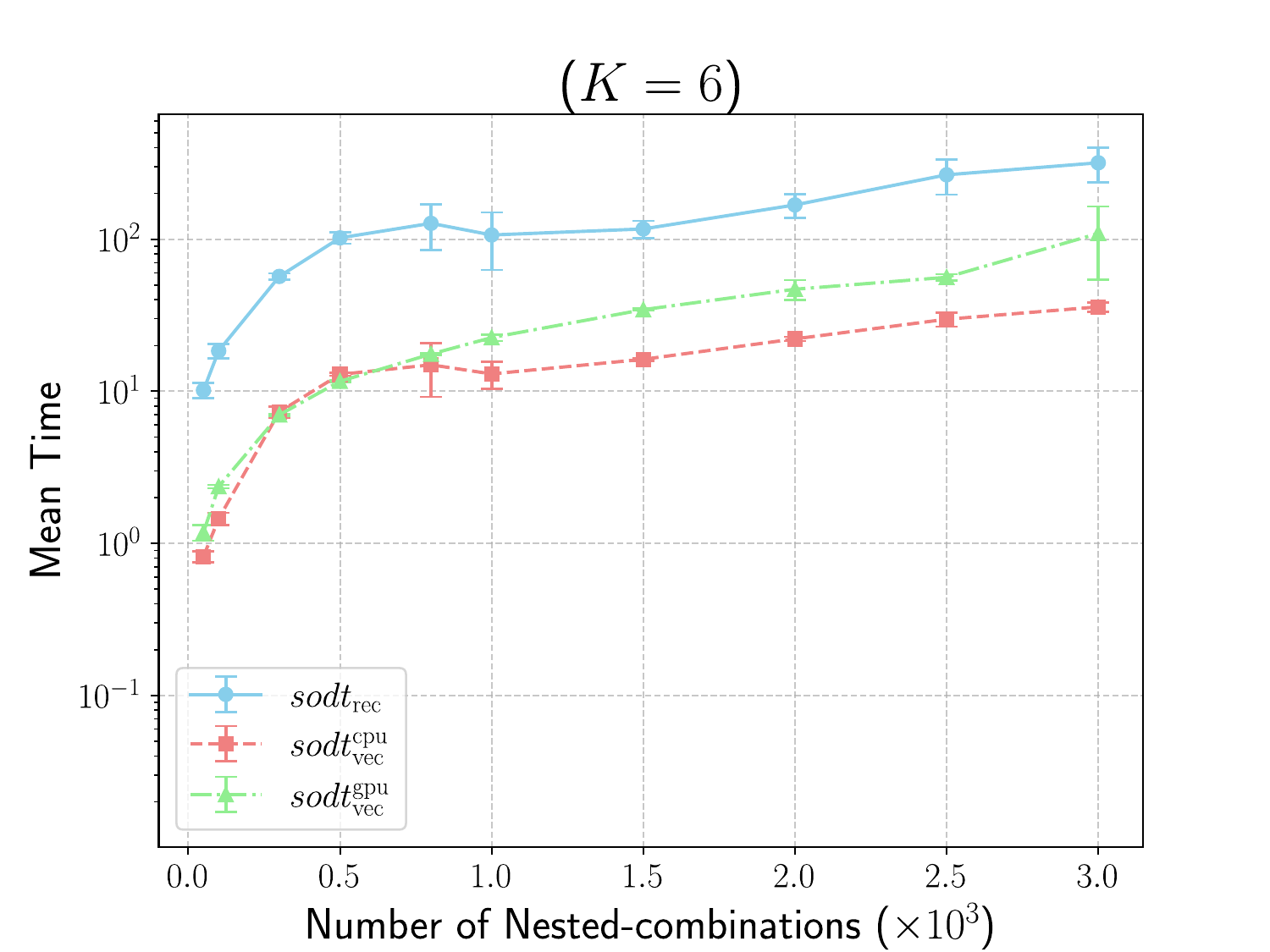}\includegraphics[scale=0.2]{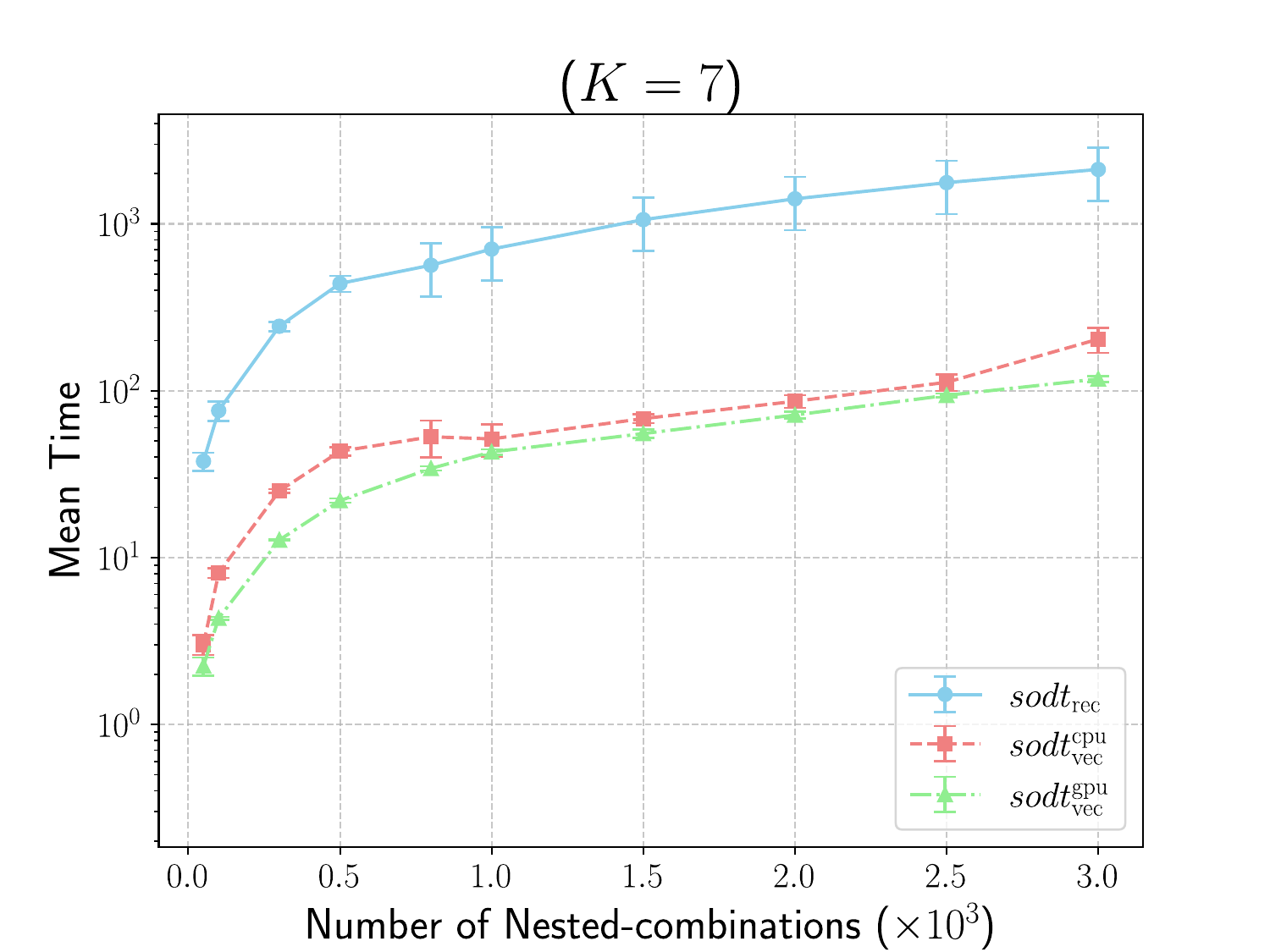}
		\par\end{centering}
	\caption{Running time comparison between $\mathit{sodt}_{\text{rec}}$ and
		$\mathit{sodt}_{\text{vec}}$ with varying $K$ on sequential setting.
		\label{fig: sequential compare between sodt}}
\end{figure}

Even though $\mathit{sodt}_{\text{rec}}$ has provably lower theoretical
complexity, the hardware compatibility of $\mathit{sodt}_{\text{vec}}$
allows it to outperform in practice, particularly as $K$ increases.
As shown in Figure \ref{fig: sequential compare between sodt}, in
the sequential setting $\mathit{sodt}_{\text{rec}}$ is the most efficient
method when $K\le3$. However, it becomes slower than $\mathit{sodt}_{\text{vec}}^{\text{cpu}}$
at $K=4$, and eventually the slowest method once $K\geqq5$. In contrast,
$\mathit{sodt}_{\text{vec}}^{\text{gpu}}$ is initially slower than
all other methods but overtakes $\mathit{sodt}_{\text{rec}}$ after
$K=5$, ultimately becoming the most efficient method at $K=7$.

\begin{figure}
	\begin{centering}
		\includegraphics[scale=0.2]{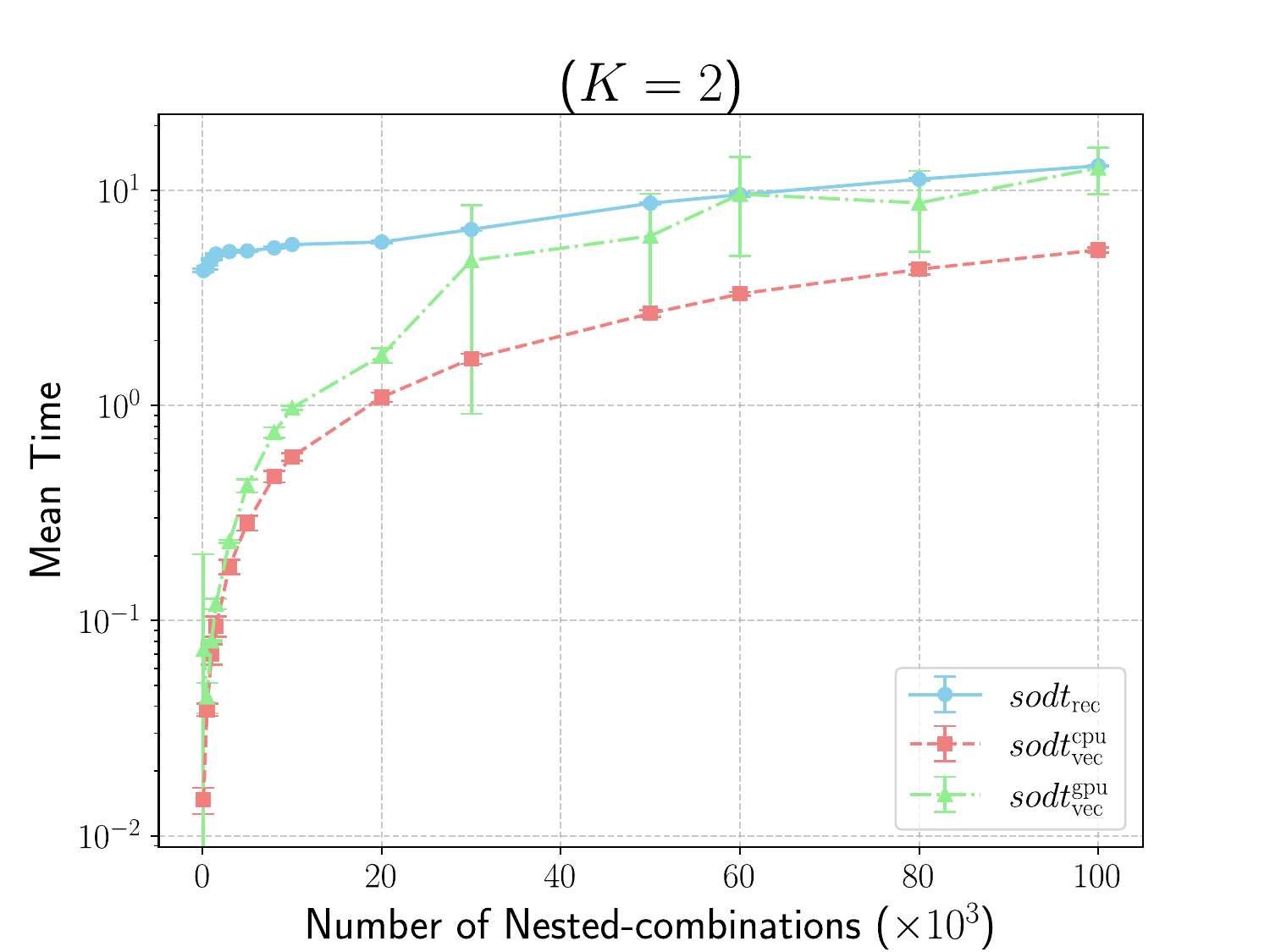}\includegraphics[scale=0.2]{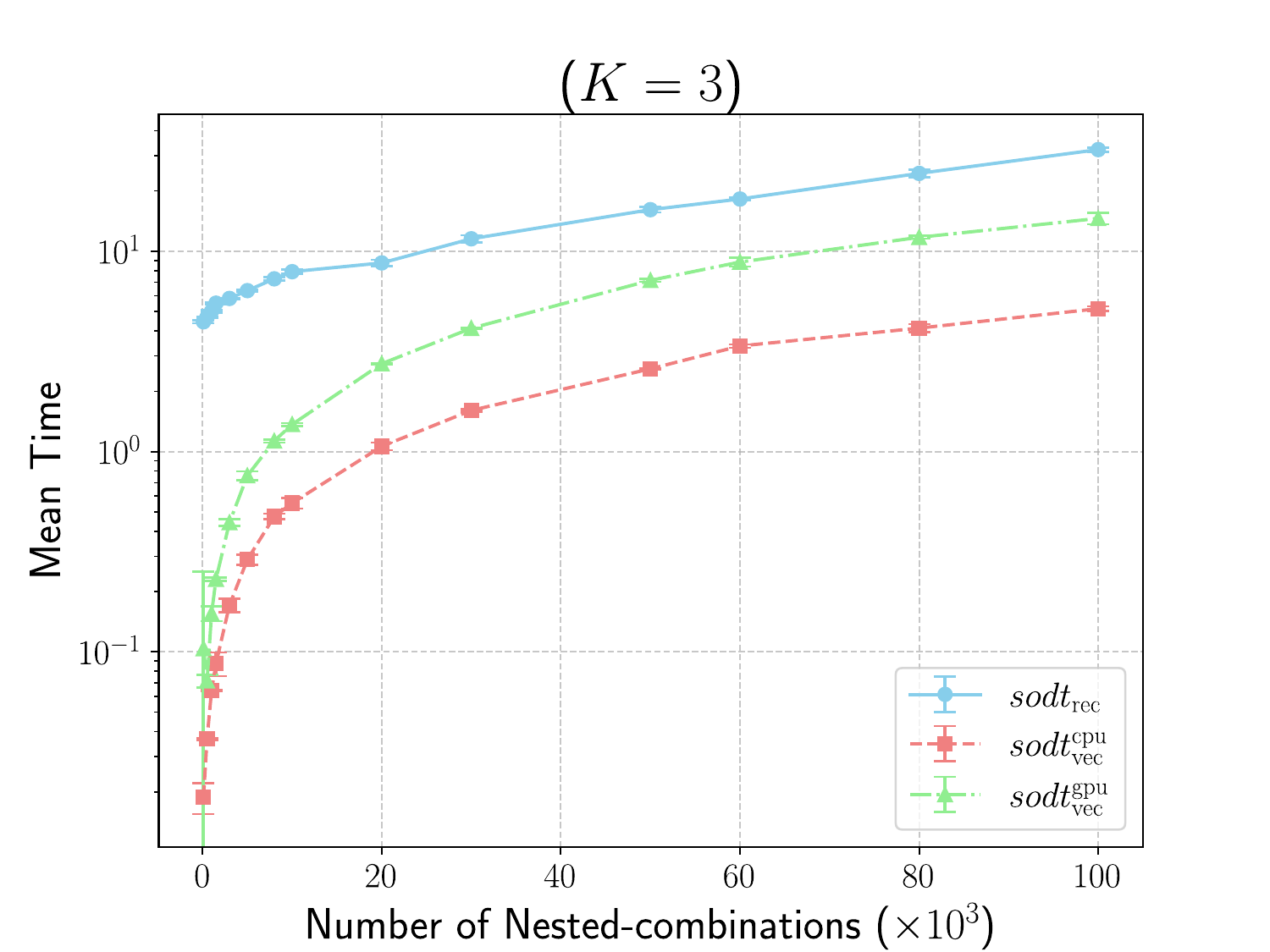}\includegraphics[scale=0.2]{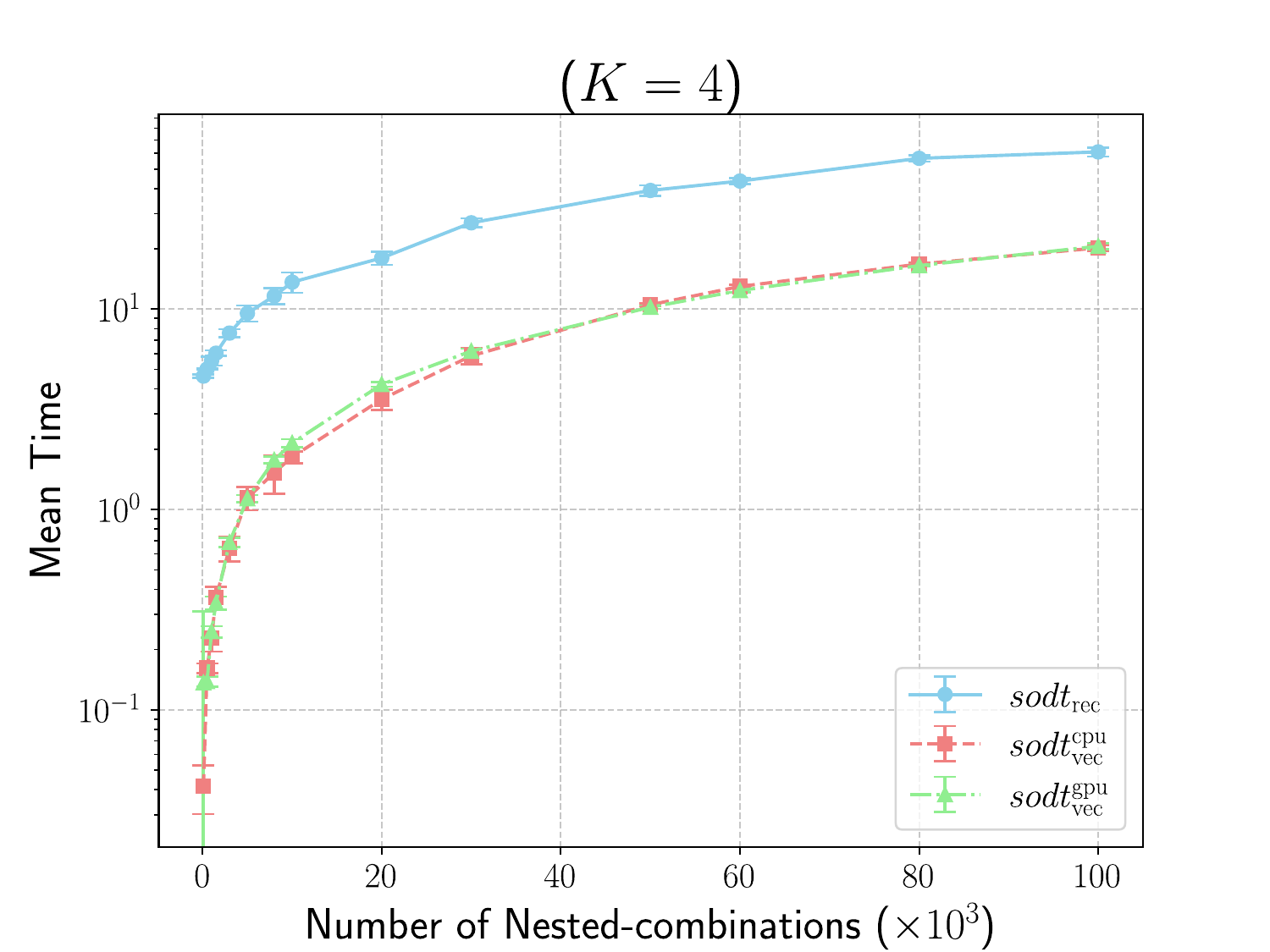}
		\par\end{centering}
	\begin{centering}
		\includegraphics[scale=0.2]{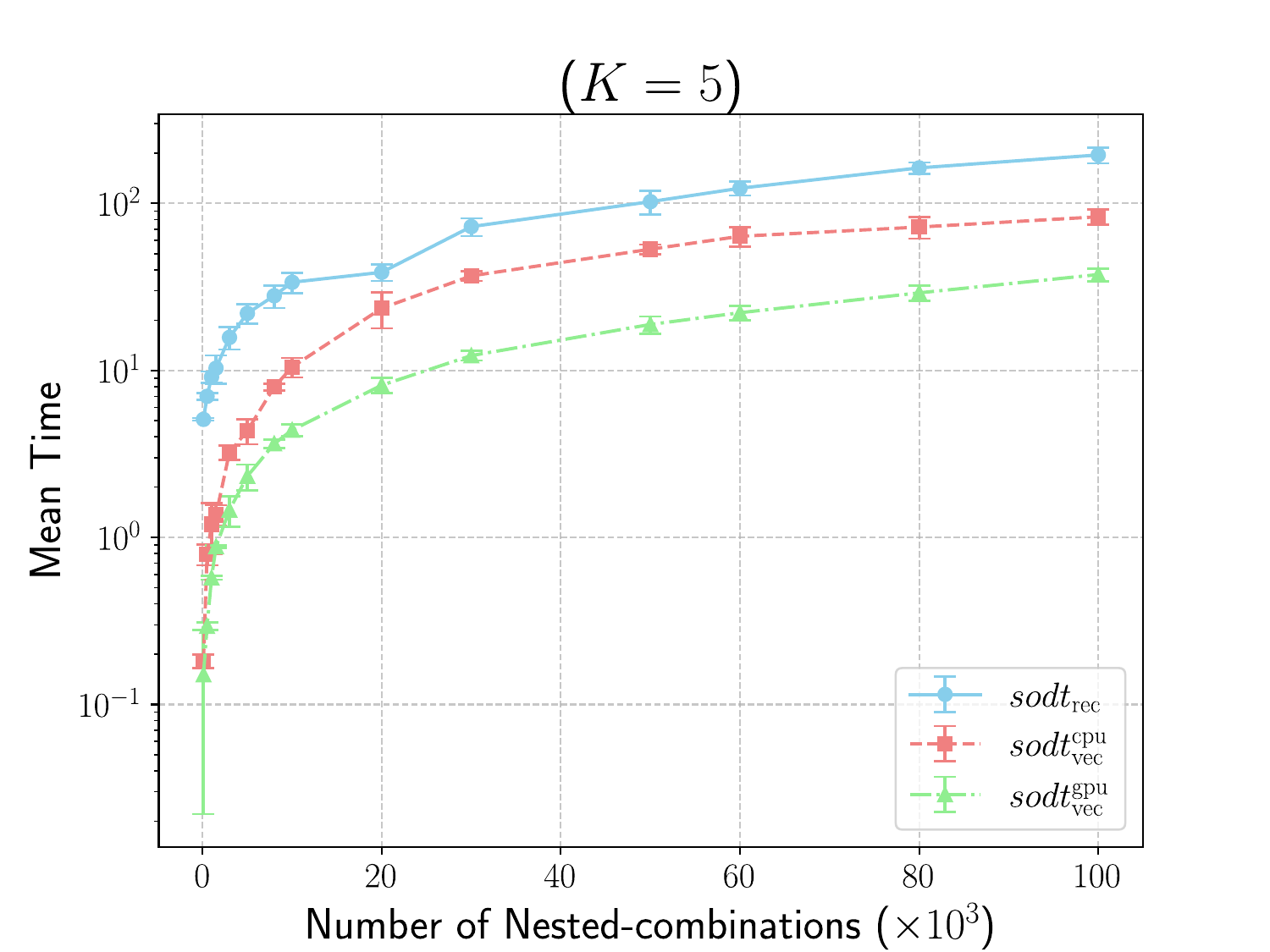}\includegraphics[scale=0.2]{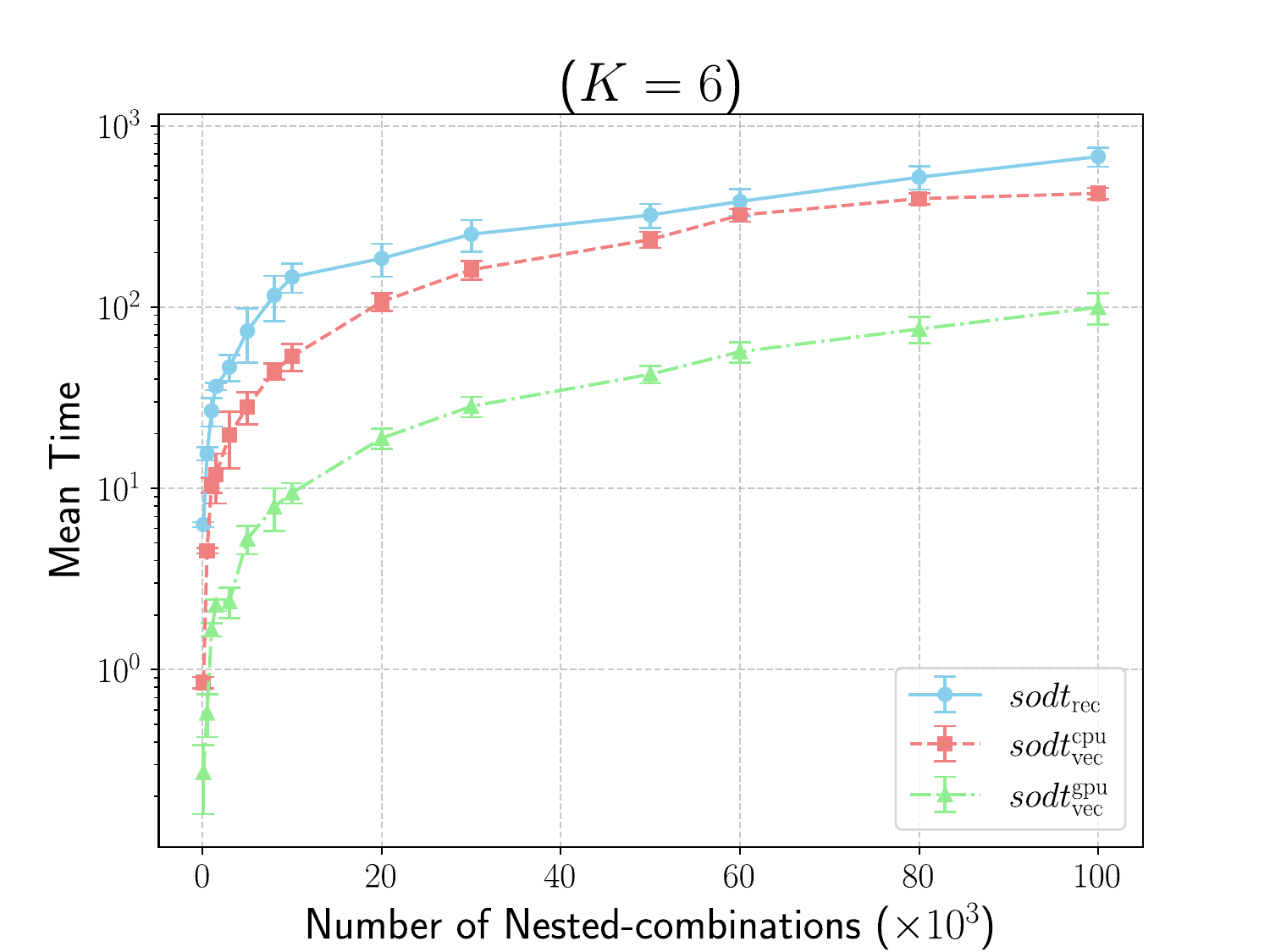}\includegraphics[scale=0.2]{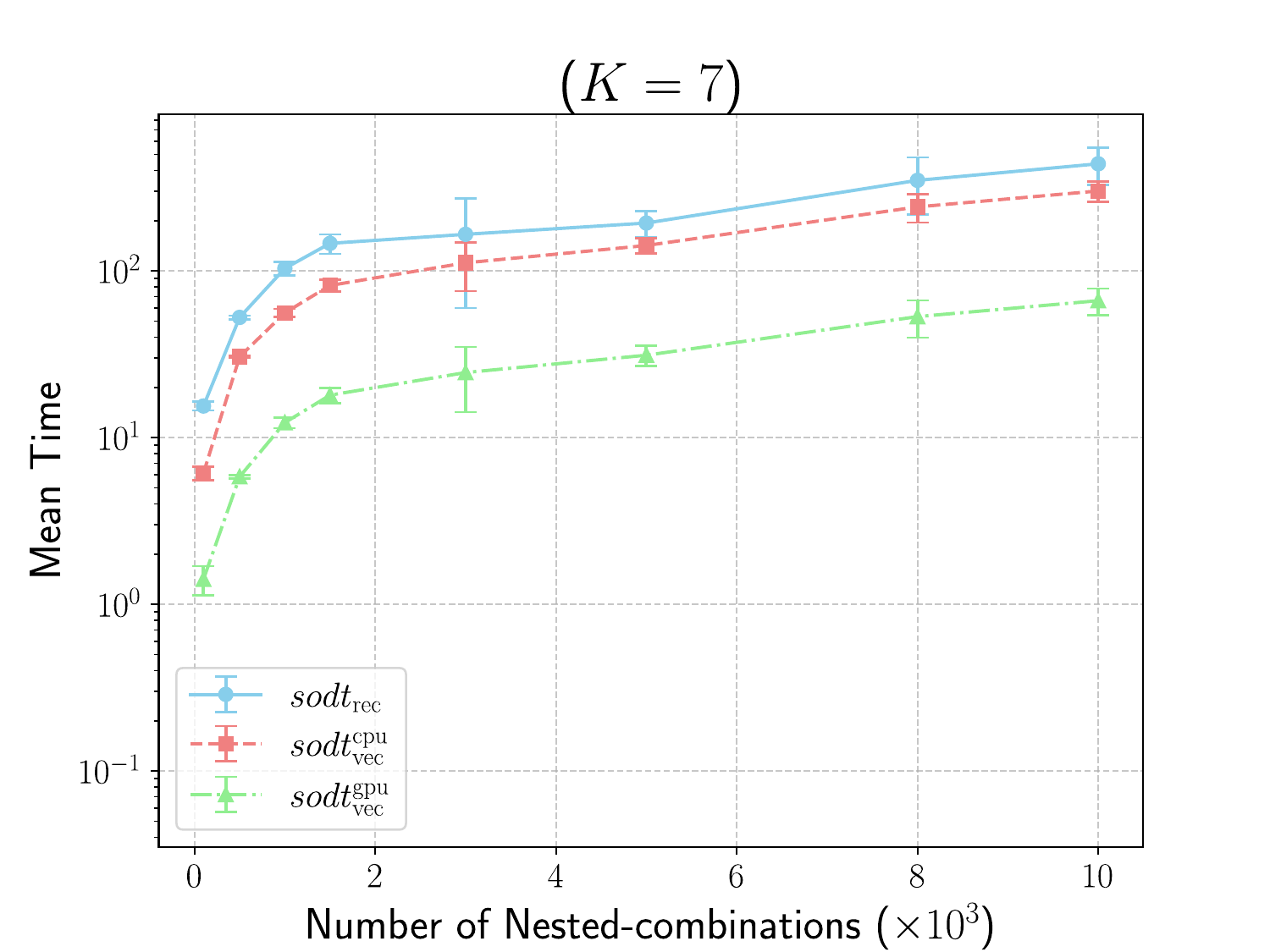}
		\par\end{centering}
	\caption{Running time comparison between $\mathit{sodt}_{\text{rec}}$ and
		$\mathit{sodt}_{\text{vec}}$ with varying $K$ on parallel setting.\label{fig: parallel compare between sodt}}
\end{figure}

In the parallel setting (Figure \ref{fig: parallel compare between sodt}),
the benefits of vectorization are even more pronounced. Parallelizing
$\mathit{mapL}\left(\mathit{sodt}_{\text{rec}}\right)$ with multiprocessing
incurs significant overhead from initializing multiple CPU cores,
making $\mathit{sodt}_{\text{rec}}$ consistently the slowest method
across all cases. Comparing $\mathit{sodt}_{\text{vec}}^{\text{cpu}}$
and $\mathit{sodt}_{\text{vec}}^{\text{gpu}}$, we observe that when
$K<4$, the CPU implementation is more efficient. For $K=4$, both
implementations achieve nearly identical performance, while for $K>4$,
$\mathit{sodt}_{\text{vec}}^{\text{gpu}}$ becomes superior. The GPU
computation does not always outperform CPU computation because $\mathit{mapL}\left(\mathit{sodt}_{\text{vec}}\right)$
in its current implementation consumes excessive memory to store intermediate
results. The resulting memory transfer overhead can outweigh computational
gains when $k$ is small. A more sophisticated low-level implementation
(e.g., in CUDA rather than Python) may reduce these overheads and
further improve GPU performance.

\subsubsection{Computational scalability of the vectorized method—the ability to
	explore one million nested combinations within a fixed time}

Following the detailed comparison between $\mathit{sodt}_{\text{rec}}$
and $\mathit{sodt}_{\text{vec}}$ in both sequential and parallel
settings, we now turn to the computational scalability of the winning
method, $\mathit{sodt}_{\text{vec}}$. We focus on analyzing the pure
computational scalability of the algorithm by benchmarking its performance
without any acceleration techniques. Specifically, we measure the
total wall-clock runtime of $\mathit{sodt}_{\text{vec}}$ for solving
$1\times10^{6}$ instances of feasible nested combinations (with no
crossed hyperplanes) for different values of $K$. In our setup, cases
with $K\leq4$ are executed on the CPU, while cases with $K>4$ are
executed on the GPU.

As expected, when $K$ is fixed $\mathit{sodt}_{\text{vec}}$ (with
worst-case complexity $O\left(K!\times N\right)$) exhibits linear
runtime growth, which appears logarithmic in a log-linear plot. The
batched vectorized implementation proves highly efficient for $K=2,3$;
for example, it solves $1\times10^{6}$ feasible nested combinations
in only a few tens of seconds. Note that a single size $K$ nested
combination can generate up to $K!$ possible proper decision trees
in the worst-case. These results demonstrate the clear computational
advantages of the $\mathit{sodt}_{\text{vec}}$ algorithm.

All current experiments are implemented in Python using the PyTorch
library, and we anticipate that a lower-level implementation will
further enhance performance.

\subsection{Combinatorial complexity of the hyperplane/hypersurface decision
	tree\label{subsec:Crossed-hyperplane experiments}}

\subsubsection{Combinatorial complexity of nested combinations after filtered out
	crossed hyperplanes}

\begin{figure}
	\begin{centering}
		\includegraphics[viewport=10bp 0bp 900bp 600bp,clip,scale=0.17]{figures/Combinatorial_complexity_D=2}\includegraphics[viewport=10bp 0bp 900bp 600bp,clip,scale=0.17]{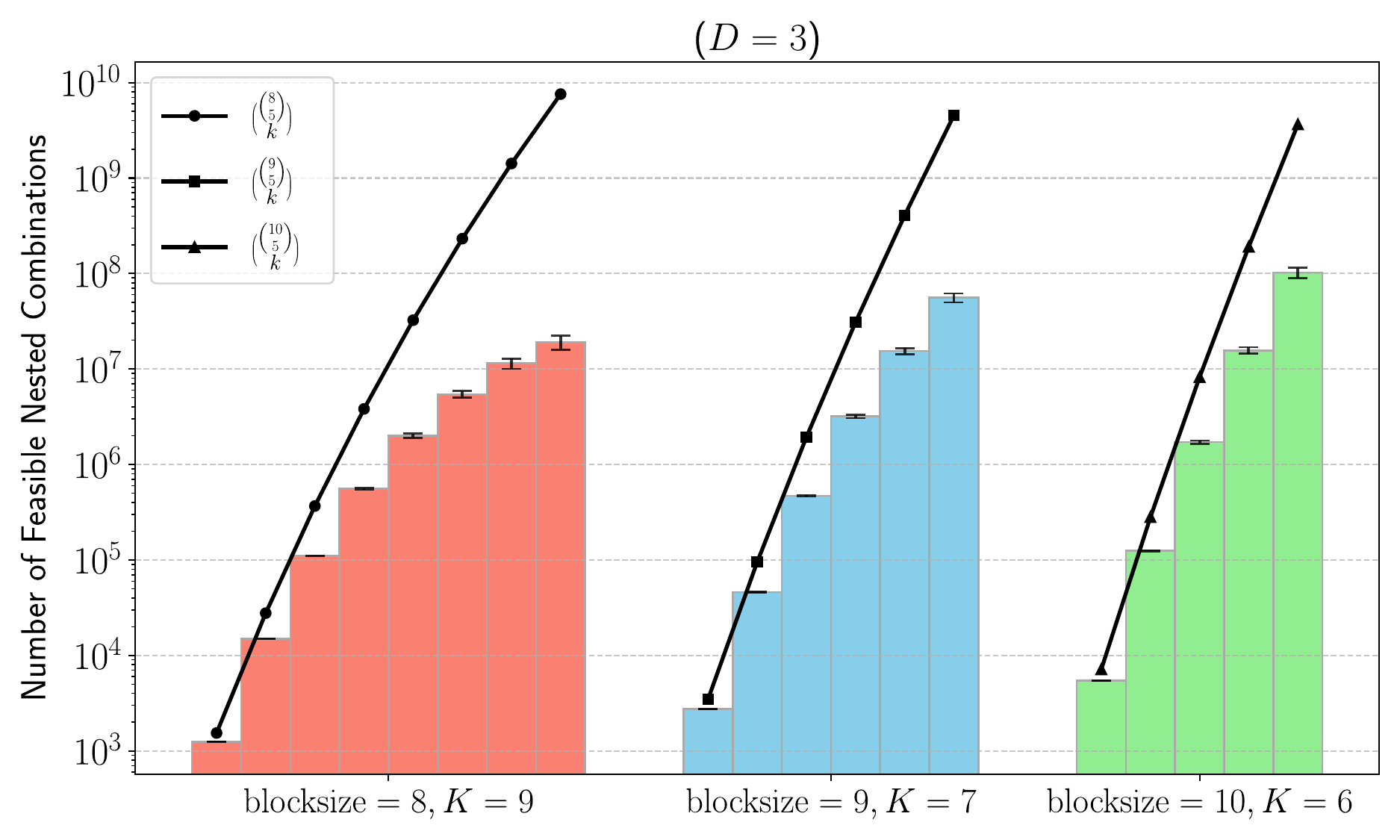}
		\par\end{centering}
	\begin{centering}
		\includegraphics[viewport=10bp 0bp 900bp 600bp,clip,scale=0.17]{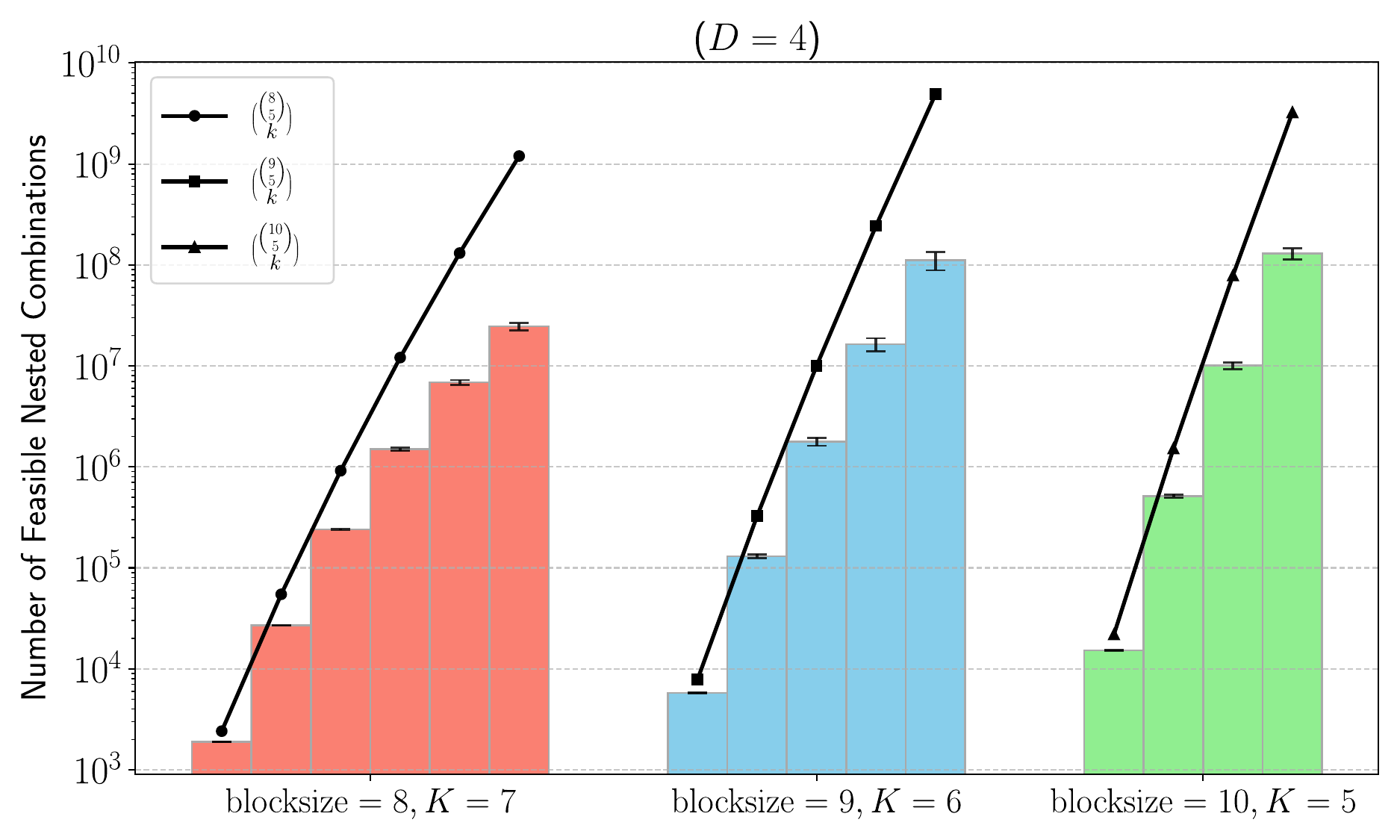}\includegraphics[viewport=10bp 0bp 900bp 600bp,clip,scale=0.17]{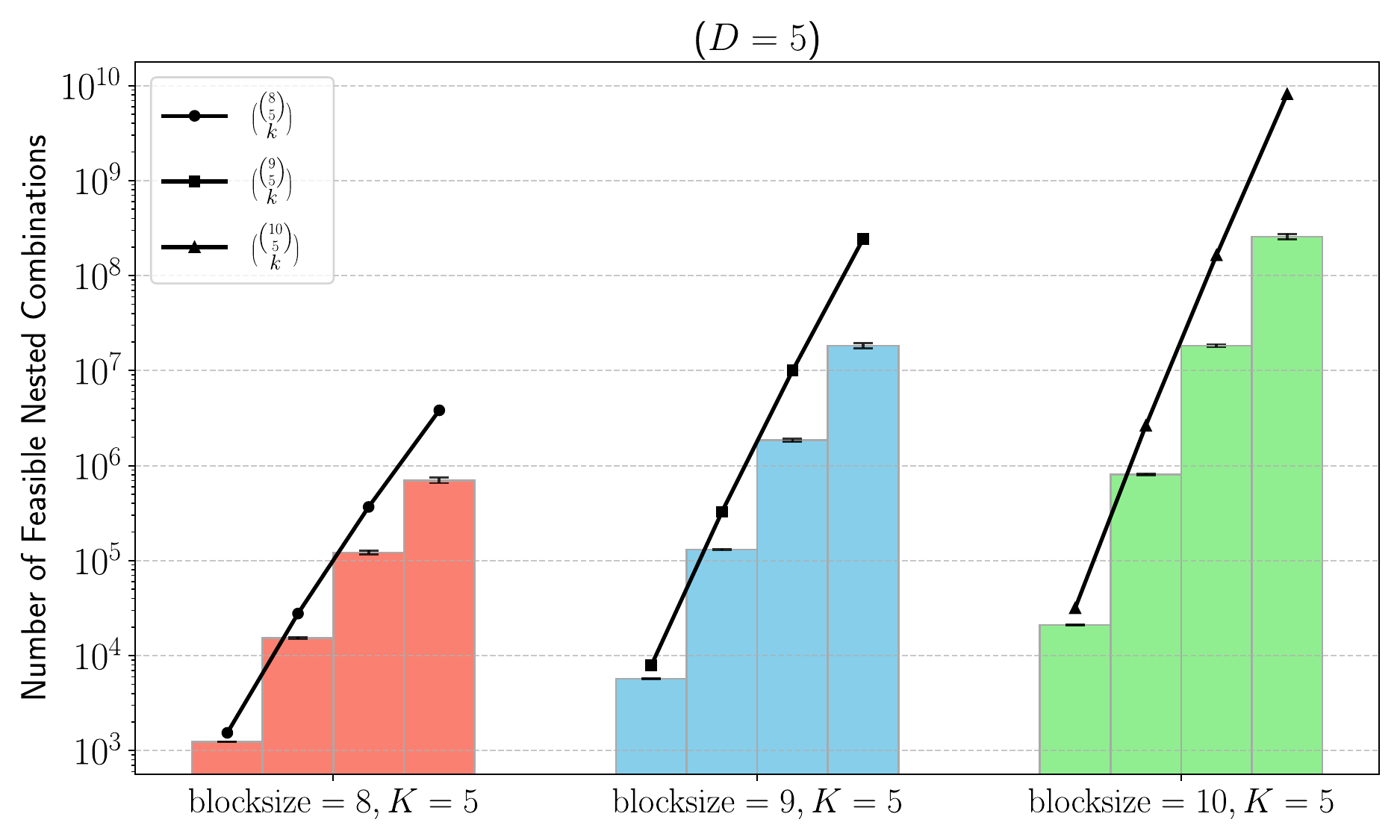}
		\par\end{centering}
	\caption{Combinatorial complexity of feasible nested combinations with varying
		$K$. Each bar corresponds to a fixed $K$ starts from $K=2$ (left-most
		bar) to the value of $K$ labeled below each group of bars. \label{fig:Combinatorial-Complexity with varing K}}
\end{figure}

As noted, the combinatorial complexity of the hyperplane ($M=1$)
ODT problem is bounded by $\left(\begin{array}{c}
	\left(\begin{array}{c}
		N\\
		D
	\end{array}\right)\\
	K
\end{array}\right)=O\left(N^{DK}\right)$. However, as established in Theorem \ref{When-two-hyperplanes},
any nested combination containing a pair of crossed hyperplanes cannot
yield a complete decision tree. Consequently, all such infeasible
nested combinations can be filtered out without compromising the optimality
of the algorithms.

To evaluate the true combinatorial complexity of the problem after
excluding infeasible combinations, we conducted experiments on synthetic
Gaussian datasets of size $N=2000$. Due to memory limitations, we
tested only subsets of the original dataset (denoted as ``$\text{blocksize}$''
in Figure \ref{fig:Combinatorial-Complexity with varing K}. Specifically,
a small sub-dataset was sampled from the $N=2000$ dataset and then
passed into $\mathit{nestedCombsFA}$, while hyperplane predictions
were still carried out using the full $N=2000$ dataset, but $\text{blocksize}$
ranged from 8 to 10.

For each panel in Figure \ref{fig:Combinatorial-Complexity with varing K},
we fixed the dimension $D$ and varying $N$ and $K$. We computed
the combinatorial complexity over five datasets and reported the mean
and standard deviation, illustrated as bar charts with error bars.
The results show that the true combinatorial complexity (bars), after
filtering infeasible nested combinations, is substantially smaller
than the theoretical upper bound $O\left(N^{DK}\right)$ (black lines).For
example, when $\text{blocksize}=10$, $D=2$, $K=10$, the theoretical
number of nested combinations is $3\times10^{9}$, whereas the number
of feasible nested combinations is only $1\times10^{7}$, more than
300 times difference!

An interesting pattern emerges in the $D=2$ panel: the complexity
curve forms an inverted U-shape, indicating that the combinatorial
complexity of decision trees initially increases with $K$ but eventually
decreases. This phenomenon aligns with the explanation given in \citet{he2025odt},
under mild probabilistic assumptions, the likelihood of constructing
a feasible nested combination decreases exponentially with $K$. The
inverted U-shape arises because, at first, the rapid growth in the
number of nested combinations dominates, but beyond a certain threshold,
the exponential decay in feasibility prevails. Nevertheless, in practice,
this behavior is rarely observed, as it typically requires $K$ to
reach astronomically large values even for medium-sized datasets.

\paragraph{Comparing the combinatorial complexity of hypersurface and hyperplanes
	in the same dimension}

\begin{figure}
	\begin{centering}
		\includegraphics[viewport=10bp 0bp 850bp 500bp,clip,scale=0.16]{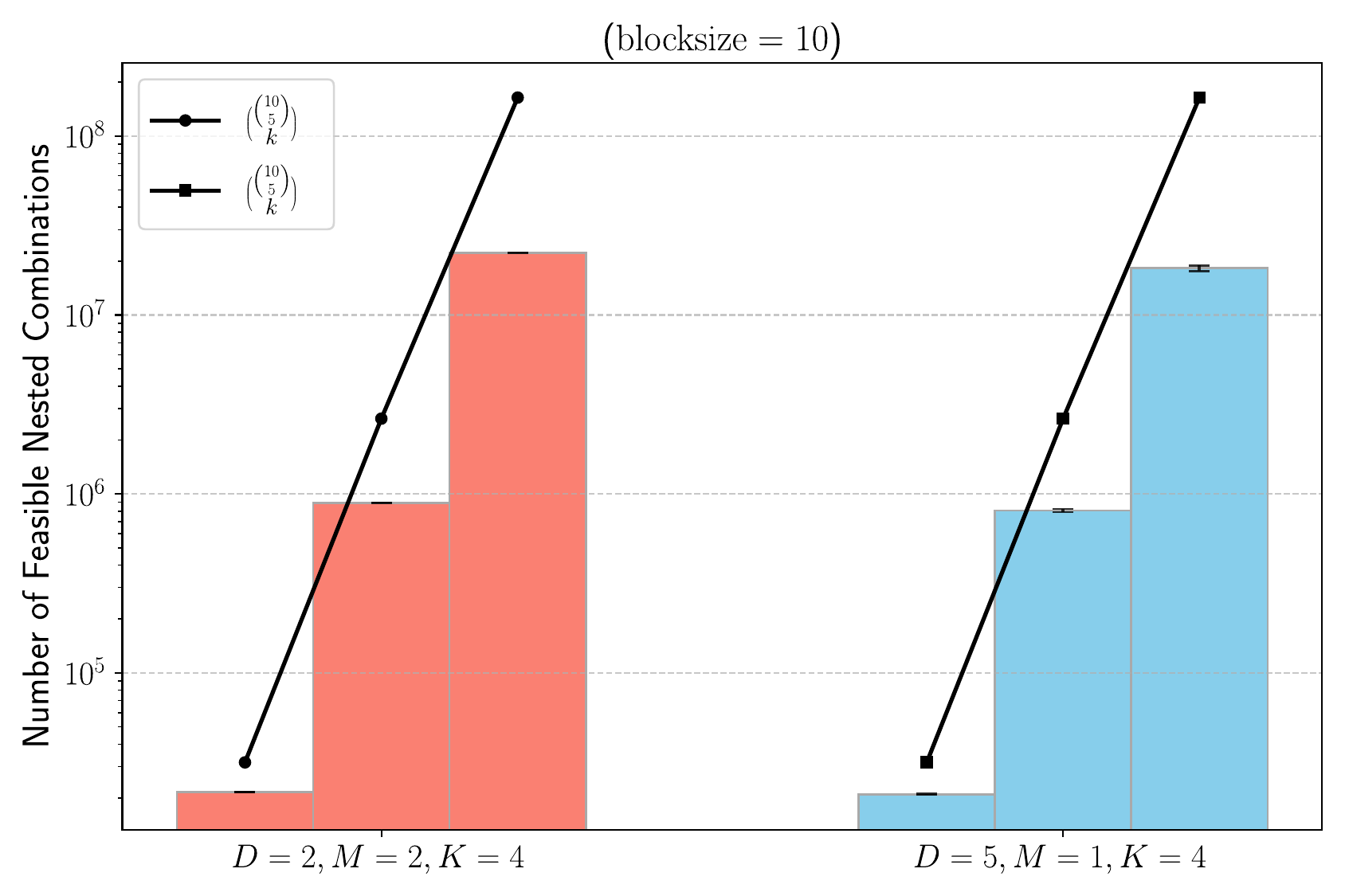}\includegraphics[viewport=10bp 0bp 850bp 500bp,clip,scale=0.16]{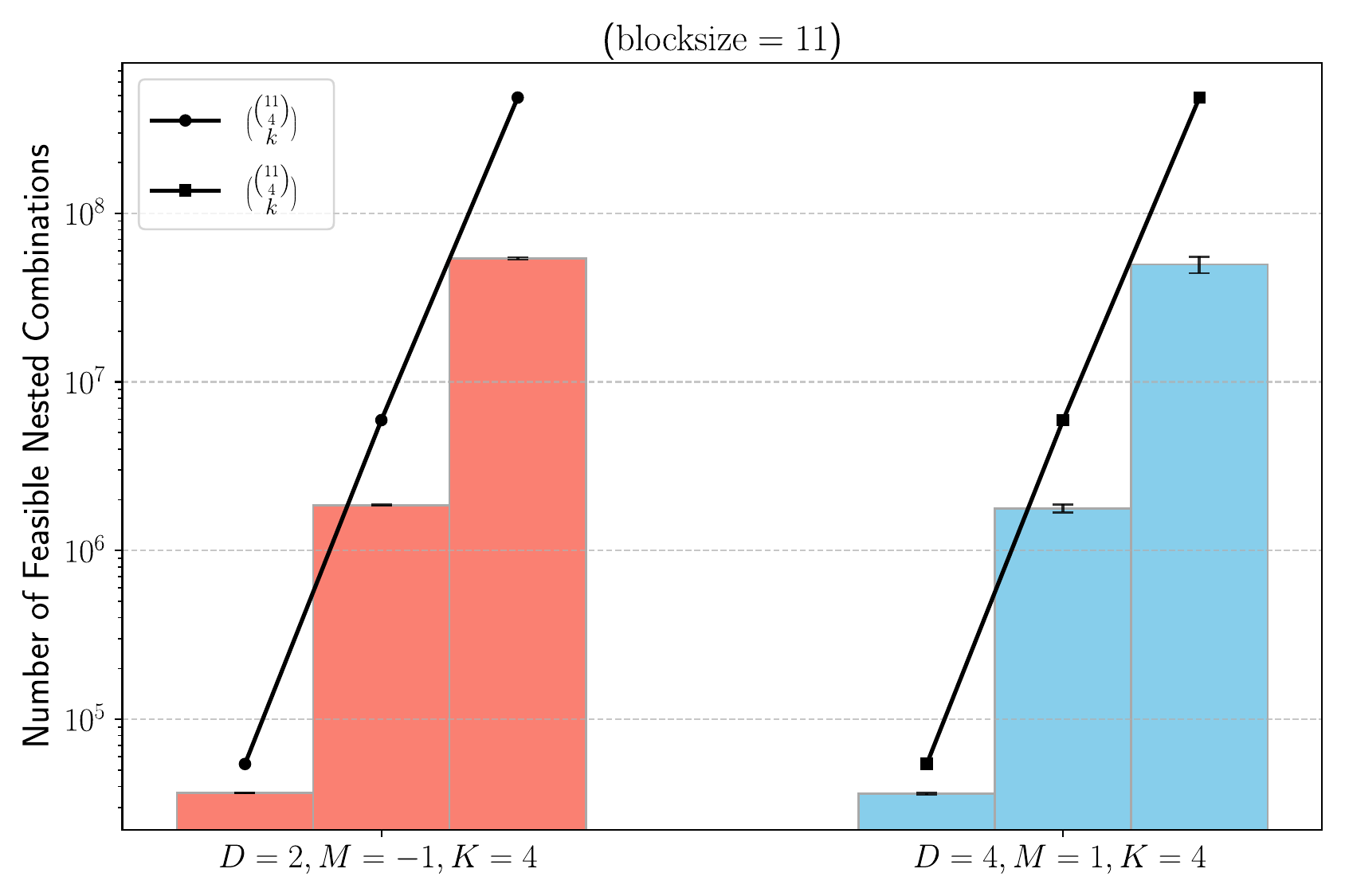}\includegraphics[viewport=10bp 0bp 850bp 500bp,clip,scale=0.16]{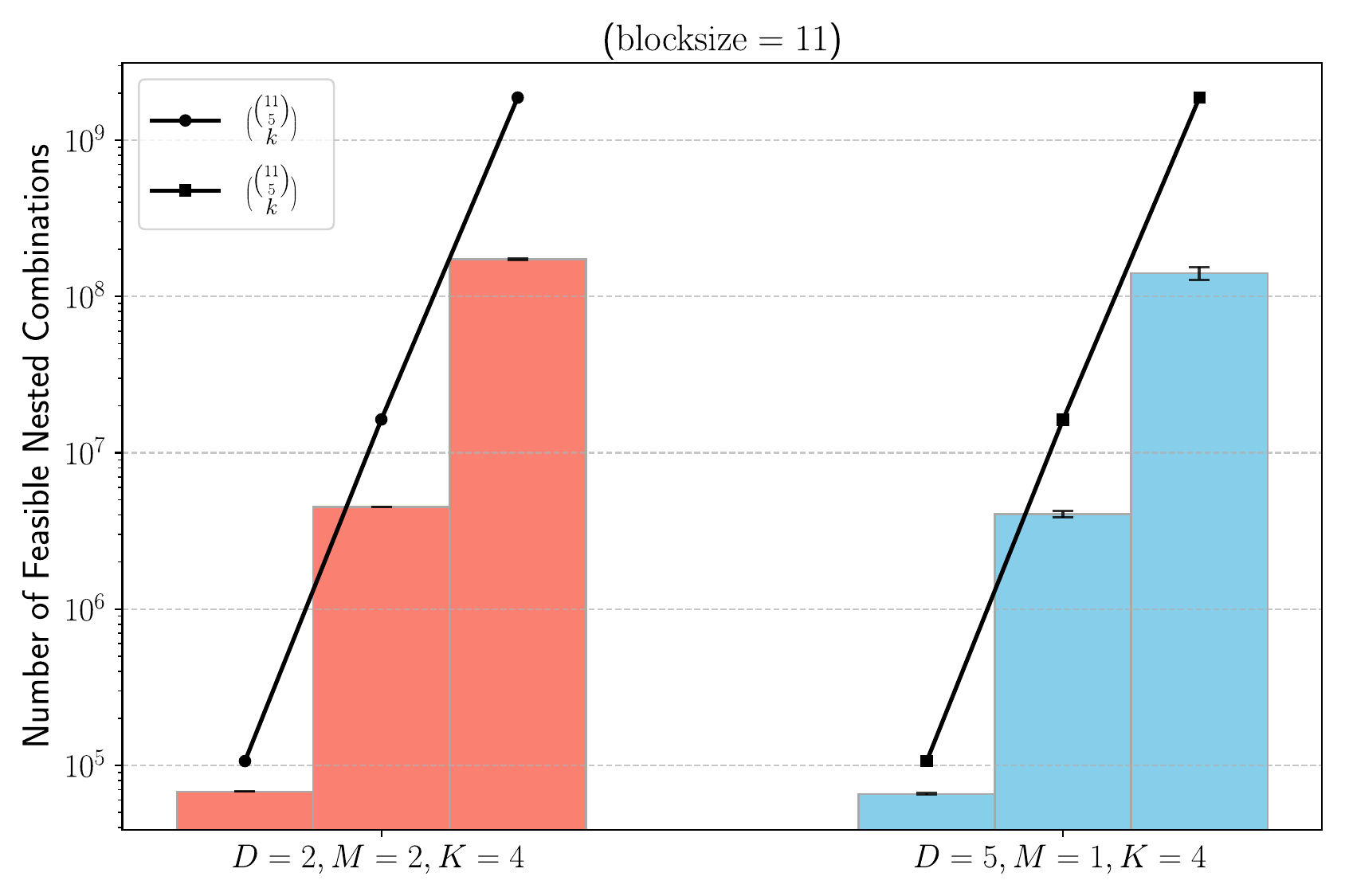}
		\par\end{centering}
	\begin{centering}
		\includegraphics[viewport=10bp 0bp 850bp 500bp,clip,scale=0.16]{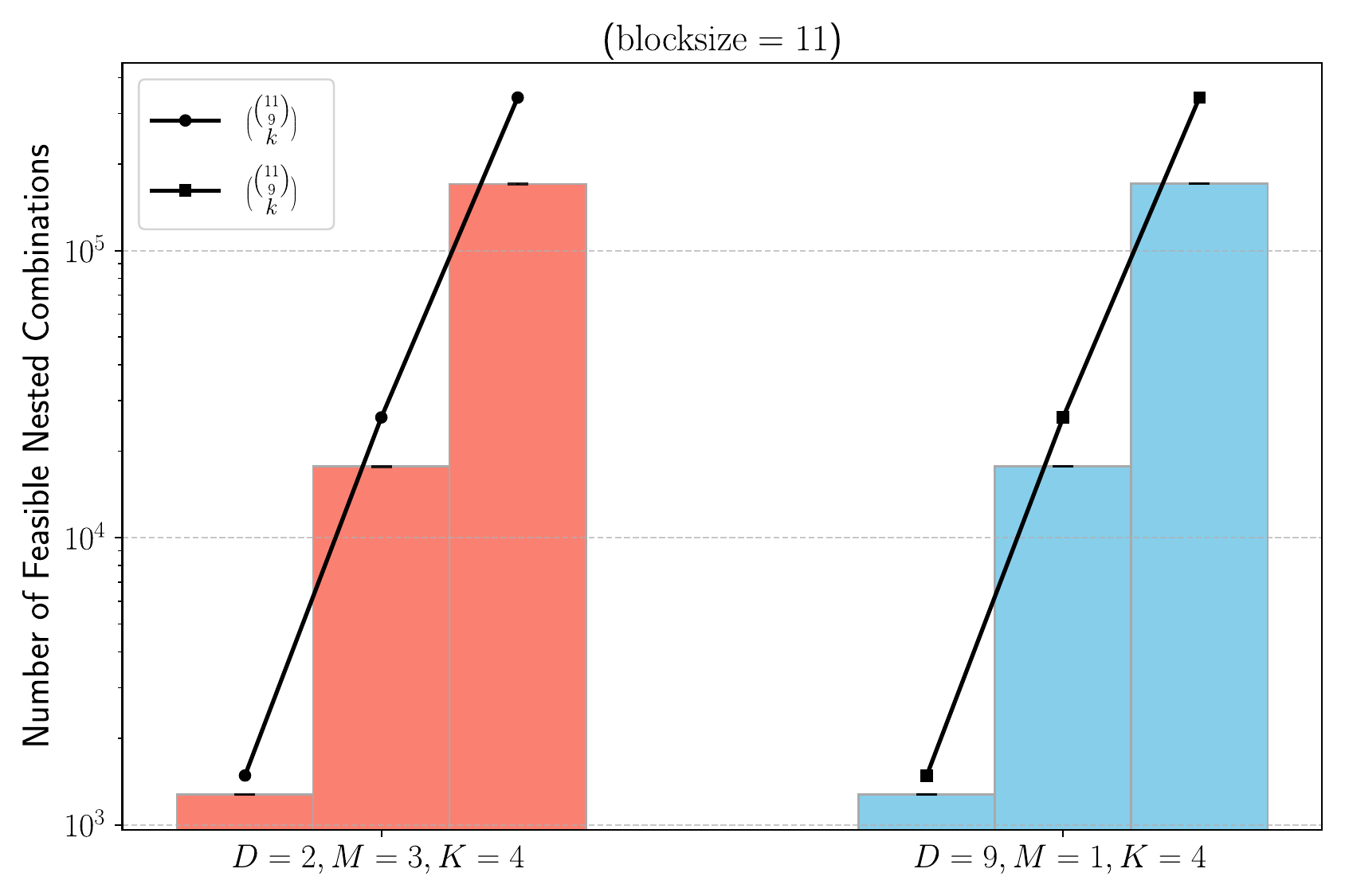}\includegraphics[viewport=10bp 0bp 850bp 500bp,clip,scale=0.16]{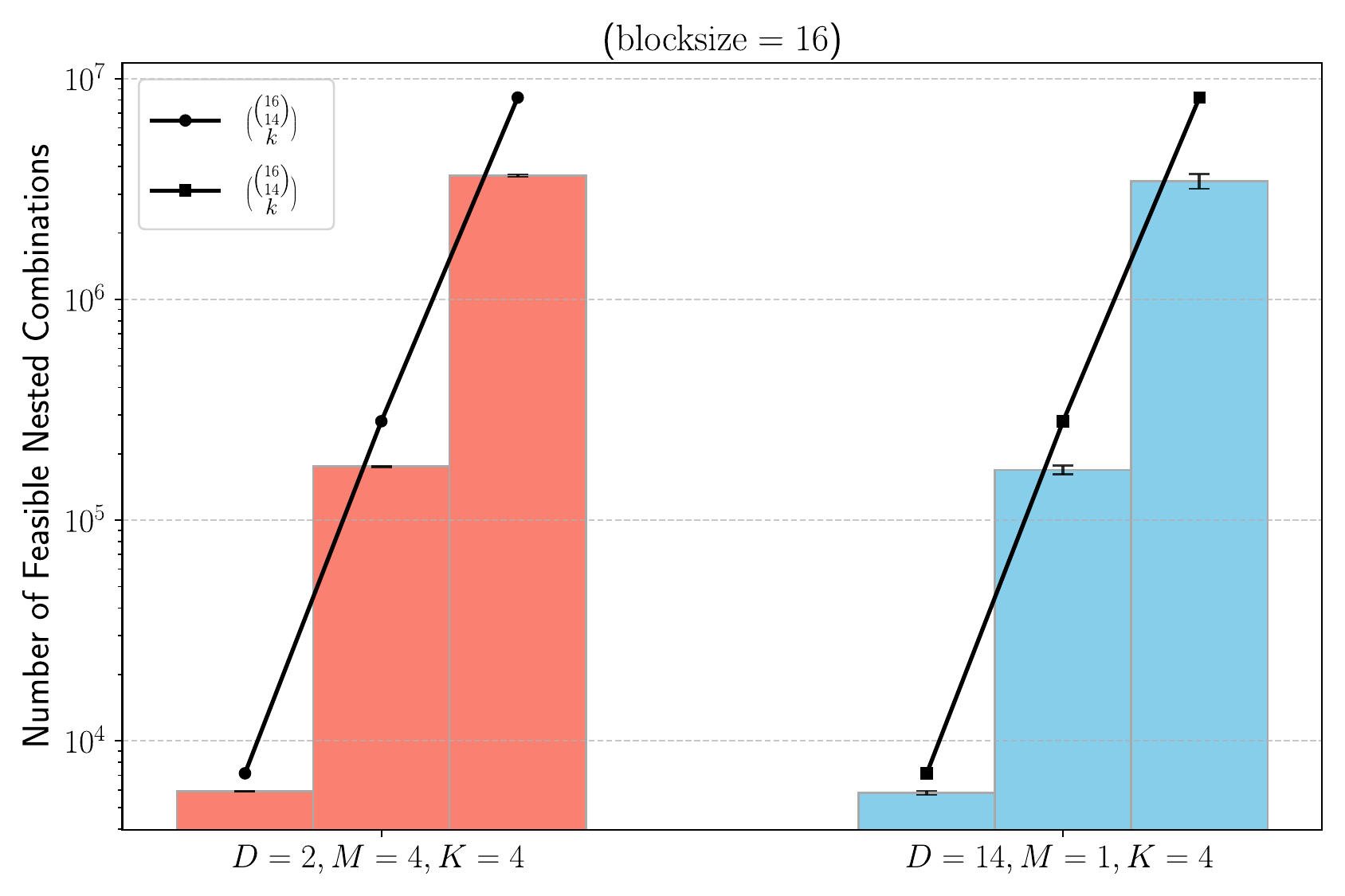}\includegraphics[viewport=10bp 0bp 850bp 500bp,clip,scale=0.16]{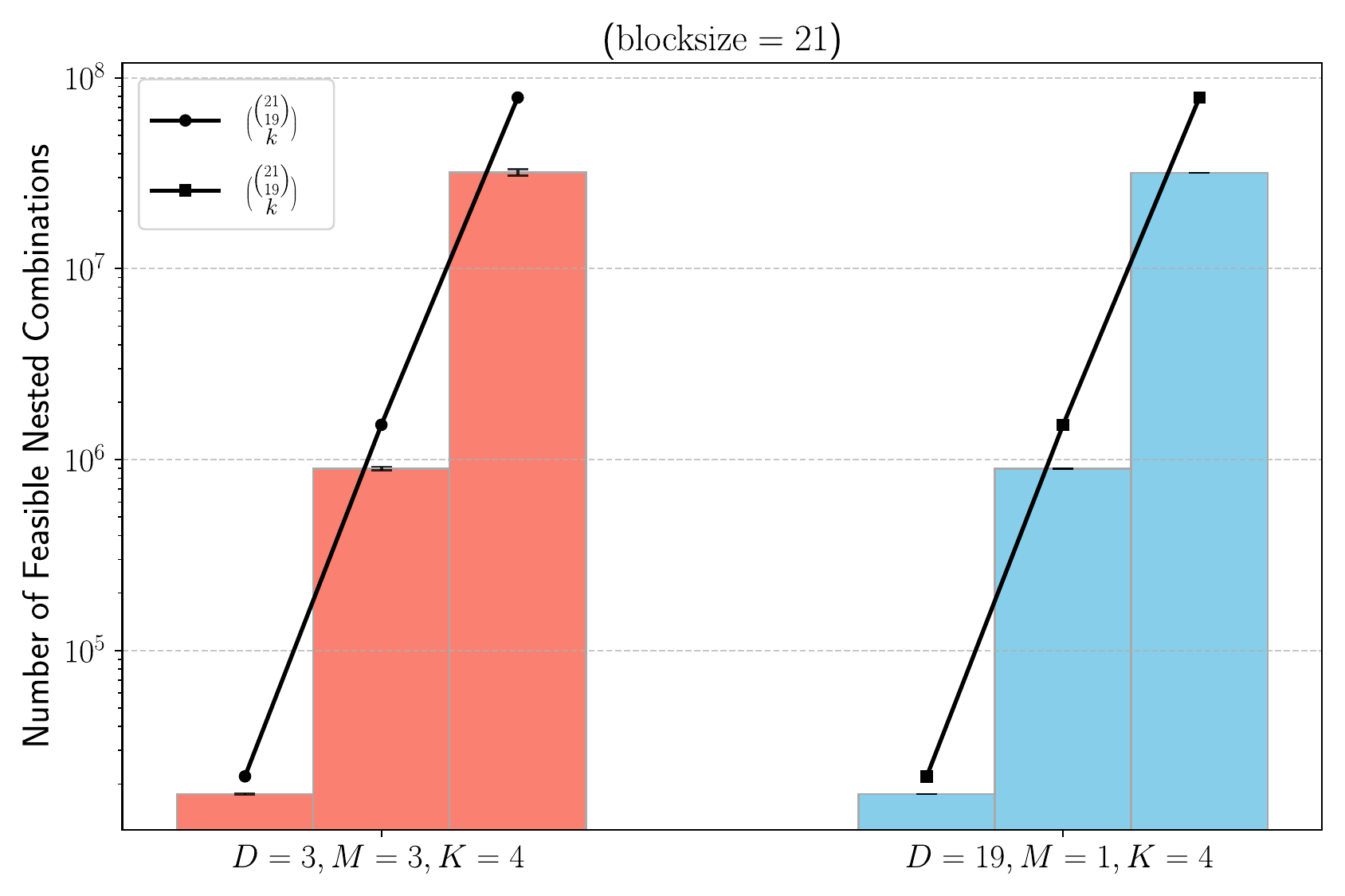}
		\par\end{centering}
	\caption{Combinatorial complexity of feasible nested combinations with varying
		$K$. Each bar corresponds to a fixed $K$ starts from $K=2$ (left-most
		bar) to the value of $K$ labeled below each group of bars.\label{fig:Combinatorial-Complexity--dual}}
\end{figure}

Having analyzed the combinatorial complexity of hyperplane splitting
rules, we now extend the analysis to hypersurface rules by comparing
their complexity with that of hyperplanes in the same dimension. As
discussed, a hypersurface defined by a degree-$M$ polynomial can
be represented as a hyperplane embedded in a space of dimension $G=\left(\begin{array}{c}
	D+M\\
	D
\end{array}\right)-1$ dimensional space.

For instance, when $D=4$ and $M=2$ we obtain $G=14$. Our goal is
to determine whether hypersurfaces in $\mathbb{R}^{D}$ (corresponding
to hyperplanes in $\mathbb{R}^{G}$) exhibit different combinatorial
complexity compared to hyperplanes. As illustrated in Figure \ref{fig:Combinatorial-Complexity--dual},
the degree-$M$ hypersurface in $\mathbb{R}^{D}$ shows slightly higher
combinatorial complexity than the corresponding hyperplane in $\mathbb{R}^{G}$,
This increase may stem from the greater expressivity of hypersurfaces
relative to hyperplanes.

\subsection{Computational experiments with synthetic datasets\label{subsec: synthetic dataset}}

In this section, we evaluate the performance of the HODT model on
a variety of synthetically generated datasets. The aim is to assess
how effectively a more flexible model can recover the underlying ground
truth, which is itself generated by a decision tree model.

Our experiments build on those of \citet{murthy1995decision} and
\citet{bertsimas2017optimal}. While their datasets were generated
using axis-parallel decision trees, we generalize the setting by using
hyperplane decision trees. Consequently, the underlying partitions
in our datasets are polygonal regions. Specifically, we generate synthetic
datasets from randomly constructed hyperplane decision trees, and
then compare the performance of different algorithms in inducing trees
that approximate the ground truth.

To construct the ground truth, we generate a decision tree of a specified
size by choosing splits at random. The leaves of the tree are assigned
unique labels so that no two leaves share the same label, ensuring
the tree is the minimal representation of the ground truth. Training
and test datasets are then generated by sampling each data point $x$
uniformly at random and assigning its label according to the ground
truth tree. In each experiment, \textbf{50} random trees are generated
as ground truths, producing 50 corresponding training-test dataset
pairs. The training set size varies across experiments, while the
test set size is fixed a $2^{d}\times\left(D-1\right)\times500$,
where $d$ is the depth of the tree and $D$ s the data dimensionality.

Following \citet{murthy1995decision,bertsimas2017optimal}, we evaluate
tree quality using six metrics:
\begin{itemize}
	\item \textbf{Training accuracy}: Accuracy on the training set.
	\item \textbf{Test accuracy}: Accuracy on the test set.
	\item \textbf{Tree size}: Number of branch nodes.
	\item \textbf{Maximum depth}: Maximum depth of the leaf nodes.
	\item \textbf{Average depth}: Mean depth of leaf nodes.
	\item \textbf{Expected depth}: Average depth of leaf nodes weighted by the
	proportion of the test set samples assigned to each leaf.
\end{itemize}
We also follow \citet{bertsimas2017optimal} in examining the effect
of five sources of variation or noise: 1) Ground-truth tree size,
2)Training data size, 3)Label noise, 4) Feature noise, 5) Data dimensionality.

Since \citet{bertsimas2017optimal}'s implementation is not publicly
available, and replicating their construction is difficult, so we
benchmark against three widely used axis-parallel decision tree algorithms:
CART-depth (depth-constrained CART algorithm), CART-size (size-constrained
CART algorithm), and the state-of-the-art optimal axis-parallel ODT
algorithm—ConTree \citep{brița2025optimal} (which only allows tree
depth to be specified). For all experiments in this section, we employ
the $\mathit{hodtCoreset}$ algorithm for generating accurate hypersurface
decision tree.

\paragraph{Effect of tree size}

\begin{table}
	\centering\scriptsize
	\renewcommand{\arraystretch}{1.1}
	\begin{tabular}{
			>{\raggedright\arraybackslash}p{0.8cm}  
			>{\centering\arraybackslash}p{1.8cm}
			>{\centering\arraybackslash}p{1.5cm}
			>{\centering\arraybackslash}p{1.5cm}
			>{\centering\arraybackslash}p{1.3cm}
			>{\centering\arraybackslash}p{1.3cm}
			>{\centering\arraybackslash}p{1.3cm}
			>{\centering\arraybackslash}p{1.3cm}
		}
		\hline
		\multirow{2}{1.5cm}{\makecell[l]{Tree size\\
				($K$)}}
		& \multirow{2}{*}{Method} 
		& \multirow{2}{*}{\makecell[t]{Train\\Acc (\%)}} 
		& \multirow{2}{*}{\makecell[t]{Test\\Acc (\%)}} 
		& \multirow{2}{*}{Tree size} 
		& Depth & & \tabularnewline
		\cline{6-8}
		& & & & & Maximum & Average & Expected\tabularnewline\\
		
		\Xhline{0.9pt}\\
		
		\makecell[l]{2\\ \\ \\ \\ \\} & %
		\begin{tabular}{c}
			CART-depth-2\tabularnewline
			CART-size-4\tabularnewline
			ConTree-2\tabularnewline
			HODT\tabularnewline
			Ground Truth\tabularnewline
		\end{tabular} & %
		\begin{tabular}{c}
			89.12 (0.04)\tabularnewline
			93.82 (3.01)\tabularnewline
			91.72 (3.83)\tabularnewline
			100 (0.00)\tabularnewline
			100 (0.00)\tabularnewline
		\end{tabular} & %
		\begin{tabular}{c}
			82.61 (0.04)\tabularnewline
			86.12 (3.73)\tabularnewline
			84.57 (4.03)\tabularnewline
			93.50 (3.21)\tabularnewline
			100 (0.00)\tabularnewline
		\end{tabular} & %
		\begin{tabular}{c}
			2.92 (0.27)\tabularnewline
			4.00 (0.00)\tabularnewline
			3.00 (0.00)\tabularnewline
			2.00 (0.00)\tabularnewline
			2.00 (0.00)\tabularnewline
		\end{tabular} & %
		\begin{tabular}{c}
			2.00 (0.00)\tabularnewline
			3.18 (0.38)\tabularnewline
			2.00 (0.00)\tabularnewline
			2.00 (0.00)\tabularnewline
			2.00 (0.00)\tabularnewline
		\end{tabular} & %
		\begin{tabular}{c}
			1.97 (0.09)\tabularnewline
			2.49 (0.16)\tabularnewline
			2.00 (0.00)\tabularnewline
			1.67 (1.20)\tabularnewline
			1.67 (1.20)\tabularnewline
		\end{tabular} & %
		\begin{tabular}{c}
			1.96 (0.15)\tabularnewline
			2.15 (0.20)\tabularnewline
			2.00 (0.00)\tabularnewline
			1.65 (0.16)\tabularnewline
			1.66 (0.16)\tabularnewline
		\end{tabular}\tabularnewline
		\makecell[l]{3\\ \\ \\ \\ \\} & %
		\begin{tabular}{c}
			CART-depth-3\tabularnewline
			CART-size-6\tabularnewline
			ConTree-3\tabularnewline
			HODT\tabularnewline
			Ground Truth\tabularnewline
		\end{tabular} & %
		\begin{tabular}{c}
			90.72 (047)\tabularnewline
			93.88 (3.89)\tabularnewline
			95.98 (2.93)\tabularnewline
			100 (0.00)\tabularnewline
			100 (0.00)\tabularnewline
		\end{tabular} & %
		\begin{tabular}{c}
			81.30 (5.71)\tabularnewline
			83.92 (5.55)\tabularnewline
			85.25 (4.96)\tabularnewline
			94.91 (2.36)\tabularnewline
			100 (0.00)\tabularnewline
		\end{tabular} & %
		\begin{tabular}{c}
			5.90 (0.00)\tabularnewline
			6.00 (0.00)\tabularnewline
			6.94 (0.24)\tabularnewline
			3.00 (0.00)\tabularnewline
			3.00 (0.00)\tabularnewline
		\end{tabular} & %
		\begin{tabular}{c}
			3.00 (0.00)\tabularnewline
			4.02 (0.65)\tabularnewline
			3.00 (0.00)\tabularnewline
			3.00 (0.00)\tabularnewline
			2.72 (0.45)\tabularnewline
		\end{tabular} & %
		\begin{tabular}{c}
			2.83 (0.14)\tabularnewline
			3.10 (0.24)\tabularnewline
			2.99 (0.03)\tabularnewline
			2.25 (0.00)\tabularnewline
			2.18 (0.11)\tabularnewline
		\end{tabular} & %
		\begin{tabular}{c}
			2.77 (0.19)\tabularnewline
			2.71 (0.20)\tabularnewline
			2.99 (0.04)\tabularnewline
			2.18 (0.25)\tabularnewline
			2.07 (0.24)\tabularnewline
		\end{tabular}\tabularnewline
		\makecell[l]{4\\ \\ \\ \\ \\} & %
		\begin{tabular}{c}
			CART-depth-3\tabularnewline
			CART-size-8\tabularnewline
			ConTree-3\tabularnewline
			HODT\tabularnewline
			Ground Truth\tabularnewline
		\end{tabular} & %
		\begin{tabular}{c}
			87.98 (5.26)\tabularnewline
			93.90 (3.93)\tabularnewline
			93.12 (4.02)\tabularnewline
			97.46 (1.70)\tabularnewline
			100 (0.00)\tabularnewline
		\end{tabular} & %
		\begin{tabular}{c}
			77.66 (5.38)\tabularnewline
			81.93 (4.93)\tabularnewline
			81.51 (4.78)\tabularnewline
			89.89 (3.71)\tabularnewline
			100 (0.00)\tabularnewline
		\end{tabular} & %
		\begin{tabular}{c}
			6.12 (90.86)\tabularnewline
			8.00 (0.00)\tabularnewline
			7.00 (0.00)\tabularnewline
			4.00 (0.00)\tabularnewline
			4.00 (0.00)\tabularnewline
		\end{tabular} & %
		\begin{tabular}{c}
			3.00 (0.00)\tabularnewline
			4.54 (0.57)\tabularnewline
			3.00 (0.00)\tabularnewline
			3.52 (0.50)\tabularnewline
			3.52 (0.50)\tabularnewline
		\end{tabular} & %
		\begin{tabular}{c}
			2.87 (0.14)\tabularnewline
			3.49 (0.21)\tabularnewline
			3.00 (0.00)\tabularnewline
			2.63 (0.19)\tabularnewline
			2.63 (0.19)\tabularnewline
		\end{tabular} & %
		\begin{tabular}{c}
			2.86 (0.15)\tabularnewline
			3.13 (0.21)\tabularnewline
			3.00 (0.00)\tabularnewline
			2.60 (0.31)\tabularnewline
			2.63 (0.33)\tabularnewline
		\end{tabular}\tabularnewline
		\hline 
	\end{tabular}\caption{The effect of tree size of ground truth tree. No noise in data, training
		size = 100. \label{tab: the effect of tree size}}
\end{table}

In our first experiments, we evaluated the effectiveness of each method
as problem complexity increased. We fixed the training set size at
$N=100$ and $D=2$, while varying the ground truth size $K$ from
2 to 4. Table \ref{tab: the effect of tree size} presents the results.
Since our goal was to solve the size-constrained ODT problem, we set
the depth of the depth-constrained algorithms as $d=\left\lceil \log_{2}\left(K\times D\right)+1\right\rceil $
for the CART-depth and ConTree algorithms. CART-size constrained the
tree size to $K\times D$. Note that a depth $d$ tree has at most
$2^{d}-1$ tree size.

The results show that our algorithm has a very high probability of
finding the optimal solution (100\% for $K=2$ and $K=3$) and HODT
significantly outperforms the other methods. Although all methods
show a decrease in both training and test accuracy as $K$ increase
to 4, this decline is expected due to the increasing problem complexity.
Even so, HODT and ConTree are more robust to changes in tree size.
When $K$ increases to 4, CART-size and CART-depth exhibit a larger
reduction in out-of-sample accuracy (4.19\% and 4.95\%, respectively)
compared to ConTree and HODT, which show smaller decreases (3.06\%
and 3.61\%). Interestingly, although the optimal axis-parallel decision
tree obtained by the ConTree algorithm is more accurate than CART-depth
under the same depth constraint, ConTree often fully exploits the
given depth, resulting in nearly maximal tree sizes in most cases.
Consequently, ConTree typically produces larger trees than CART-depth.
However, CART-size, by allowing just one additional leaf, achieves
better solutions than ConTree by providing greater flexibility in
tree depth.

In summary, the experiments demonstrate that HODT is not only more
accurate than CART and ConTree across varying tree sizes and depths,
but also closely matches the ground truth across all measures. In
contrast, axis-parallel trees learned using CART and ConTree are significantly
larger than HODT trees and produce substantially worse solutions.

\paragraph{Effect of data size}

\begin{table}
	\centering\scriptsize
	\renewcommand{\arraystretch}{1.1}
	\begin{tabular}{
			>{\raggedright\arraybackslash}p{0.8cm}  
			>{\centering\arraybackslash}p{1.8cm}
			>{\centering\arraybackslash}p{1.5cm}
			>{\centering\arraybackslash}p{1.5cm}
			>{\centering\arraybackslash}p{1.3cm}
			>{\centering\arraybackslash}p{1.3cm}
			>{\centering\arraybackslash}p{1.3cm}
			>{\centering\arraybackslash}p{1.3cm}
		}
		\hline
		\multirow{2}{1.5cm}{\makecell[l]{Training \\
				set size}}
		& \multirow{2}{*}{Method} 
		& \multirow{2}{*}{\makecell[t]{Train\\Acc (\%)}} 
		& \multirow{2}{*}{\makecell[t]{Test\\Acc (\%)}} 
		& \multirow{2}{*}{Tree size} 
		& Depth & & \tabularnewline
		\cline{6-8}
		& & & & & Maximum & Average & Expected\tabularnewline\\
		
		\Xhline{0.9pt}\\
		\makecell[l]{100\\ \\ \\ \\ \\}& %
		\begin{tabular}{c}
			CART-depth-2\tabularnewline
			CART-size-4\tabularnewline
			ConTree-2\tabularnewline
			HODT\tabularnewline
			Ground Truth\tabularnewline
		\end{tabular} & %
		\begin{tabular}{c}
			89.76 (5.11)\tabularnewline
			97.62 (2.51)\tabularnewline
			93.52 (3.53)\tabularnewline
			100 (0.00)\tabularnewline
			100 (0.00)\tabularnewline
		\end{tabular} & %
		\begin{tabular}{c}
			75.46 (15.70)\tabularnewline
			80.26 (16.80)\tabularnewline
			78.62 (16.14)\tabularnewline
			84.38 (19.27)\tabularnewline
			100 (0.00)\tabularnewline
		\end{tabular} & %
		\begin{tabular}{c}
			2.88 (0.32)\tabularnewline
			4.00 (0.00)\tabularnewline
			3.00 (0.00)\tabularnewline
			2.00 (0.00)\tabularnewline
			0.00 (0.00)\tabularnewline
		\end{tabular} & %
		\begin{tabular}{c}
			2.00 (0.00)\tabularnewline
			3.92 (0.69)\tabularnewline
			2.00 (0.00)\tabularnewline
			2.00 (0.00)\tabularnewline
			2.00 (0.00)\tabularnewline
		\end{tabular} & %
		\begin{tabular}{c}
			1.96 (0.11)\tabularnewline
			3.01 (0.30)\tabularnewline
			2.00 (0.00)\tabularnewline
			1.67 (0.00)\tabularnewline
			1.67 (0.00)\tabularnewline
		\end{tabular} & %
		\begin{tabular}{c}
			1.96 (0.12)\tabularnewline
			2.52 (0.33)\tabularnewline
			2.00 (0.00)\tabularnewline
			1.66 (0.25)\tabularnewline
			1.70 (0.25)\tabularnewline
		\end{tabular}\tabularnewline
		\makecell[l]{200\\ \\ \\ \\ \\} & %
		\begin{tabular}{c}
			CART-depth-2\tabularnewline
			CART-size-4\tabularnewline
			ConTree-2\tabularnewline
			HODT\tabularnewline
			Ground Truth\tabularnewline
		\end{tabular} & %
		\begin{tabular}{c}
			89.00 (4.49)\tabularnewline
			94.49 (3.02)\tabularnewline
			92.87 (3.38)\tabularnewline
			99.94 (0.16)\tabularnewline
			100 (0.00)\tabularnewline
		\end{tabular} & %
		\begin{tabular}{c}
			84.62 (5.23)\tabularnewline
			89.57 (4.04)\tabularnewline
			88.51 (3.99)\tabularnewline
			97.97 (1.01)\tabularnewline
			100 (0.00)\tabularnewline
		\end{tabular} & %
		\begin{tabular}{c}
			2.92 (0.27)\tabularnewline
			4.00 (0.00)\tabularnewline
			3.00 (0.00)\tabularnewline
			2.00 (0.00)\tabularnewline
			2.00 (0.00)\tabularnewline
		\end{tabular} & %
		\begin{tabular}{c}
			2.00 (0.00)\tabularnewline
			3.26 (0.44)\tabularnewline
			2.00 (0.00)\tabularnewline
			2.00 (0.00)\tabularnewline
			2.00 (0.00)\tabularnewline
		\end{tabular} & %
		\begin{tabular}{c}
			1.97 (0.09)\tabularnewline
			2.50 (0.17)\tabularnewline
			2.00 (0.00)\tabularnewline
			1.67 (0.00)\tabularnewline
			1.67 (0.00)\tabularnewline
		\end{tabular} & %
		\begin{tabular}{c}
			1.98 (0.07)\tabularnewline
			2.17 (0.21)\tabularnewline
			2.00 (0.00)\tabularnewline
			1.71 (0.25)\tabularnewline
			1.70 (0.25)\tabularnewline
		\end{tabular}\tabularnewline
		\makecell[l]{400\\ \\ \\ \\ \\}& %
		\begin{tabular}{c}
			CART-depth-2\tabularnewline
			CART-size-4\tabularnewline
			ConTree-2\tabularnewline
			HODT\tabularnewline
			Ground Truth\tabularnewline
		\end{tabular} & %
		\begin{tabular}{c}
			87.45 (4.69)\tabularnewline
			92.58 (3.48)\tabularnewline
			91.44 (3.37)\tabularnewline
			99.85 (0.21)\tabularnewline
			100 (0.00)\tabularnewline
		\end{tabular} & %
		\begin{tabular}{c}
			84.62 (4.90)\tabularnewline
			89.35 (4.08)\tabularnewline
			88.78 (4.00)\tabularnewline
			98.98 (0.58)\tabularnewline
			100 (0.00)\tabularnewline
		\end{tabular} & %
		\begin{tabular}{c}
			2.98 (0.14)\tabularnewline
			4.00 (0.00)\tabularnewline
			3.00 (0.00)\tabularnewline
			2.00 (0.00)\tabularnewline
			2.00 (0.00)\tabularnewline
		\end{tabular} & %
		\begin{tabular}{c}
			2.00 (0.00)\tabularnewline
			3.24 (0.43)\tabularnewline
			2.00 (0.00)\tabularnewline
			2.00 (0.00)\tabularnewline
			2.00 (0.00)\tabularnewline
		\end{tabular} & %
		\begin{tabular}{c}
			1.99 (0.05)\tabularnewline
			2.53 (0.17)\tabularnewline
			2.00 (0.00)\tabularnewline
			1.67 (0.00)\tabularnewline
			1.67 (0.00)\tabularnewline
		\end{tabular} & %
		\begin{tabular}{c}
			1.99 (0.05)\tabularnewline
			2.15 (0.24)\tabularnewline
			2.00 (0.00)\tabularnewline
			1.71 (0.25)\tabularnewline
			1.70 (0.25)\tabularnewline
		\end{tabular}\tabularnewline
		\makecell[l]{800\\ \\ \\ \\ \\} & %
		\begin{tabular}{c}
			CART-depth-2\tabularnewline
			CART-size-4\tabularnewline
			ConTree-2\tabularnewline
			HODT\tabularnewline
			Ground Truth\tabularnewline
		\end{tabular} & %
		\begin{tabular}{c}
			86.91 (4.63)\tabularnewline
			92.21 (3.43)\tabularnewline
			91.00 (3.59)\tabularnewline
			99.87 (0.14)\tabularnewline
			100 (0.00)\tabularnewline
		\end{tabular} & %
		\begin{tabular}{c}
			85.14 (4.96)\tabularnewline
			90.12 (3.82)\tabularnewline
			89.18 (3.94)\tabularnewline
			99.40 (0.36)\tabularnewline
			100 (0.00)\tabularnewline
		\end{tabular} & %
		\begin{tabular}{c}
			3.00 (0.00)\tabularnewline
			4.00 (0.00)\tabularnewline
			3.00 (0.00)\tabularnewline
			2.00 (0.00)\tabularnewline
			2.00 (0.00)\tabularnewline
		\end{tabular} & %
		\begin{tabular}{c}
			2.00 (0.00)\tabularnewline
			3.3 (0.00)\tabularnewline
			2.00 (0.00)\tabularnewline
			2.00 (0.00)\tabularnewline
			2.00 (0.00)\tabularnewline
		\end{tabular} & %
		\begin{tabular}{c}
			2.00 (0.00)\tabularnewline
			2.53 (0.18)\tabularnewline
			2.00 (0.00)\tabularnewline
			1.67 (0.00)\tabularnewline
			1.67 (0.00)\tabularnewline
		\end{tabular} & %
		\begin{tabular}{c}
			2.00 (0.00)\tabularnewline
			2.13 (0.22)\tabularnewline
			2.00 (0.00)\tabularnewline
			1.69 (0.25)\tabularnewline
			1.70 (0.25)\tabularnewline
		\end{tabular}\tabularnewline
		\makecell[l]{1600\\ \\ \\ \\ \\} & %
		\begin{tabular}{c}
			CART-depth-2\tabularnewline
			CART-size-4\tabularnewline
			ConTree-2\tabularnewline
			HODT\tabularnewline
			Ground Truth\tabularnewline
		\end{tabular} & %
		\begin{tabular}{c}
			85.66 (5.05)\tabularnewline
			91.48 (3.69)\tabularnewline
			90.13 (3.69)\tabularnewline
			99.87 (0.12)\tabularnewline
			100 (0.00)\tabularnewline
		\end{tabular} & %
		\begin{tabular}{c}
			84.97 (5.15)\tabularnewline
			90.53 (3.71)\tabularnewline
			89.38 (3.84)\tabularnewline
			99.65 (0.22)\tabularnewline
			100 (0.00)\tabularnewline
		\end{tabular} & %
		\begin{tabular}{c}
			3.00 (0.00)\tabularnewline
			4.00 (0.00)\tabularnewline
			3.00 (0.00)\tabularnewline
			2.00 (0.00)\tabularnewline
			2.00 (0.00)\tabularnewline
		\end{tabular} & %
		\begin{tabular}{c}
			2.00 (0.00)\tabularnewline
			3.18 (0.38)\tabularnewline
			2.00 (0.00)\tabularnewline
			2.00 (0.00)\tabularnewline
			2.00 (0.00)\tabularnewline
		\end{tabular} & %
		\begin{tabular}{c}
			2.00 (0.00)\tabularnewline
			2.51 (0.16)\tabularnewline
			2.00 (0.00)\tabularnewline
			1.67 (0.00)\tabularnewline
			1.67 (0.00)\tabularnewline
		\end{tabular} & %
		\begin{tabular}{c}
			2.00 (0.00)\tabularnewline
			2.16 (0.26)\tabularnewline
			2.00 (0.00)\tabularnewline
			1.70 (0.24)\tabularnewline
			1.70 (0.25)\tabularnewline
		\end{tabular}\tabularnewline
		\hline 
	\end{tabular}\caption{The effect of training data size. Ground truth-tree size $K=2$. \label{tab: the effect of training data size}}
\end{table}

The second set of experiments demonstrates the effect of training
data size relative to problem complexity. We increased the size of
the training set while keeping the ground truth size fixed at $K=2$
and the dimension at $D=2$. ConTree and CART-depth were restricted
to depth 2, while CART-size was restricted to $K\times D=4$.

No noise was added to the data. Table \ref{tab: the effect of training data size}
shows that out-of-sample accuracy increased for all methods as the
training set grew. The improvement was most pronounced for HODT, which
achieved a 15.27\% increase in out-of-sample accuracy, whereas ConTree,
CART-size, and CART-depth improved by 10.76\%, 10.27\%, and 9.51\%,
respectively. Notably, CART-depth performed significantly worse than
the other methods.

These results demonstrate that even in data-poor environments, optimizing
an appropriate model (HODT in this case) substantially improves out-of-sample
performance. Even when axis-parallel trees do not match the ground
truth, optimizing the solution to optimality still produces significant
differences. The optimal algorithm (ConTree) achieves much higher
test accuracy than the approximate method (CART-depth) on both training
and test datasets. This observation is consistent with \citet{bertsimas2017optimal},
and our experiments provide additional evidence for scenarios in which
the ground truth does not align with the chosen model. These results
offer clear support against the notion that optimal methods necessarily
overfit the training data in data-scarce settings.

\paragraph{Effect of data dimension}

\begin{table}
	\centering\scriptsize
	\renewcommand{\arraystretch}{1.1}
	\begin{tabular}{
			>{\raggedright\arraybackslash}p{1cm}  
			>{\centering\arraybackslash}p{1.8cm}
			>{\centering\arraybackslash}p{1.5cm}
			>{\centering\arraybackslash}p{1.5cm}
			>{\centering\arraybackslash}p{1.3cm}
			>{\centering\arraybackslash}p{1.3cm}
			>{\centering\arraybackslash}p{1.3cm}
			>{\centering\arraybackslash}p{1.3cm}
		}
		\hline
		\multirow{2}{0.8cm}{\makecell[l]{Data \\
				dimension}}
		& \multirow{2}{*}{Method} 
		& \multirow{2}{*}{\makecell[t]{Train\\Acc (\%)}} 
		& \multirow{2}{*}{\makecell[t]{Test\\Acc (\%)}} 
		& \multirow{2}{*}{Tree size} 
		& Depth & & \tabularnewline
		\cline{6-8}
		& & & & & Maximum & Average & Expected\tabularnewline\\
		
		\Xhline{0.9pt}\\
		\makecell[l]{2\\ \\ \\ \\ \\}  & %
		\begin{tabular}{c}
			CART-depth-2\tabularnewline
			CART-size-4\tabularnewline
			ConTree-2\tabularnewline
			HODT\tabularnewline
			Ground Truth\tabularnewline
		\end{tabular} & %
		\begin{tabular}{c}
			89.00 (4.49)\tabularnewline
			94.49 (3.02)\tabularnewline
			92.87 (3.38)\tabularnewline
			99.94 (0.16)\tabularnewline
			100 (0.00)\tabularnewline
		\end{tabular} & %
		\begin{tabular}{c}
			84.62 (5.23)\tabularnewline
			89.57 (4.04)\tabularnewline
			88.51 (3.99)\tabularnewline
			97.97 (1.01)\tabularnewline
			100 (0.00)\tabularnewline
		\end{tabular} & %
		\begin{tabular}{c}
			2.92 (0.27)\tabularnewline
			4.00 (0.00)\tabularnewline
			3.00 (0.00)\tabularnewline
			2.00 (0.00)\tabularnewline
			2.00 (0.00)\tabularnewline
		\end{tabular} & %
		\begin{tabular}{c}
			2.00 (0.00)\tabularnewline
			3.26 (0.44)\tabularnewline
			2.00 (0.00)\tabularnewline
			2.00 (0.00)\tabularnewline
			2.00 (0.00)\tabularnewline
		\end{tabular} & %
		\begin{tabular}{c}
			1.97 (0.09)\tabularnewline
			2.53 (0.17)\tabularnewline
			2.00 (0.00)\tabularnewline
			1.67 (0.00)\tabularnewline
			1.67 (0.00)\tabularnewline
		\end{tabular} & %
		\begin{tabular}{c}
			1.98 (0.07)\tabularnewline
			2.17 (0.21)\tabularnewline
			2.0 (0.00)\tabularnewline
			1.71 (0.25)\tabularnewline
			1.70 (0.25)\tabularnewline
		\end{tabular}\tabularnewline
		\makecell[l]{4\\ \\ \\ \\ \\}  & %
		\begin{tabular}{c}
			CART-depth-2\tabularnewline
			CART-size-4\tabularnewline
			ConTree-2\tabularnewline
			HODT\tabularnewline
			Ground Truth\tabularnewline
		\end{tabular} & %
		\begin{tabular}{c}
			79.94 (4.53)\tabularnewline
			84.17 (4.41)\tabularnewline
			82.80 (4.16)\tabularnewline
			95.13 (1.38)\tabularnewline
			100 (0.00)\tabularnewline
		\end{tabular} & %
		\begin{tabular}{c}
			73.97 (4.74)\tabularnewline
			76.72 (4.48)\tabularnewline
			76.16 (4.82)\tabularnewline
			90.97 (2.55)\tabularnewline
			100 (0.00)\tabularnewline
		\end{tabular} & %
		\begin{tabular}{c}
			3.00 (0.00)\tabularnewline
			4.00 (0.00)\tabularnewline
			3.00 (0.00)\tabularnewline
			2.00 (0.00)\tabularnewline
			2.00 (0.00)\tabularnewline
		\end{tabular} & %
		\begin{tabular}{c}
			2.00 (0.00)\tabularnewline
			3.10 (0.3)\tabularnewline
			2.00 (0.00)\tabularnewline
			2.00 (0.00)\tabularnewline
			2.00 (0.00)\tabularnewline
		\end{tabular} & %
		\begin{tabular}{c}
			2.00 (0.00)\tabularnewline
			2.48 (0.13)\tabularnewline
			2.00 (0.00)\tabularnewline
			1.67 (0.00)\tabularnewline
			1.67 (0.00)\tabularnewline
		\end{tabular} & %
		\begin{tabular}{c}
			2.00 (0.00)\tabularnewline
			2.24 (0.18)\tabularnewline
			2.00 (0.00)\tabularnewline
			1.67 (0.13)\tabularnewline
			1.67 (0.12)\tabularnewline
		\end{tabular}\tabularnewline
		\makecell[l]{8\\ \\ \\ \\ \\} & %
		\begin{tabular}{c}
			CART-depth-2\tabularnewline
			CART-size-4\tabularnewline
			ConTree-2\tabularnewline
			HODT\tabularnewline
			Ground Truth\tabularnewline
		\end{tabular} & %
		\begin{tabular}{c}
			75.05 (5.72)\tabularnewline
			79.25 (5.66)\tabularnewline
			77.78 (5.36)\tabularnewline
			89.08 (4.26)\tabularnewline
			100 (0.00)\tabularnewline
		\end{tabular} & %
		\begin{tabular}{c}
			67.54 (7.08)\tabularnewline
			69.66 (6.75)\tabularnewline
			69.40 (6.37)\tabularnewline
			82.95 (5.70)\tabularnewline
			100 (0.00)\tabularnewline
		\end{tabular} & %
		\begin{tabular}{c}
			3.00 (0.00)\tabularnewline
			4.00 (0.00)\tabularnewline
			3.00 (0.00)\tabularnewline
			2.00 (0.00)\tabularnewline
			2.00 (0.00)\tabularnewline
		\end{tabular} & %
		\begin{tabular}{c}
			2.00 (0.00)\tabularnewline
			3.12 (0.325)\tabularnewline
			2.00 (0.00)\tabularnewline
			2.00 (0.00)\tabularnewline
			2.00 (0.00)\tabularnewline
		\end{tabular} & %
		\begin{tabular}{c}
			2.00 (0.00)\tabularnewline
			2.48 (0.14)\tabularnewline
			2.00 (0.00)\tabularnewline
			1.67 (0.00)\tabularnewline
			1.67 (0.00)\tabularnewline
		\end{tabular} & %
		\begin{tabular}{c}
			2.00 (0.00)\tabularnewline
			2.21 (0.22)\tabularnewline
			2.00 (0.00)\tabularnewline
			1.48 (0.12)\tabularnewline
			1.49 (0.13)\tabularnewline
		\end{tabular}\tabularnewline
		\makecell[l]{12\\ \\ \\ \\ \\} & %
		\begin{tabular}{c}
			CART-depth-2\tabularnewline
			CART-size-4\tabularnewline
			ConTree-2\tabularnewline
			HODT\tabularnewline
			Ground Truth\tabularnewline
		\end{tabular} & %
		\begin{tabular}{c}
			70.39 (3.82)\tabularnewline
			74.50 (4.00)\tabularnewline
			73.35 (3.49)\tabularnewline
			85.06 (5.19)\tabularnewline
			100 (0.00)\tabularnewline
		\end{tabular} & %
		\begin{tabular}{c}
			62.78 (5.20)\tabularnewline
			64.03 (4.99)\tabularnewline
			63.53 (4.78)\tabularnewline
			76.71 (6.47)\tabularnewline
			100 (0.00)\tabularnewline
		\end{tabular} & %
		\begin{tabular}{c}
			3.00 (0.00)\tabularnewline
			4.00 (0.00)\tabularnewline
			3.00 (0.00)\tabularnewline
			2.00 (0.00)\tabularnewline
			2.00 (0.00)\tabularnewline
		\end{tabular} & %
		\begin{tabular}{c}
			2.00 (0.00)\tabularnewline
			3.14 (0.35)\tabularnewline
			2.00 (0.00)\tabularnewline
			2.00 (0.00)\tabularnewline
			2.00 (0.00)\tabularnewline
		\end{tabular} & %
		\begin{tabular}{c}
			2.00 (0.00)\tabularnewline
			2.46 (0.14)\tabularnewline
			2.00 (0.00)\tabularnewline
			1.67 (0.00)\tabularnewline
			1.67 (0.00)\tabularnewline
		\end{tabular} & %
		\begin{tabular}{c}
			2.00 (0.00)\tabularnewline
			2.26 (0.13)\tabularnewline
			2.00 (0.00)\tabularnewline
			1.50 (0.11)\tabularnewline
			1.52 (0.10)\tabularnewline
		\end{tabular}\tabularnewline
	\end{tabular}
	\caption{Effects of dimensionality. Training size = 100. No noise. Ground truth
		tress are size 2.\label{fig:Effects-of-dimensionality.}}
\end{table}

The third set of experiments examines the effect of problem dimensionality
while keeping the tree size ($K=2$) and training set size ($N=100$)
constant. ConTree and CART-depth were fixed at depth 2, and CART-size
was fixed at size 4. Table \ref{fig:Effects-of-dimensionality.} shows
the effect of increasing the number of features for a fixed training
size and tree size.

Although HODT might be expected to suffer most from the increased
combinatorial complexity associated with higher dimensionality, it
remains the most robust method on both training and test datasets,
exhibiting the smallest decrease in accuracy. Increasing the number
of features significantly affects all methods: CART-depth experiences
decreases of approximately 18.61\% in training accuracy and 21.84\%
in test accuracy; CART-size decreases by 19.99\% (train) and 25.54\%
(test); ConTree decreases by 19.52\% (train) and 24.98\% (test); whereas
HODT decreases only by 14.88\% (train) to 21.26\% (test). This robustness
may be attributed to the effectiveness of $\mathit{hodtCoreset}$,
which efficiently explores configurations without being heavily affected
by combinatorial complexity.

Interestingly, our results for axis-parallel ODT algorithms on datasets
with hyperplane decision tree ground truth differ from those reported
by \citet{bertsimas2017optimal}. However, it is important to note
that our experimental setup differs from theirs. \citet{bertsimas2017optimal}
observed that the performance gap between axis-parallel ODT algorithms
and approximate CART increases in higher dimensions, with little difference
in lower dimensions. In contrast, our experiments show that at lower
dimensions, there is a significant difference between CART-depth and
ConTree (3.89\%), which decreases to 0.75\% when $D=12$.

\paragraph{Effect of noise on labels}

\begin{table}
	\centering\scriptsize
	\renewcommand{\arraystretch}{1.1}
	\begin{tabular}{
			>{\raggedright\arraybackslash}p{0.5cm}  
			>{\centering\arraybackslash}p{1.8cm}
			>{\centering\arraybackslash}p{1.5cm}
			>{\centering\arraybackslash}p{1.5cm}
			>{\centering\arraybackslash}p{1.3cm}
			>{\centering\arraybackslash}p{1.3cm}
			>{\centering\arraybackslash}p{1.3cm}
			>{\centering\arraybackslash}p{1.3cm}
		}
		\hline
		\multirow{2}{0.8cm}{\makecell[l] {Noise \\
				level}}
		& \multirow{2}{*}{Method} 
		& \multirow{2}{*}{\makecell[t]{Train\\Acc (\%)}} 
		& \multirow{2}{*}{\makecell[t]{Test\\Acc (\%)}} 
		& \multirow{2}{*}{Tree size} 
		& Depth & & \tabularnewline
		\cline{6-8}
		& & & & & Maximum & Average & Expected\tabularnewline\\
		
		\Xhline{0.9pt}\\
		\makecell[l]{0\\ \\ \\ \\ \\}  & %
		\begin{tabular}{c}
			CART-depth-2\tabularnewline
			CART-size-4\tabularnewline
			ConTree-2\tabularnewline
			HODT\tabularnewline
			Ground Truth\tabularnewline
		\end{tabular} & %
		\begin{tabular}{c}
			90.42 (4.61)\tabularnewline
			95.19 (3.11)\tabularnewline
			93.49 (3.60)\tabularnewline
			99.62 (0.68)\tabularnewline
			100 (0.00)\tabularnewline
		\end{tabular} & %
		\begin{tabular}{c}
			87.25 (5.20)\tabularnewline
			91.21 (3.94)\tabularnewline
			89.99 (3.71)\tabularnewline
			98.06 (1.21)\tabularnewline
			100 (0.00)\tabularnewline
		\end{tabular} & %
		\begin{tabular}{c}
			2.92 (0.27)\tabularnewline
			4.0 (0.00)\tabularnewline
			2.98 (0.04)\tabularnewline
			2.00 (0.00)\tabularnewline
			2.00 (0.00)\tabularnewline
		\end{tabular} & %
		\begin{tabular}{c}
			2.00 (0.00)\tabularnewline
			3.18 (0.38)\tabularnewline
			2.00 (0.00)\tabularnewline
			2.00 (0.00)\tabularnewline
			2.00 (0.00)\tabularnewline
		\end{tabular} & %
		\begin{tabular}{c}
			1.97 (0.09)\tabularnewline
			2.49 (0.16)\tabularnewline
			1.99 (0.05)\tabularnewline
			1.67 (0.00)\tabularnewline
			1.67 (0.00)\tabularnewline
		\end{tabular} & %
		\begin{tabular}{c}
			1.99 (0.03)\tabularnewline
			2.11 (0.23)\tabularnewline
			1.98 (0.11)\tabularnewline
			1.67 (0.28)\tabularnewline
			1.68 (0.28)\tabularnewline
		\end{tabular}\tabularnewline
		\makecell[l]{5\\ \\ \\ \\ \\}  & %
		\begin{tabular}{c}
			CART-depth-2\tabularnewline
			CART-size-4\tabularnewline
			ConTree-2\tabularnewline
			HODT\tabularnewline
			Ground Truth\tabularnewline
		\end{tabular} & %
		\begin{tabular}{c}
			86.29 (4.48)\tabularnewline
			90.64 (3.07)\tabularnewline
			89.07 (3.45)\tabularnewline
			95.00 (0.20)\tabularnewline
			95.00 (0.00)\tabularnewline
		\end{tabular} & %
		\begin{tabular}{c}
			87.15 (4.89)\tabularnewline
			91.01 (3.67)\tabularnewline
			89.87 (3.47)\tabularnewline
			97.20 (1.40)\tabularnewline
			100 (0.00)\tabularnewline
		\end{tabular} & %
		\begin{tabular}{c}
			2.94 (0.24)\tabularnewline
			4.0 (0.00)\tabularnewline
			3.00 (0.00)\tabularnewline
			2.00 (0.00)\tabularnewline
			2.00 (0.00)\tabularnewline
		\end{tabular} & %
		\begin{tabular}{c}
			2.00 (0.00)\tabularnewline
			3.2.00 (0.40)\tabularnewline
			2.00 (0.00)\tabularnewline
			2.00 (0.00)\tabularnewline
			2.00 (0.00)\tabularnewline
		\end{tabular} & %
		\begin{tabular}{c}
			1.98 (0.08)\tabularnewline
			2.50 (0.16)\tabularnewline
			2.00 (0.00)\tabularnewline
			1.68 (0.00)\tabularnewline
			1.67 (0.00)\tabularnewline
		\end{tabular} & %
		\begin{tabular}{c}
			1.99 (0.02)\tabularnewline
			2.10 (0.24)\tabularnewline
			2.00 (0.00)\tabularnewline
			1.62 (0.29)\tabularnewline
			1.68 (0.28)\tabularnewline
		\end{tabular}\tabularnewline
		\makecell[l]{10\\ \\ \\ \\ \\}  & %
		\begin{tabular}{c}
			CART-depth-2\tabularnewline
			CART-size-4\tabularnewline
			ConTree-2\tabularnewline
			HODT\tabularnewline
			Ground Truth\tabularnewline
		\end{tabular} & %
		\begin{tabular}{c}
			82.25 (3.78)\tabularnewline
			86.25 (2.67)\tabularnewline
			84.69 (3.09)\tabularnewline
			90.45 (0.49)\tabularnewline
			90.00 (0.00)\tabularnewline
		\end{tabular} & %
		\begin{tabular}{c}
			86.74 (5.36)\tabularnewline
			90.45 (4.07)\tabularnewline
			89.48 (3.96)\tabularnewline
			97.59 (1.21)\tabularnewline
			100 (0.00)\tabularnewline
		\end{tabular} & %
		\begin{tabular}{c}
			2.98 (0.14)\tabularnewline
			4.00 (0.00)\tabularnewline
			3.00 (0.00)\tabularnewline
			2.00 (0.00)\tabularnewline
			2.00 (0.00)\tabularnewline
		\end{tabular} & %
		\begin{tabular}{c}
			2.00 (0.00)\tabularnewline
			3.12 (0.32)\tabularnewline
			2.00 (0.00)\tabularnewline
			2.00 (0.00)\tabularnewline
			2.00 (0.00)\tabularnewline
		\end{tabular} & %
		\begin{tabular}{c}
			1.99 (0.05)\tabularnewline
			3.12 (0.32)\tabularnewline
			2.00 (0.00)\tabularnewline
			1.67 (0.00)\tabularnewline
			1.67 (0.00)\tabularnewline
		\end{tabular} & %
		\begin{tabular}{c}
			2.00 (0.01)\tabularnewline
			2.16 (0.26)\tabularnewline
			2.00 (0.00)\tabularnewline
			1.64 (0.30)\tabularnewline
			1.68 (0.28)\tabularnewline
		\end{tabular}\tabularnewline
		\makecell[l]{15\\ \\ \\ \\ \\}  & %
		\begin{tabular}{c}
			CART-depth-2\tabularnewline
			CART-size-4\tabularnewline
			ConTree-2\tabularnewline
			HODT\tabularnewline
			Ground Truth\tabularnewline
		\end{tabular} & %
		\begin{tabular}{c}
			77.90 (3.99)\tabularnewline
			81.80 (3.10)\tabularnewline
			80.53 (3.04)\tabularnewline
			85.69 (0.54)\tabularnewline
			84.99 (0.00)\tabularnewline
		\end{tabular} & %
		\begin{tabular}{c}
			85.99 (5.71)\tabularnewline
			90.19 (3.67)\tabularnewline
			89.10 (3.79)\tabularnewline
			97.10 (1.84)\tabularnewline
			100 (0.00)\tabularnewline
		\end{tabular} & %
		\begin{tabular}{c}
			2.98 (0.14)\tabularnewline
			4.00 (0.00)\tabularnewline
			3.00 (0.00)\tabularnewline
			2.00 (0.00)\tabularnewline
			2.00 (0.00)\tabularnewline
		\end{tabular} & %
		\begin{tabular}{c}
			2.00 (0.00)\tabularnewline
			3.22 (0.41)\tabularnewline
			2.00 (0.00)\tabularnewline
			2.00 (0.00)\tabularnewline
			2.00 (0.00)\tabularnewline
		\end{tabular} & %
		\begin{tabular}{c}
			1.99 (0.05)\tabularnewline
			2.51 (0.17)\tabularnewline
			2.00 (0.00)\tabularnewline
			1.67 (0.00)\tabularnewline
			1.67 (0.00)\tabularnewline
		\end{tabular} & %
		\begin{tabular}{c}
			2.00 (0.04)\tabularnewline
			2.19 (0.31)\tabularnewline
			2.00 (0.00)\tabularnewline
			1.63 (0.29)\tabularnewline
			1.68 (0.28)\tabularnewline
		\end{tabular}\tabularnewline
		\makecell[l]{20\\ \\ \\ \\ \\}  & %
		\begin{tabular}{c}
			CART-depth-2\tabularnewline
			CART-size-4\tabularnewline
			ConTree-2\tabularnewline
			HODT\tabularnewline
			Ground Truth\tabularnewline
		\end{tabular} & %
		\begin{tabular}{c}
			73.88 (3.16)\tabularnewline
			77.36 (2.91)\tabularnewline
			76.28 (2.81)\tabularnewline
			80.98 (0.69)\tabularnewline
			79.90 (0.00)\tabularnewline
		\end{tabular} & %
		\begin{tabular}{c}
			86.06 (4.95)\tabularnewline
			89.51 (4.59)\tabularnewline
			88.61 (4.43)\tabularnewline
			96.58 (2.06)\tabularnewline
			100 (0.00)\tabularnewline
		\end{tabular} & %
		\begin{tabular}{c}
			2.94 (0.24)\tabularnewline
			4.00 (0.00)\tabularnewline
			3.00 (0.00)\tabularnewline
			2.00 (0.00)\tabularnewline
			2.00 (0.00)\tabularnewline
		\end{tabular} & %
		\begin{tabular}{c}
			2.00 (0.00)\tabularnewline
			3.24 (0.43)\tabularnewline
			2.00 (0.00)\tabularnewline
			2.00 (0.00)\tabularnewline
			2.00 (0.00)\tabularnewline
		\end{tabular} & %
		\begin{tabular}{c}
			1.98 (0.08)\tabularnewline
			2.54 (0.17)\tabularnewline
			2.00 (0.00)\tabularnewline
			1.67 (0.00)\tabularnewline
			1.67 (0.00)\tabularnewline
		\end{tabular} & %
		\begin{tabular}{c}
			2.00 (0.01)\tabularnewline
			2.13 (0.34)\tabularnewline
			2.00 (0.00)\tabularnewline
			1.68 (0.27)\tabularnewline
			1.68 (0.28)\tabularnewline
		\end{tabular}\tabularnewline
		\makecell[l]{25\\ \\ \\ \\ \\}  & %
		\begin{tabular}{c}
			CART-depth-2\tabularnewline
			CART-size-4\tabularnewline
			ConTree-2\tabularnewline
			HODT\tabularnewline
			Ground Truth\tabularnewline
		\end{tabular} & %
		\begin{tabular}{c}
			70.35 (3.22)\tabularnewline
			73.19 (2.54)\tabularnewline
			72.26 (2.62)\tabularnewline
			76.37 (0.83)\tabularnewline
			75.00 (0.00)\tabularnewline
		\end{tabular} & %
		\begin{tabular}{c}
			84.59 (6.66)\tabularnewline
			87.78 (5.41)\tabularnewline
			87.06 (5.16)\tabularnewline
			95.67 (2.93)\tabularnewline
			100 (0.00)\tabularnewline
		\end{tabular} & %
		\begin{tabular}{c}
			3.00 (0.00)\tabularnewline
			4.00 (0.00)\tabularnewline
			3.00 (0.00)\tabularnewline
			2.00 (0.00)\tabularnewline
			2.00 (0.00)\tabularnewline
		\end{tabular} & %
		\begin{tabular}{c}
			2.00 (0.00)\tabularnewline
			3.26 (0.44)\tabularnewline
			2.00 (0.00)\tabularnewline
			2.00 (0.00)\tabularnewline
			2.00 (0.00)\tabularnewline
		\end{tabular} & %
		\begin{tabular}{c}
			2.00 (0.00)\tabularnewline
			2.52 (0.17)\tabularnewline
			2.00 (0.00)\tabularnewline
			1.67 (0.00)\tabularnewline
			1.67 (0.00)\tabularnewline
		\end{tabular} & %
		\begin{tabular}{c}
			2.00 (0.00)\tabularnewline
			2.24 (0.32)\tabularnewline
			2.00 (0.00)\tabularnewline
			1.67 (0.78)\tabularnewline
			1.68 (0.28)\tabularnewline
		\end{tabular}\tabularnewline
	\end{tabular}\caption{The effect of noise on labels. Training data size = 100. Ground truth
		trees are size 2.\label{tab: the effect of noise on labels}}
\end{table}

In the fourth set of experiments, we introduced noise in the labels.
As the noise level increased, it was noteworthy that HODT was able
to find solutions that outperformed the ground truth on the training
dataset. As before, we fixed $D=2$ and ($K=2$). Following the experimental
setup of \citet{bertsimas2017optimal} we added noise by increasing
the label of a random $k\%$ of the points by 1, where Table \ref{tab: the effect of noise on labels}
presents the results.

As the noise level increased, the accuracy of all methods tended to
decrease, with similar effects on out-of-sample performance up to
a noise level of 20\%. Beyond this point (from 20\% to 25\% noise),
differences became more pronounced: CART-depth, CART-size, and ConTree
decreased by 1.47\%, 1.73\%, and 1.55\%, respectively, whereas HODT
decreased by only 0.91\%. From a broader perspective, when increasing
noise from 0\% to 25\%, HODT’s out-of-sample accuracy decreased by
2.39\%, compared with 2.93\%, 3.43\%, and 2.66\% for ConTree, CART-size,
and CART-depth, respectively. Notably, starting at a 5\% noise level,
HODT was able to find solutions that exceeded the ground truth on
the training dataset while maintaining strong performance on out-of-sample
tests. These results further refute the notion that optimal methods
are less robust to noise, highlighting the superiority of HODT.

\paragraph{Effect of noise on data}

\begin{table}
	\centering\scriptsize
	\renewcommand{\arraystretch}{1.1}
	\begin{tabular}{
			>{\raggedright\arraybackslash}p{0.5cm}  
			>{\centering\arraybackslash}p{1.8cm}
			>{\centering\arraybackslash}p{1.5cm}
			>{\centering\arraybackslash}p{1.5cm}
			>{\centering\arraybackslash}p{1.3cm}
			>{\centering\arraybackslash}p{1.3cm}
			>{\centering\arraybackslash}p{1.3cm}
			>{\centering\arraybackslash}p{1.3cm}
		}
		\hline
		\multirow{2}{0.8cm}{\makecell[l] {Noise \\
				level}}
		& \multirow{2}{*}{Method} 
		& \multirow{2}{*}{\makecell[t]{Train\\Acc (\%)}} 
		& \multirow{2}{*}{\makecell[t]{Test\\Acc (\%)}} 
		& \multirow{2}{*}{Tree size} 
		& Depth & & \tabularnewline
		\cline{6-8}
		& & & & & Maximum & Average & Expected\tabularnewline\\
		
		\Xhline{0.9pt}\\
		\makecell[l]{0\\ \\ \\ \\ \\}  & %
		\begin{tabular}{c}
			CART-depth-2\tabularnewline
			CART-size-4\tabularnewline
			ConTree-2\tabularnewline
			HODT\tabularnewline
			Ground Truth\tabularnewline
		\end{tabular} & %
		\begin{tabular}{c}
			90.29 (4.74)\tabularnewline
			94.20 (3.70)\tabularnewline
			92.73 (3.90)\tabularnewline
			99.88 (0.26)\tabularnewline
			100 (0.00)\tabularnewline
		\end{tabular} & %
		\begin{tabular}{c}
			87.23 (5.14)\tabularnewline
			90.32 (4.24)\tabularnewline
			89.55 (4.10)\tabularnewline
			98.08 (1.00)\tabularnewline
			100 (0.00)\tabularnewline
		\end{tabular} & %
		\begin{tabular}{c}
			2.96 (0.20)\tabularnewline
			4.00 (0.00)\tabularnewline
			3.00 (0.00)\tabularnewline
			2.00 (0.00)\tabularnewline
			2.00 (0.00)\tabularnewline
		\end{tabular} & %
		\begin{tabular}{c}
			2.00 (0.00)\tabularnewline
			3.2 (0.399)\tabularnewline
			2.00 (0.00)\tabularnewline
			2.00 (0.00)\tabularnewline
			2.00 (0.00)\tabularnewline
		\end{tabular} & %
		\begin{tabular}{c}
			1.99 (0.065)\tabularnewline
			2.51 (0.16)\tabularnewline
			2.00 (0.00)\tabularnewline
			1.67 (0.00)\tabularnewline
			1.67 (0.00)\tabularnewline
		\end{tabular} & %
		\begin{tabular}{c}
			1.99 (0.09)\tabularnewline
			2.09 (0.33)\tabularnewline
			2.00 (0.00)\tabularnewline
			1.63 (0.29)\tabularnewline
			1.63 (0.28)\tabularnewline
		\end{tabular}\tabularnewline
		\makecell[l]{5\\ \\ \\ \\ \\} & %
		\begin{tabular}{c}
			CART-depth-2\tabularnewline
			CART-size-4\tabularnewline
			ConTree-2\tabularnewline
			HODT\tabularnewline
			Ground Truth\tabularnewline
		\end{tabular} & %
		\begin{tabular}{c}
			90.16 (4.63)\tabularnewline
			93.98 (3.78)\tabularnewline
			92.62 (3.79)\tabularnewline
			99.62 (0.40)\tabularnewline
			99.59 (0.44)\tabularnewline
		\end{tabular} & %
		\begin{tabular}{c}
			87.16 (5.06)\tabularnewline
			90.32 (4.20)\tabularnewline
			89.50 (3.94)\tabularnewline
			98.10 (1.01)\tabularnewline
			100 (0.00)\tabularnewline
		\end{tabular} & %
		\begin{tabular}{c}
			2.98 (0.14)\tabularnewline
			4.00 (0.00)\tabularnewline
			3.00 (0.00)\tabularnewline
			2.00 (0.00)\tabularnewline
			2.00 (0.00)\tabularnewline
		\end{tabular} & %
		\begin{tabular}{c}
			2.00 (0.00)\tabularnewline
			3.24 (0.43)\tabularnewline
			2.00 (0.00)\tabularnewline
			2.00 (0.00)\tabularnewline
			2.00 (0.00)\tabularnewline
		\end{tabular} & %
		\begin{tabular}{c}
			1.99 (0.05)\tabularnewline
			2.53 (0.17)\tabularnewline
			2.00 (0.00)\tabularnewline
			1.67 (0.00)\tabularnewline
			1.67 (0.00)\tabularnewline
		\end{tabular} & %
		\begin{tabular}{c}
			2.00 (0.01)\tabularnewline
			2.05 (0.32)\tabularnewline
			2.00 (0.00)\tabularnewline
			1.64 (0.28)\tabularnewline
			1.65 (0.28)\tabularnewline
		\end{tabular}\tabularnewline
		\makecell[l]{10\\ \\ \\ \\ \\}& %
		\begin{tabular}{c}
			CART-depth-2\tabularnewline
			CART-size-4\tabularnewline
			ConTree-2\tabularnewline
			HODT\tabularnewline
			Ground Truth\tabularnewline
		\end{tabular} & %
		\begin{tabular}{c}
			90.30 (4.55)\tabularnewline
			93.98 (3.57)\tabularnewline
			92.50 (3.65)\tabularnewline
			99.07 (0.62)\tabularnewline
			98.81 (0.71)\tabularnewline
		\end{tabular} & %
		\begin{tabular}{c}
			87.36 (5.09)\tabularnewline
			90.30 (4.55)\tabularnewline
			89.38 (4.09)\tabularnewline
			97.50 (0.90)\tabularnewline
			100 (0.00)\tabularnewline
		\end{tabular} & %
		\begin{tabular}{c}
			2.96 (0.20)\tabularnewline
			4.00 (0.00)\tabularnewline
			3.00 (0.00)\tabularnewline
			2.00 (0.00)\tabularnewline
			2.00 (0.00)\tabularnewline
		\end{tabular} & %
		\begin{tabular}{c}
			2.00 (0.00)\tabularnewline
			3.28 (0.45)\tabularnewline
			2.00 (0.00)\tabularnewline
			2.00 (0.00)\tabularnewline
			2.00 (0.00)\tabularnewline
		\end{tabular} & %
		\begin{tabular}{c}
			1.99 (0.065)\tabularnewline
			2.54 (0.17)\tabularnewline
			2.00 (0.00)\tabularnewline
			1.67 (0.00)\tabularnewline
			1.67 (0.00)\tabularnewline
		\end{tabular} & %
		\begin{tabular}{c}
			1.99 (0.09)\tabularnewline
			2.03 (0.34)\tabularnewline
			2.00 (0.00)\tabularnewline
			1.58 (0.29)\tabularnewline
			1.65 (0.28)\tabularnewline
		\end{tabular}\tabularnewline
		\makecell[l]{10\\ \\ \\ \\ \\} & %
		\begin{tabular}{c}
			CART-depth-2\tabularnewline
			CART-size-4\tabularnewline
			ConTree-2\tabularnewline
			HODT\tabularnewline
			Ground Truth\tabularnewline
		\end{tabular} & %
		\begin{tabular}{c}
			89.86 (4.91)\tabularnewline
			93.78 (3.92)\tabularnewline
			92.52 (3.83)\tabularnewline
			99.05 (0.66)\tabularnewline
			98.72 (0.80)\tabularnewline
		\end{tabular} & %
		\begin{tabular}{c}
			87.06 (4.97)\tabularnewline
			90.27 (4.42)\tabularnewline
			89.31 (4.38)\tabularnewline
			97.90 (1.10)\tabularnewline
			100 (0.00)\tabularnewline
		\end{tabular} & %
		\begin{tabular}{c}
			2.96 (0.20)\tabularnewline
			4.00 (0.00)\tabularnewline
			3.00 (0.00)\tabularnewline
			2.00 (0.00)\tabularnewline
			2.00 (0.00)\tabularnewline
		\end{tabular} & %
		\begin{tabular}{c}
			2.00 (0.00)\tabularnewline
			3.32 (0.47)\tabularnewline
			2.00 (0.00)\tabularnewline
			2.00 (0.00)\tabularnewline
			2.00 (0.00)\tabularnewline
		\end{tabular} & %
		\begin{tabular}{c}
			1.99 (0.07)\tabularnewline
			2.55 (0.18)\tabularnewline
			2.00 (0.00)\tabularnewline
			1.67 (0.00)\tabularnewline
			1.67 (0.00)\tabularnewline
		\end{tabular} & %
		\begin{tabular}{c}
			2.00 (0.02)\tabularnewline
			2.03 (0.33)\tabularnewline
			2.00 (0.00)\tabularnewline
			1.63 (0.28)\tabularnewline
			1.65 (0.28)\tabularnewline
		\end{tabular}\tabularnewline
		\makecell[l]{20\\ \\ \\ \\ \\} & %
		\begin{tabular}{c}
			CART-depth-2\tabularnewline
			CART-size-4\tabularnewline
			ConTree-2\tabularnewline
			HODT\tabularnewline
			Ground Truth\tabularnewline
		\end{tabular} & %
		\begin{tabular}{c}
			89.75 (4.98)\tabularnewline
			93.75 (3.70)\tabularnewline
			92.39 (3.72)\tabularnewline
			98.65 (0.87)\tabularnewline
			98.11 (1.04)\tabularnewline
		\end{tabular} & %
		\begin{tabular}{c}
			86.61 (5.57)\tabularnewline
			90.21 (4.27)\tabularnewline
			89.25 (4.05)\tabularnewline
			97.40 (11.50)\tabularnewline
			100 (0.00)\tabularnewline
		\end{tabular} & %
		\begin{tabular}{c}
			3.00 (0.00)\tabularnewline
			4.00 (0.00)\tabularnewline
			3.00 (0.00)\tabularnewline
			2.00 (0.00)\tabularnewline
			2.00 (0.00)\tabularnewline
		\end{tabular} & %
		\begin{tabular}{c}
			2.00 (0.00)\tabularnewline
			3.18 (0.38)\tabularnewline
			2.00 (0.00)\tabularnewline
			2.00 (0.00)\tabularnewline
			2.00 (0.00)\tabularnewline
		\end{tabular} & %
		\begin{tabular}{c}
			2.00 (0.00)\tabularnewline
			2.49 (0.16)\tabularnewline
			2.00 (0.00)\tabularnewline
			1.67 (0.00)\tabularnewline
			1.67 (0.00)\tabularnewline
		\end{tabular} & %
		\begin{tabular}{c}
			2.00 (0.00)\tabularnewline
			2.08 (0.27)\tabularnewline
			2.00 (0.00)\tabularnewline
			1.64 (0.28)\tabularnewline
			1.65 (0.28)\tabularnewline
		\end{tabular}\tabularnewline
		\makecell[l]{25\\ \\ \\ \\ \\} & %
		\begin{tabular}{c}
			CART-depth-2\tabularnewline
			CART-size-4\tabularnewline
			ConTree-2\tabularnewline
			HODT\tabularnewline
			Ground Truth\tabularnewline
		\end{tabular} & %
		\begin{tabular}{c}
			89.76 (4.47)\tabularnewline
			93.58 (4.13)\tabularnewline
			92.29 (3.65)\tabularnewline
			98.35 (1.00)\tabularnewline
			97.67 (0.91)\tabularnewline
		\end{tabular} & %
		\begin{tabular}{c}
			87.13 (4.93)\tabularnewline
			90.28 (4.50)\tabularnewline
			89.55 (4.17)\tabularnewline
			97.62 (1.20)\tabularnewline
			100 (0.00)\tabularnewline
		\end{tabular} & %
		\begin{tabular}{c}
			2.98 (0.14)\tabularnewline
			4.00 (0.00)\tabularnewline
			2.98 (0.00)\tabularnewline
			2.00 (0.00)\tabularnewline
			2.00 (0.00)\tabularnewline
		\end{tabular} & %
		\begin{tabular}{c}
			2.00 (0.00)\tabularnewline
			3.24 (0.43)\tabularnewline
			2.00 (0.00)\tabularnewline
			2.00 (0.00)\tabularnewline
			2.00 (0.00)\tabularnewline
		\end{tabular} & %
		\begin{tabular}{c}
			1.99 (0.046)\tabularnewline
			2.53 (0.17)\tabularnewline
			1.99 (0.05)\tabularnewline
			1.67 (0.00)\tabularnewline
			1.67 (0.00)\tabularnewline
		\end{tabular} & %
		\begin{tabular}{c}
			1.99 (0.09)\tabularnewline
			2.05 (0.35)\tabularnewline
			1.99 (0.05)\tabularnewline
			1.67 (0.27)\tabularnewline
			1.65 (0.28)\tabularnewline
		\end{tabular}\tabularnewline
	\end{tabular}
	\caption{The effect of noise on training data. Training data size = 100. Ground
		truth trees are size 2.}
\end{table}

Finally, in the last set of experiments, we examined the effect of
noise in the features of the data. We fixed $N=100$, $D=2$ and $K=2$.
Again, HODT was able to find solutions that outperformed the ground
truth on the training dataset while achieving the best out-of-sample
performance. Combined with previous experiments, these results strongly
refute the idea that optimal algorithms necessarily overfit the data.
Even when training accuracy exceeds that of the ground truth, proper
control of model complexity prevents overfitting.

In summary, across all synthetic data experiments, HODT-generated
trees most closely matched the quality metrics of the ground truth
trees. HODT not only produced more accurate results but also generated
smaller trees, demonstrating a clear advantage in scenarios where
tree size must be strictly controlled. Interestingly, by allowing
slightly more flexibility in size or depth, CART can sometimes achieve
slightly better solutions than optimal axis-parallel decision tree
algorithms. However, in contexts requiring strict control of tree
size, HODT consistently provides superior performance compared to
all other methods.

Moreover, our experiments align with the observations of \citet{bertsimas2017optimal},
countering the widely held misconception that optimal methods are
more prone to overfitting the training set at the expense of out-of-sample
accuracy. We demonstrate that optimal solutions remain robust in data-scarce
and noisy settings, even when training accuracy exceeds that of the
ground truth.

\subsection{Computational experiments on real-world datasets\label{subsec: real-world}}

	%

	\begin{table}
	\begin{centering}
		\centering\tiny
		\renewcommand{\arraystretch}{1.1}
		\begin{tabular}
			{
				>{\raggedright\arraybackslash}p{0.8cm}  
				>{\centering\arraybackslash}p{0.3cm}
				>{\centering\arraybackslash}p{0.3cm}
				>{\centering\arraybackslash}p{0.1cm}
				>{\centering\arraybackslash}p{1.3cm}
				>{\centering\arraybackslash}p{1.3cm}
				>{\centering\arraybackslash}p{1.3cm}
				>{\centering\arraybackslash}p{1.3cm}
				>{\centering\arraybackslash}p{1.3cm}
				>{\centering\arraybackslash}p{1.3cm}
				>{\centering\arraybackslash}p{1.3cm}
			}
			
			Dataset & $N$ & $D$ & $C$ & \begin{cellvarwidth}[t]
				\centering
				CART-depth
				
				$d=2$
			\end{cellvarwidth} & \begin{cellvarwidth}[t]
				\centering
				ConTree
				
				$d=2$
			\end{cellvarwidth} & \begin{cellvarwidth}[t]
				\centering
				CART-size
				
				$K=2$
			\end{cellvarwidth} & \begin{cellvarwidth}[t]
				\centering
				CART-size
				
				$K=3$
			\end{cellvarwidth} & \begin{cellvarwidth}[t]
				\centering
				HODT
				
				$K=2$
			\end{cellvarwidth} & \begin{cellvarwidth}[t]
				\centering
				HODT
				
				$K=3$
			\end{cellvarwidth}\tabularnewline\\
			
			\Xhline{0.9pt}\\
				Parkinsons & 195 & 22 & 2 & \begin{cellvarwidth}[t]
		\centering
		87.95/84.62
		
		(1.48/5.85)
	\end{cellvarwidth} & \begin{cellvarwidth}[t]
		\centering
		93.20/82.56
		
		(1.04/5.23)
	\end{cellvarwidth} & \begin{cellvarwidth}[t]
		\centering
		87.82/85.13
		
		(1.67/6.36)
	\end{cellvarwidth} & \begin{cellvarwidth}[t]
		\centering
		90.77/86.67
		
		(1.04/5.71)
	\end{cellvarwidth} & \begin{cellvarwidth}[t]
		\centering
		93.72/\textbf{89.74}
		
		(1.66/3.14)
	\end{cellvarwidth} & \begin{cellvarwidth}[t]
		\centering
		\textbf{94.49}/\textbf{89.74}
		
		(2.15/3.14
	\end{cellvarwidth}\tabularnewline
	Diabetic & 1146 & 19 & 2 & \begin{cellvarwidth}[t]
		\centering
		65.55/64.00
		
		(0.92/3.74)
	\end{cellvarwidth} & \begin{cellvarwidth}[t]
		\centering
		67.36/63.48
		
		(0.79/3.95)
	\end{cellvarwidth} & \begin{cellvarwidth}[t]
		\centering
		65.55/64.00
		
		(0.92/3.74)
	\end{cellvarwidth} & \begin{cellvarwidth}[t]
		\centering
		67.53/62.96
		
		(1.05/2.66)
	\end{cellvarwidth} & \begin{cellvarwidth}[t]
		\centering
		79.93/\textbf{76.52}
		
		(1.17/1.71)
	\end{cellvarwidth} & \begin{cellvarwidth}[t]
		\centering
		\textbf{80.09}/76.35
		
		(1.06/2.38)
	\end{cellvarwidth}\tabularnewline
	BalScl & 625 & 4 & 3 & \begin{cellvarwidth}[t]
		\centering
		71.72/65.12
		
		(0.94/2.00)
	\end{cellvarwidth} & \begin{cellvarwidth}[t]
		\centering
		72.88/66.88
		
		(0.35/1.40)
	\end{cellvarwidth} & \begin{cellvarwidth}[t]
		\centering
		69.80/63.20
		
		(0.44/1.75)
	\end{cellvarwidth} & \begin{cellvarwidth}[t]
		\centering
		72.68/64.64
		
		(1.12/2.74)
	\end{cellvarwidth} & \begin{cellvarwidth}[t]
		\centering
		93.44/94.40
		
		(0.33/1.26)
	\end{cellvarwidth} & \begin{cellvarwidth}[t]
		\centering
		\textbf{94.28}/\textbf{94.56}
		
		(0.30/1.54)
	\end{cellvarwidth}\tabularnewline
	StatlogVS & 845 & 18 & 4 & \begin{cellvarwidth}[t]
		\centering
		53.34/52.66
		
		(0.32/2.97)
	\end{cellvarwidth} & \begin{cellvarwidth}[t]
		\centering
		62.42/61.89
		
		(0.91/4.07)
	\end{cellvarwidth} & \begin{cellvarwidth}[t]
		\centering
		49.94/48.40
		
		(0.64/3.73)
	\end{cellvarwidth} & \begin{cellvarwidth}[t]
		\centering
		53.34/52.66
		
		(0.32/2.97)
	\end{cellvarwidth} & \begin{cellvarwidth}[t]
		\centering
		65.12/63.31
		
		(0.78/4.73)
	\end{cellvarwidth} & \begin{cellvarwidth}[t]
		\centering
		\textbf{68.75}/\textbf{64.38}
		
		(0.74/2.58)
	\end{cellvarwidth}\tabularnewline
	ImgSeg & 210 & 19 & 7 & \begin{cellvarwidth}[t]
		\centering
		50.00/41.91
		
		(6.42/9.71)
	\end{cellvarwidth} & \begin{cellvarwidth}[t]
		\centering
		58.69/49.04
		
		(0.80/3.23)
	\end{cellvarwidth} & \begin{cellvarwidth}[t]
		\centering
		44.64/33.81
		
		(1.30/5.51)
	\end{cellvarwidth} & \begin{cellvarwidth}[t]
		\centering
		57.86/\textbf{52.38}
		
		(1.09/4.26)
	\end{cellvarwidth} & \begin{cellvarwidth}[t]
		\centering
		66.00/41.34
		
		(2.58/6.24)
	\end{cellvarwidth} & \begin{cellvarwidth}[t]
		\centering
		\textbf{66.47}/49.52
		
		(1.91/5.20)
	\end{cellvarwidth}\tabularnewline

		iris & 147 & 4 & 3 & \begin{cellvarwidth}[t]
		\centering
		97.09/90.67
		
		(0.68/2.49)
	\end{cellvarwidth} & \begin{cellvarwidth}[t]
		\centering
		97.09/90.66
		
		(0.68/2.49)
	\end{cellvarwidth} & \begin{cellvarwidth}[t]
		\centering
		97.09/90.67
		
		(0.68/2.49)
	\end{cellvarwidth} & \begin{cellvarwidth}[t]
		\centering
		98.12/91.33
		
		(1.26/3.40)
	\end{cellvarwidth} & \begin{cellvarwidth}[t]
		\centering
		\textbf{99.80}/\textbf{97.33}
		
		(3.49/7.14)
	\end{cellvarwidth} & \begin{cellvarwidth}[t]
		\centering
		98.97/96.77
		
		(4.18/6.31)
	\end{cellvarwidth}\tabularnewline
	MnkPrb & 432 & 6 & 2 & \begin{cellvarwidth}[t]
		\centering
		74.44/77.24
		
		(1.53/6.06)
	\end{cellvarwidth} & \begin{cellvarwidth}[t]
		\centering
		78.38/75.40
		
		(1.49/5.89)
	\end{cellvarwidth} & \begin{cellvarwidth}[t]
		\centering
		74.44/77.24
		
		(1.53/6.06)
	\end{cellvarwidth} & \begin{cellvarwidth}[t]
		\centering
		81.39/82.76
		
		(3.28/6.86)
	\end{cellvarwidth} & \begin{cellvarwidth}[t]
		\centering
		82.24/79.75
		
		(1.56/3.96)
	\end{cellvarwidth} & \begin{cellvarwidth}[t]
		\centering
		\textbf{82.26}/\textbf{86.20}
		
		(2.79/2.44)
	\end{cellvarwidth}\tabularnewline
	UKM & 403 & 5 & 5 & \begin{cellvarwidth}[t]
		\centering
		81.18/75.80
		
		(1.52/4.18)
	\end{cellvarwidth} & \begin{cellvarwidth}[t]
		\centering
		\textbf{82.17}/\textbf{76.05}
		
		(0.54/1.84)
	\end{cellvarwidth} & \begin{cellvarwidth}[t]
		\centering
		78.14/75.31
		
		(0.72/4.18)
	\end{cellvarwidth} & \begin{cellvarwidth}[t]
		\centering
		82.05/75.80
		
		(0.50/2.59)
	\end{cellvarwidth} & \begin{cellvarwidth}[t]
		\centering
		71.24/64.69
		
		(9.41/16.69)
	\end{cellvarwidth} & \begin{cellvarwidth}[t]
		\centering
		72.42/68.15
		
		(10.74/16.84)
	\end{cellvarwidth}\tabularnewline
	TchAsst & 106 & 5 & 4 & \begin{cellvarwidth}[t]
		\centering
		54.76/39.09
		
		(1.68/5.46)
	\end{cellvarwidth} & \begin{cellvarwidth}[t]
		\centering
		57.14/41.81
		
		(1.30/5.30)
	\end{cellvarwidth} & \begin{cellvarwidth}[t]
		\centering
		53.33/40.00
		
		(0.48/4.45)
	\end{cellvarwidth} & \begin{cellvarwidth}[t]
		\centering
		56.43/42.73
		
		(2.33/4.64)
	\end{cellvarwidth} & \begin{cellvarwidth}[t]
		\centering
		71.19/57.27
		
		(10.0/7.61)
	\end{cellvarwidth} & \begin{cellvarwidth}[t]
		\centering
		\textbf{75.71}/\textbf{60.90}
		
		(1.60/9.96)
	\end{cellvarwidth}\tabularnewline
	RiceCammeo & 3810 & 7 & 2 & \begin{cellvarwidth}[t]
		\centering
		92.91/92.52
		
		(0.15/0.67)
	\end{cellvarwidth} & \begin{cellvarwidth}[t]
		\centering
		93.30/92.91
		
		(0.19/0.80)
	\end{cellvarwidth} & \begin{cellvarwidth}[t]
		\centering
		92.91/92.52
		
		(0.15/0.67)
	\end{cellvarwidth} & \begin{cellvarwidth}[t]
		\centering
		92.91/92.52
		
		(0.15/0.67)
	\end{cellvarwidth} & \begin{cellvarwidth}[t]
		\centering
		94.00/93.28
		
		(0.29/1.28)
	\end{cellvarwidth} & \begin{cellvarwidth}[t]
		\centering
		\textbf{94.18}/\textbf{93.31}
		
		(0.29/1.39)
	\end{cellvarwidth}\tabularnewline
	Yeast & 1453 & 8 & 10 & \begin{cellvarwidth}[t]
		\centering
		48.80/45.77
		
		(0.35/0.83)
	\end{cellvarwidth} & \begin{cellvarwidth}[t]
		\centering
		49.81/\textbf{49.28}
		
		(0.18/01.55)
	\end{cellvarwidth} & \begin{cellvarwidth}[t]
		\centering
		46.56/44.19
		
		(0.43/0.77)
	\end{cellvarwidth} & \begin{cellvarwidth}[t]
		\centering
		48.80/45.77
		
		(0.35/0.83)
	\end{cellvarwidth} & \begin{cellvarwidth}[t]
		\centering
		48.62/45.66
		
		(0.66/1.38)
	\end{cellvarwidth} & \begin{cellvarwidth}[t]
		\centering
		\textbf{49.60}/45.57
		
		(0.70/3.09)
	\end{cellvarwidth}\tabularnewline
	WineQuality & 5318 & 11 & 7 & \begin{cellvarwidth}[t]
		\centering
		52.96/52.88
		
		(0.39/0.45)
	\end{cellvarwidth} & \begin{cellvarwidth}[t]
		\centering
		54.07/53.85
		
		(0.27/1.07)
	\end{cellvarwidth} & \begin{cellvarwidth}[t]
		\centering
		52.45/52.59
		
		(0.93/0.58)
	\end{cellvarwidth} & \begin{cellvarwidth}[t]
		\centering
		52.96/52.88
		
		(0.39/0.45)
	\end{cellvarwidth} & \begin{cellvarwidth}[t]
		\centering
		54.70/\textbf{55.23}
		
		(0.31/0.98)
	\end{cellvarwidth} & \begin{cellvarwidth}[t]
		\centering
		\textbf{55.09}/\textbf{55.23}
		
		(0.29/1.16)
	\end{cellvarwidth}\tabularnewline
	SteelOthers & 1941 & 27 & 2 & \begin{cellvarwidth}[t]
		\centering
		70.59/71.11
		
		(0.21/1.32)
	\end{cellvarwidth} & \begin{cellvarwidth}[t]
		\centering
		73.75/71.31
		
		(0.31/0.85)
	\end{cellvarwidth} & \begin{cellvarwidth}[t]
		\centering
		70.50/71.05
		
		(0.36/1.26)
	\end{cellvarwidth} & \begin{cellvarwidth}[t]
		\centering
		72.35/72.55
		
		(0.89/1.21)
	\end{cellvarwidth} & \begin{cellvarwidth}[t]
		\centering
		77.43/\textbf{76.52}
		
		(0.05/0.81)
	\end{cellvarwidth} & \begin{cellvarwidth}[t]
		\centering
		\textbf{78.09}/75.83
		
		(1.15/1.98)
	\end{cellvarwidth}\tabularnewline
	DrgCnsAlc & 1885 & 12 & 7 & \begin{cellvarwidth}[t]
		\centering
		69.07/\textbf{69.87}
		
		(0.48/1.92)
	\end{cellvarwidth} & \begin{cellvarwidth}[t]
		\centering
		69.54/69.66
		
		(0.39/1.90)
	\end{cellvarwidth} & \begin{cellvarwidth}[t]
		\centering
		69.07\textbf{/69.87}
		
		(0.48/1.92)
	\end{cellvarwidth} & \begin{cellvarwidth}[t]
		\centering
		69.07/\textbf{69.87}
		
		(0.48/1.92)
	\end{cellvarwidth} & \begin{cellvarwidth}[t]
		\centering
		70.89/69.07
		
		(0.35/1.33)
	\end{cellvarwidth} & \begin{cellvarwidth}[t]
		\centering
		\textbf{71.70}/69.23
		
		(0.36/1.48)
	\end{cellvarwidth}\tabularnewline
	DrgCnsImp & 1885 & 12 & 7 & \begin{cellvarwidth}[t]
		\centering
		40.41/39.79
		
		(0.57/2.03)
	\end{cellvarwidth} & \begin{cellvarwidth}[t]
		\centering
		41.95/38.04
		
		(0.39/0.87)
	\end{cellvarwidth} & \begin{cellvarwidth}[t]
		\centering
		40.36/39.89
		
		(0.55/2.19)
	\end{cellvarwidth} & \begin{cellvarwidth}[t]
		\centering
		40.41/39.79
		
		(0.57/2.03)
	\end{cellvarwidth} & \begin{cellvarwidth}[t]
		\centering
		45.04/42.55
		
		(0.88/3.07)
	\end{cellvarwidth} & \begin{cellvarwidth}[t]
		\centering
		\textbf{45.73}/\textbf{42.60}
		
		(2.39/1.80)
	\end{cellvarwidth}\tabularnewline
	DrgCnsSS & 1885 & 12 & 7 & \begin{cellvarwidth}[t]
		\centering
		51.54/51.88
		
		(0.36/2.75)
	\end{cellvarwidth} & \begin{cellvarwidth}[t]
		\centering
		52.73/52.20
		
		(0.35/1.70)
	\end{cellvarwidth} & \begin{cellvarwidth}[t]
		\centering
		51.53/\textbf{52.79}
		
		(0.35/1.38)
	\end{cellvarwidth} & \begin{cellvarwidth}[t]
		\centering
		51.54/51.88
		
		(0.36/2.75)
	\end{cellvarwidth} & \begin{cellvarwidth}[t]
		\centering
		54.13/52.31
		
		(0.58/1.70)
	\end{cellvarwidth} & \begin{cellvarwidth}[t]
		\centering
		\textbf{54.93}/52.10
		
		(0.51/1.28)
	\end{cellvarwidth}\tabularnewline
	EstObLvl & 2087 & 16 & 7 & \begin{cellvarwidth}[t]
		\centering
		55.36/\textbf{55.98}
		
		(0.18/0.66)
	\end{cellvarwidth} & \begin{cellvarwidth}[t]
		\centering
		\textbf{55.40}/55.31
		
		(0.14/1.13)
	\end{cellvarwidth} & \begin{cellvarwidth}[t]
		\centering
		43.14/42.39
		
		(0.30/1.33)
	\end{cellvarwidth} & \begin{cellvarwidth}[t]
		\centering
		55.36/\textbf{55.98}
		
		(0.18/0.66)
	\end{cellvarwidth} & \begin{cellvarwidth}[t]
		\centering
		46.05/44.21
		
		(1.09/2.87)
	\end{cellvarwidth} & \begin{cellvarwidth}[t]
		\centering
		51.99/50.28
		
		(1.82/2.64)
	\end{cellvarwidth}\tabularnewline
	AiDS & 2139 & 23 & 2 & \begin{cellvarwidth}[t]
		\centering
		85.44/85.70
		
		(0.39/01.43)
	\end{cellvarwidth} & \begin{cellvarwidth}[t]
		\centering
		87.54/86.73
		
		(0.11/0.58)
	\end{cellvarwidth} & \begin{cellvarwidth}[t]
		\centering
		85.44/85.70
		
		(0.39/1.43)
	\end{cellvarwidth} & \begin{cellvarwidth}[t]
		\centering
		\textbf{89.18}/\textbf{88.51}
		
		(0.20/0.68)
	\end{cellvarwidth} & \begin{cellvarwidth}[t]
		\centering
		86.90/85.79
		
		(1.24/1.93)
	\end{cellvarwidth} & \begin{cellvarwidth}[t]
		\centering
		87.13/85.65
		
		(1.31/1.94)
	\end{cellvarwidth}\tabularnewline
	AucVer & 2043 & 7 & 2 & \begin{cellvarwidth}[t]
		\centering
		90.42/\textbf{89.58}
		
		(0.55/1.57)
	\end{cellvarwidth} & \begin{cellvarwidth}[t]
		\centering
		\textbf{91.00}/\textbf{89.98}
		
		(0.40/1.59)
	\end{cellvarwidth} & \begin{cellvarwidth}[t]
		\centering
		90.27/89.49
		
		(0.40/1.61)
	\end{cellvarwidth} & \begin{cellvarwidth}[t]
		\centering
		90.42/\textbf{89.58}
		
		(0.55/1.57)
	\end{cellvarwidth} & \begin{cellvarwidth}[t]
		\centering
		89.65/88.46
		
		(0.35/1.99)
	\end{cellvarwidth} & \begin{cellvarwidth}[t]
		\centering
		89.87/88.66
		
		(0.34/1.83)
	\end{cellvarwidth}\tabularnewline
	Ai4iMF & 10000 & 6 & 2 & \begin{cellvarwidth}[t]
		\centering
		97.25/97.07
		
		(0.11/0.36)
	\end{cellvarwidth} & \begin{cellvarwidth}[t]
		\centering
		97.39/97.31
		
		(0.07/0.26)
	\end{cellvarwidth} & \begin{cellvarwidth}[t]
		\centering
		97.08/97.00
		
		(00.27/00.34)
	\end{cellvarwidth} & \begin{cellvarwidth}[t]
		\centering
		97.16/97.03
		
		(0.22/0.36)
	\end{cellvarwidth} & \begin{cellvarwidth}[t]
		\centering
		97.79/97.95
		
		(0.11/0.18)
	\end{cellvarwidth} & \begin{cellvarwidth}[t]
		\centering
		\textbf{97.90}/\textbf{98.00}
		
		(0.12/0.15)
	\end{cellvarwidth}\tabularnewline
	VoicePath & 704 & 2 & 2 & \begin{cellvarwidth}[t]
		\centering
		96.87/95.60
		
		(0.51/2.12)
	\end{cellvarwidth} & \begin{cellvarwidth}[t]
		\centering
		97.16/96.31
		
		(0.56/2.43)
	\end{cellvarwidth} & \begin{cellvarwidth}[t]
		\centering
		96.87/95.60
		
		(0.51/2.12)
	\end{cellvarwidth} & \begin{cellvarwidth}[t]
		\centering
		97.16/95.04
		
		(0.48/1.85)
	\end{cellvarwidth} & \begin{cellvarwidth}[t]
		\centering
		97.69/\textbf{97.44}
		
		(0.28/1.08)
	\end{cellvarwidth} & \begin{cellvarwidth}[t]
		\centering
		\textbf{97.90}/97.31
		
		(0.32/1.17)
	\end{cellvarwidth}\tabularnewline
	WaveForm & 5000 & 21 & 3 & \begin{cellvarwidth}[t]
		\centering
		70.69/68.50
		
		(0.25/1.33)
	\end{cellvarwidth} & \begin{cellvarwidth}[t]
		\centering
		\textbf{71.28}/\textbf{68.54}
		
		(0.34/0.81)
	\end{cellvarwidth} & \begin{cellvarwidth}[t]
		\centering
		66.31/65.02
		
		(0.74/0.87)
	\end{cellvarwidth} & \begin{cellvarwidth}[t]
		\centering
		70.69/68.50
		
		(0.25/1.33)
	\end{cellvarwidth} & \begin{cellvarwidth}[t]
		\centering
		67.05/66.62
		
		(2.44/1.55)
	\end{cellvarwidth} & \begin{cellvarwidth}[t]
		\centering
		65.12/65.80
		
		(1.82/1.65)
	\end{cellvarwidth}\tabularnewline
			
\end{tabular}
\par\end{centering}
\caption{(Part II) Five-fold cross-validation results on the UCI dataset. We compare
the performance of our HODT algorithm, with $K$ (number of splitting
rules) ranging from 2 to 3, trained using $\mathit{sodtWSH}$—against
approximate methods:size- and depth-constrained CART algorithms (CART-size
and CART-depth), as well as the state-of-the-art optimal axis-parallel
decision tree algorithm, ConTree. The depth of the CART-depth and
ConTree algorithms are fixed at 2. Results are reported as mean 0-1
loss on the training and test sets in the format \emph{Training Error
/ Test Error (Standard Deviation: Train / Test)}. The best-performing
algorithm in each row is shown in \textbf{bold}. \label{tab:Five-fold-cross-validation-resul}}
\end{table}

	We now present a direct comparison between HODT, ConTree, CART-depth,
	and CART-size on real-world datasets. Our goal is to evaluate the
	effectiveness of the more flexible hyperplane decision tree model
	relative to the standard axis-parallel decision tree models. Table
	\ref{tab:Five-fold-cross-validation-resul} reports the mean out-of-sample
	accuracy for 30 classification datasets from the UCI Machine Learning
	Repository.
	
	As expected, when model complexity is controlled, HODT outperforms
	the axis-parallel methods on most datasets. Specifically, HODT achieves
	higher training accuracy on 25 datasets and better out-of-sample accuracy
	on 23 datasets. By contrast, ConTree achieves the best training accuracy
	on only 4 datasets and the best out-of-sample accuracy on 6 datasets.
	The CART-depth algorithm achieves the highest out-of-sample accuracy
	on just 3 datasets, performing clearly worse than the optimal ConTree
	under the same constraints. Even with greater flexibility (allowing
	tree depths beyond 2), CART-depth achieves the highest accuracy on
	only 5 datasets, still falling behind HODT.
	
	In summary, the more flexible hyperplane decision tree model shows
	strong potential for interpretable learning tasks under controlled
	model complexity, often delivering substantially better performance
	than axis-parallel models. For instance, on the BalScl dataset, HODT
	improves training accuracy by over 20\% and test accuracy by nearly
	30\% compared to the optimal axis-parallel tree algorithm, ConTree.
	These results underscore the benefits of moving beyond axis-parallel
	and even hyperplane-based approaches, demonstrating that hypersurface
	decision trees can capture richer and more complex decision boundaries.
	Overall, our findings indicate that HODT not only achieves higher
	accuracy but also offers a more general and robust framework for decision
	tree construction, paving the way for broader applications in machine
	learning where interpretability and predictive power are equally critical.

\subsubsection{Experiments for $K>3$ decision tree}

The experimental results for  $K>3$ decision trees are given in Table \ref{Kgreaterthanthree_I} and \ref{Kgreaterthanthree_II}.

\begin{table}
	\begin{centering}
		\centering\tiny
		\renewcommand{\arraystretch}{1.1}
		\begin{tabular}
			{
				>{\raggedright\arraybackslash}p{0.8cm}  
				>{\centering\arraybackslash}p{0.2cm}
				>{\centering\arraybackslash}p{0.05cm}
				>{\centering\arraybackslash}p{0.05cm}
				>{\centering\arraybackslash}p{1.3cm}
				>{\centering\arraybackslash}p{1.4cm}
				>{\centering\arraybackslash}p{1.3cm}
				>{\centering\arraybackslash}p{1.3cm}
				>{\centering\arraybackslash}p{1.3cm}
				>{\centering\arraybackslash}p{1.3cm}
				>{\centering\arraybackslash}p{1.3cm}
				>{\centering\arraybackslash}p{1.3cm}
				>{\centering\arraybackslash}p{1.3cm}
			}
			
			Dataset & $N$ & $D$ & $C$ & \begin{cellvarwidth}[t]
				\centering
				CART-depth
				
				$d=3$
			\end{cellvarwidth} & \begin{cellvarwidth}[t]
				\centering
				ConTree
				
				$d=3$
			\end{cellvarwidth} & \begin{cellvarwidth}[t]
				\centering
				CART-size
				
				$K=4$
			\end{cellvarwidth} & \begin{cellvarwidth}[t]
				\centering
				CART-size
				
				$K=5$
			\end{cellvarwidth} & \begin{cellvarwidth}[t]
				\centering
				CART-size
				
				$K=6$
			\end{cellvarwidth} & \begin{cellvarwidth}[t]
				\centering
				HODT
				
				$K=4$
			\end{cellvarwidth} & \begin{cellvarwidth}[t]
				\centering
				HODT
				
				$K=5$
			\end{cellvarwidth} & \begin{cellvarwidth}[t]
				\centering
				HODT
				
				$K=6$
			\end{cellvarwidth}\tabularnewline\\
			
			\Xhline{0.9pt}\\
			
			haberman & 283 & 3 & 2 & \begin{cellvarwidth}[t]
				\centering
				77.35/74.39
				
				(1.38/3.25)
			\end{cellvarwidth} & \begin{cellvarwidth}[t]
				\centering
				80.97/72.98
				
				(1.22/03.94)
			\end{cellvarwidth} & \begin{cellvarwidth}[t]
				\centering
				76.20/73.33
				
				(1.99/2.58)
			\end{cellvarwidth} & \begin{cellvarwidth}[t]
				\centering
				77.17/72.28
				
				(2.21/1.72)
			\end{cellvarwidth} & \begin{cellvarwidth}[t]
				\centering
				77.97/72.28
				
				(1.71/2.58)
			\end{cellvarwidth} & \begin{cellvarwidth}[t]
				\centering
				81.68/78.94
				
				(0.86/3.51)
			\end{cellvarwidth} & \begin{cellvarwidth}[t]
				\centering
				82.48/79.29
				
				(0.97/4.30)
			\end{cellvarwidth} & \begin{cellvarwidth}[t]
				\centering
				\textbf{82.83}/\textbf{79.65}
				
				(1.15/4.37)
			\end{cellvarwidth}\tabularnewline
			BldTrns & 502 & 4 & 2 & \begin{cellvarwidth}[t]
				\centering
				77.41/74.06
				
				(0.64/03.62)
			\end{cellvarwidth} & \begin{cellvarwidth}[t]
				\centering
				80.45/74.06
				
				(0.843/2.76)
			\end{cellvarwidth} & \begin{cellvarwidth}[t]
				\centering
				78.40/77.23
				
				(0.56/2.80)
			\end{cellvarwidth} & \begin{cellvarwidth}[t]
				\centering
				79.05/75.45
				
				(1.32/4.13)
			\end{cellvarwidth} & \begin{cellvarwidth}[t]
				\centering
				79.20/73.86
				
				(1.21/3.41)
			\end{cellvarwidth} & \begin{cellvarwidth}[t]
				\centering
				81.68/78.94
				
				(0.97/3.51)
			\end{cellvarwidth} & \begin{cellvarwidth}[t]
				\centering
				82.48/79.30
				
				(1.20/2.88)
			\end{cellvarwidth} & \begin{cellvarwidth}[t]
				\centering
				\textbf{82.83}/\textbf{79.65}
				
				(1.15/4.23)
			\end{cellvarwidth}\tabularnewline
			spesis & 975 & 3 & 2 & \begin{cellvarwidth}[t]
				\centering
				94.23/93.74
				
				(0.35/1.19)
			\end{cellvarwidth} & \begin{cellvarwidth}[t]
				\centering
				94.56/93.44
				
				(0.34/1.10)
			\end{cellvarwidth} & \begin{cellvarwidth}[t]
				\centering
				94.28/93.54
				
				(0.33/1.11)
			\end{cellvarwidth} & \begin{cellvarwidth}[t]
				\centering
				94.39/93.54
				
				(0.34/0.95)
			\end{cellvarwidth} & \begin{cellvarwidth}[t]
				\centering
				94.44/93.54
				
				(0.29/0.95)
			\end{cellvarwidth} & \begin{cellvarwidth}[t]
				\centering
				95.77/93.85
				
				(0.43/1.95)
			\end{cellvarwidth} & \begin{cellvarwidth}[t]
				\centering
				96.10/93.85
				
				(0.43/1.95)
			\end{cellvarwidth} & \begin{cellvarwidth}[t]
				\centering
				\textbf{96.44}/\textbf{94.05}
				
				(0.46/1.72)
			\end{cellvarwidth}\tabularnewline
			algerian & 243 & 14 & 2 & \begin{cellvarwidth}[t]
				\centering
				99.59/98.37
				
				(0.21/1.53)
			\end{cellvarwidth} & \begin{cellvarwidth}[t]
				\centering
				\textbf{100}/95.51
				
				(0.00/3.00)
			\end{cellvarwidth} & \begin{cellvarwidth}[t]
				\centering
				99.79/98.37
				
				(0.25/1.53)
			\end{cellvarwidth} & \begin{cellvarwidth}[t]
				\centering
				99.90/98.37
				
				(0.21/1.53)
			\end{cellvarwidth} & \begin{cellvarwidth}[t]
				\centering
				\textbf{100}/98.37
				
				(0.00/1.53)
			\end{cellvarwidth} & \begin{cellvarwidth}[t]
				\centering
				\textbf{100}/\textbf{99.18}
				
				(0.00/1.12
			\end{cellvarwidth} & \begin{cellvarwidth}[t]
				\centering
				\textbf{100}/97.96
				
				(0.00/1.44)
			\end{cellvarwidth} & \begin{cellvarwidth}[t]
				\centering
				\textbf{100}/96.74
				
				(0.00/1.83)
			\end{cellvarwidth}\tabularnewline
			Cryotherapy & 89 & 6 & 2 & \begin{cellvarwidth}[t]
				\centering
				94.93/90.00
				
				(1.91/06.48)
			\end{cellvarwidth} & \begin{cellvarwidth}[t]
				\centering
				99.44/78.89
				
				(0.69/12.86)
			\end{cellvarwidth} & \begin{cellvarwidth}[t]
				\centering
				94.93/90.00
				
				(1.91/6.48)
			\end{cellvarwidth} & \begin{cellvarwidth}[t]
				\centering
				96.62/86.67
				
				(1.91/9.03)
			\end{cellvarwidth} & \begin{cellvarwidth}[t]
				\centering
				97.75/86.67
				
				(1.69/9.03)
			\end{cellvarwidth} & \begin{cellvarwidth}[t]
				\centering
				99.44/\textbf{94.44}
				
				(0.77/3.93)
			\end{cellvarwidth} & \begin{cellvarwidth}[t]
				\centering
				\textbf{99.72}/93.33
				
				(0.77/3.93)
			\end{cellvarwidth} & \begin{cellvarwidth}[t]
				\centering
				\textbf{99.72}/92.22
				
				(0.49/1.95)
			\end{cellvarwidth}\tabularnewline
			Caesarian & 72 & 5 & 2 & \begin{cellvarwidth}[t]
				\centering
				77.90/58.67
				
				(1.789/7.78)
			\end{cellvarwidth} & \begin{cellvarwidth}[t]
				\centering
				82.11/58.67
				
				(1.72/4.99)
			\end{cellvarwidth} & \begin{cellvarwidth}[t]
				\centering
				77.54/66.67
				
				(2.33/4.22)
			\end{cellvarwidth} & \begin{cellvarwidth}[t]
				\centering
				78.60/56.00
				
				(3.58/12.36)
			\end{cellvarwidth} & \begin{cellvarwidth}[t]
				\centering
				80.35/57.33
				
				(2.58/6.80)
			\end{cellvarwidth} & \begin{cellvarwidth}[t]
				\centering
				92.63/85.33
				
				(1.47/2.98)
			\end{cellvarwidth} & \begin{cellvarwidth}[t]
				\centering
				\textbf{93.68}/\textbf{86.68}
				
				(20.96/0.00)
			\end{cellvarwidth} & \begin{cellvarwidth}[t]
				\centering
				\textbf{93.68}/85.33
				
				(20.96/2.98)
			\end{cellvarwidth}\tabularnewline
			ecoli & 336 & 7 & 8 & \begin{cellvarwidth}[t]
				\centering
				85.52/81.47
				
				(0.37/1.77)
			\end{cellvarwidth} & \begin{cellvarwidth}[t]
				\centering
				\textbf{87.84}/82.06
				
				(0.18/2.85)
			\end{cellvarwidth} & \begin{cellvarwidth}[t]
				\centering
				85.07/81.76
				
				(0.53/1.50)
			\end{cellvarwidth} & \begin{cellvarwidth}[t]
				\centering
				85.82/82.65
				
				(0.88/2.53)
			\end{cellvarwidth} & \begin{cellvarwidth}[t]
				\centering
				85.97/\textbf{82.94}
				
				(0.65/2.20)
			\end{cellvarwidth} & \begin{cellvarwidth}[t]
				\centering
				84.25/79.12
				
				(1.70/4.08)
			\end{cellvarwidth} & \begin{cellvarwidth}[t]
				\centering
				84.40/79.41
				
				(21.25/4.29
			\end{cellvarwidth} & \begin{cellvarwidth}[t]
				\centering
				85.90/79.12
				
				(21.25/4.46)
			\end{cellvarwidth}\tabularnewline
			GlsId & 213 & 9 & 6 & \begin{cellvarwidth}[t]
				\centering
				72.35/66.05
				
				(1.86/6.51)
			\end{cellvarwidth} & \begin{cellvarwidth}[t]
				\centering
				\textbf{80.12}/\textbf{70.70}
				
				(0.941\slash 3.78)
			\end{cellvarwidth} & \begin{cellvarwidth}[t]
				\centering
				71.29/66.98
				
				(1.72/6.14)
			\end{cellvarwidth} & \begin{cellvarwidth}[t]
				\centering
				74.24/63.72
				
				(1.46/4.79)
			\end{cellvarwidth} & \begin{cellvarwidth}[t]
				\centering
				75.88/64.65
				
				(2.10/7.98)
			\end{cellvarwidth} & \begin{cellvarwidth}[t]
				\centering
				76.35/61.86
				
				(1.97/4.53)
			\end{cellvarwidth} & \begin{cellvarwidth}[t]
				\centering
				77.77/61.86
				
				(21.92/4.53)
			\end{cellvarwidth} & \begin{cellvarwidth}[t]
				\centering
				77.29/58.61
				
				(21.64/4.47)
			\end{cellvarwidth}\tabularnewline

		\end{tabular}
		\par\end{centering}
	\caption{(Part I) Five-fold cross-validation results on the UCI dataset. We compare
		the performance of our HODT algorithm, with $K$ (number of splitting
		rules) ranging from 4 to 6, trained using $\mathit{sodtWSH}$—against
		approximate methods:size- and depth-constrained CART algorithms (CART-size
		and CART-depth), as well as the state-of-the-art optimal axis-parallel
		decision tree algorithm, ConTree. The depth of the CART-depth and
		ConTree algorithms are fixed at 3. Results are reported as mean 0-1
		loss on the training and test sets in the format \emph{Training Error
			/ Test Error (Standard Deviation: Train / Test)}. The best-performing
		algorithm in each row is shown in \textbf{bold}.\label{Kgreaterthanthree_I}} 
\end{table}

\begin{table}
	\begin{centering}
		\centering\tiny
		\renewcommand{\arraystretch}{1.1}
		\begin{tabular}
			{
				>{\raggedright\arraybackslash}p{1.1cm}  
				>{\centering\arraybackslash}p{0.3cm}
				>{\centering\arraybackslash}p{0.1cm}
				>{\centering\arraybackslash}p{0.1cm}
				>{\centering\arraybackslash}p{1.3cm}
				>{\centering\arraybackslash}p{1.4cm}
				>{\centering\arraybackslash}p{1.3cm}
				>{\centering\arraybackslash}p{1.3cm}
				>{\centering\arraybackslash}p{1.3cm}
				>{\centering\arraybackslash}p{1.3cm}
				>{\centering\arraybackslash}p{1.3cm}
				>{\centering\arraybackslash}p{1.3cm}
				>{\centering\arraybackslash}p{1.3cm}
			}
			
			Dataset & $N$ & $D$ & $C$ & \begin{cellvarwidth}[t]
				\centering
				CART-depth
				
				$d=3$
			\end{cellvarwidth} & \begin{cellvarwidth}[t]
				\centering
				ConTree
				
				$d=3$
			\end{cellvarwidth} & \begin{cellvarwidth}[t]
				\centering
				CART-size
				
				$K=4$
			\end{cellvarwidth} & \begin{cellvarwidth}[t]
				\centering
				CART-size
				
				$K=5$
			\end{cellvarwidth} & \begin{cellvarwidth}[t]
				\centering
				CART-size
				
				$K=6$
			\end{cellvarwidth} & \begin{cellvarwidth}[t]
				\centering
				HODT
				
				$K=4$
			\end{cellvarwidth} & \begin{cellvarwidth}[t]
				\centering
				HODT
				
				$K=5$
			\end{cellvarwidth} & \begin{cellvarwidth}[t]
				\centering
				HODT
				
				$K=6$
			\end{cellvarwidth}\tabularnewline\\
			
			\Xhline{0.9pt}\\

			iris & 147 & 4 & 3 & \begin{cellvarwidth}[t]
	\centering
	98.12/90.67
	
	(1.26/2.49)
\end{cellvarwidth} & \begin{cellvarwidth}[t]
	\centering
	99.82/91.33
	
	(0.34/4.00)
\end{cellvarwidth} & \begin{cellvarwidth}[t]
	\centering
	98.80/88.00
	
	(1.03/4.52)
\end{cellvarwidth} & \begin{cellvarwidth}[t]
	\centering
	99.15/88.00
	
	(0.76/4.52)
\end{cellvarwidth} & \begin{cellvarwidth}[t]
	\centering
	99.32/88.00
	
	(0.64/4.52)
\end{cellvarwidth} & \begin{cellvarwidth}[t]
	\centering
	99.15/96.77
	
	(2.94/12.61)
\end{cellvarwidth} & \begin{cellvarwidth}[t]
	\centering
	\textbf{100}/\textbf{97.77}
	
	(0.00/12.61)
\end{cellvarwidth} & \begin{cellvarwidth}[t]
	\centering
	\textbf{100}/96.77
	
	(0.00/12.61)
\end{cellvarwidth}\tabularnewline
MnkPrb & 432 & 6 & 2 & \begin{cellvarwidth}[t]
	\centering
	82.44/78.62
	
	(3.61/3.89)
\end{cellvarwidth} & \begin{cellvarwidth}[t]
	\centering
	\textbf{89.51}/\textbf{86.44}
	
	(1.43/5.65)
\end{cellvarwidth} & \begin{cellvarwidth}[t]
	\centering
	82.43/78.62
	
	(3.61/3.89)
\end{cellvarwidth} & \begin{cellvarwidth}[t]
	\centering
	83.25/78.16
	
	(2.41/2.72)
\end{cellvarwidth} & \begin{cellvarwidth}[t]
	\centering
	81.54/77.01
	
	(2.60/3.17)
\end{cellvarwidth} & \begin{cellvarwidth}[t]
	\centering
	82.46/81.38
	
	(1.30/5.6)
\end{cellvarwidth} & \begin{cellvarwidth}[t]
	\centering
	82.73/81.61
	
	(1.61/5.75)
\end{cellvarwidth} & \begin{cellvarwidth}[t]
	\centering
	82.96/82.30
	
	(1.50/6.38)
\end{cellvarwidth}\tabularnewline
UKM & 403 & 5 & 5 & \begin{cellvarwidth}[t]
	\centering
	88.76/86.42
	
	(1.38/3.75)
\end{cellvarwidth} & \begin{cellvarwidth}[t]
	\centering
	\textbf{91.55}/\textbf{87.90}
	
	(0.60/3.26)
\end{cellvarwidth} & \begin{cellvarwidth}[t]
	\centering
	85.34/80.00
	
	(0.46/3.86)
\end{cellvarwidth} & \begin{cellvarwidth}[t]
	\centering
	87.39/83.70
	
	(0.64/3.44)
\end{cellvarwidth} & \begin{cellvarwidth}[t]
	\centering
	89.57/87.65
	
	(0.32/1.35)
\end{cellvarwidth} & \begin{cellvarwidth}[t]
	\centering
	73.10/68.89
	
	(10.74/17.00)
\end{cellvarwidth} & \begin{cellvarwidth}[t]
	\centering
	73.60/75.30
	
	(10.17/9.88)
\end{cellvarwidth} & \begin{cellvarwidth}[t]
	\centering
	73.79/75.56
	
	(10.3/9.54)
\end{cellvarwidth}\tabularnewline
TchAsst & 106 & 5 & 4 & \begin{cellvarwidth}[t]
	\centering
	62.381/41.82
	
	(2.21/4.45)
\end{cellvarwidth} & \begin{cellvarwidth}[t]
	\centering
	67.62/46.36
	
	(2.65/10.52)
\end{cellvarwidth} & \begin{cellvarwidth}[t]
	\centering
	58.81/40.91
	
	(1.43/4.98)
\end{cellvarwidth} & \begin{cellvarwidth}[t]
	\centering
	62.14/43.64
	
	(2.31/3.64)
\end{cellvarwidth} & \begin{cellvarwidth}[t]
	\centering
	63.81/42.73
	
	(2.88/4.64)
\end{cellvarwidth} & \begin{cellvarwidth}[t]
	\centering
	78.33/\textbf{62.73}
	
	(1.13/10.85)
\end{cellvarwidth} & \begin{cellvarwidth}[t]
	\centering
	79.76/\textbf{62.73}
	
	(1.19/10.85)
\end{cellvarwidth} & \begin{cellvarwidth}[t]
	\centering
	\textbf{80.48}/59.09
	
	(1.81/8.50)
\end{cellvarwidth}\tabularnewline
RiceCammeo & 3810 & 7 & 2 & \begin{cellvarwidth}[t]
	\centering
	93.06/92.49
	
	(0.28/0.79)
\end{cellvarwidth} & \begin{cellvarwidth}[t]
	\centering
	93.79/92.36
	
	(0.16/0.89)
\end{cellvarwidth} & \begin{cellvarwidth}[t]
	\centering
	92.91/92.52
	
	(0.15/0.67)
\end{cellvarwidth} & \begin{cellvarwidth}[t]
	\centering
	93.03/92.49
	
	(0.28/0.72)
\end{cellvarwidth} & \begin{cellvarwidth}[t]
	\centering
	93.09/92.49
	
	(0.31/0.76)
\end{cellvarwidth} & \begin{cellvarwidth}[t]
	\centering
	94.27/93.49
	
	(0.29/1.36)
\end{cellvarwidth} & \begin{cellvarwidth}[t]
	\centering
	94.35/\textbf{93.57}
	
	(0.29/1.31)
\end{cellvarwidth} & \begin{cellvarwidth}[t]
	\centering
	\textbf{94.38}/\textbf{93.57}
	
	(0.31/1.20)
\end{cellvarwidth}\tabularnewline
Yeast & 1453 & 8 & 10 & \begin{cellvarwidth}[t]
	\centering
	57.86/54.35
	
	(0.62/0.98)
\end{cellvarwidth} & \begin{cellvarwidth}[t]
	\centering
	\textbf{58.50}/\textbf{55.74}
	
	(0.38/0.96)
\end{cellvarwidth} & \begin{cellvarwidth}[t]
	\centering
	55.83/52.16
	
	(0.31/1.78)
\end{cellvarwidth} & \begin{cellvarwidth}[t]
	\centering
	56.99/53.13
	
	(0.42/0.89)
\end{cellvarwidth} & \begin{cellvarwidth}[t]
	\centering
	57.95/54.36
	
	(0.68/1.20)
\end{cellvarwidth} & \begin{cellvarwidth}[t]
	\centering
	50.02/46.83
	
	(0.92/2.73)
\end{cellvarwidth} & \begin{cellvarwidth}[t]
	\centering
	51.23/47.23
	
	(1.02/3.01)
\end{cellvarwidth} & \begin{cellvarwidth}[t]
	\centering
	50.23/48.11
	
	(1.03/2.50)
\end{cellvarwidth}\tabularnewline
WineQuality & 5318 & 11 & 7 & \begin{cellvarwidth}[t]
	\centering
	53.95/53.29
	
	(0.40/0.57)
\end{cellvarwidth} & \begin{cellvarwidth}[t]
	\centering
	\textbf{55.52}/53.29
	
	(0.19/0.81)
\end{cellvarwidth} & \begin{cellvarwidth}[t]
	\centering
	52.96/52.88
	
	(0.39/0.45)
\end{cellvarwidth} & \begin{cellvarwidth}[t]
	\centering
	53.09/52.84
	
	(0.33/0.46)
\end{cellvarwidth} & \begin{cellvarwidth}[t]
	\centering
	53.35/53.08
	
	(0.41/0.61)
\end{cellvarwidth} & \begin{cellvarwidth}[t]
	\centering
	55.21/55.60
	
	(0.37/1.24)
\end{cellvarwidth} & \begin{cellvarwidth}[t]
	\centering
	54.96/\textbf{55.02}
	
	(0.32/0.98)
\end{cellvarwidth} & \begin{cellvarwidth}[t]
	\centering
	55.42/54.32
	
	(1.21/0.42)
\end{cellvarwidth}\tabularnewline
SteelOthers & 1941 & 27 & 2 & \begin{cellvarwidth}[t]
	\centering
	73.30/73.57
	
	(1.09/0.44)
\end{cellvarwidth} & \begin{cellvarwidth}[t]
	\centering
	77.77/73.57
	
	(0.16/1.33)
\end{cellvarwidth} & \begin{cellvarwidth}[t]
	\centering
	73.17/73.37
	
	(0.82/1.06)
\end{cellvarwidth} & \begin{cellvarwidth}[t]
	\centering
	74.38/74.50
	
	(1.55/1.02)
\end{cellvarwidth} & \begin{cellvarwidth}[t]
	\centering
	76.10/75.32
	
	(0.96/0.88)
\end{cellvarwidth} & \begin{cellvarwidth}[t]
	\centering
	76.49/\textbf{75.42}
	
	(0.81/0.74)
\end{cellvarwidth} & \begin{cellvarwidth}[t]
	\centering
	\textbf{77.81}/74.38
	
	(0.32/0.56)
\end{cellvarwidth} & \begin{cellvarwidth}[t]
	\centering
	74.32/72.5
	
	(0.34/1.83)
\end{cellvarwidth}\tabularnewline
DrgCnsAlc & 1885 & 12 & 7 & \begin{cellvarwidth}[t]
	\centering
	69.35/69.44
	
	(0.43/1.91)
\end{cellvarwidth} & \begin{cellvarwidth}[t]
	\centering
	70.623/69.55
	
	(0.50/1.85)
\end{cellvarwidth} & \begin{cellvarwidth}[t]
	\centering
	69.24/69.71
	
	(0.69/2.12)
\end{cellvarwidth} & \begin{cellvarwidth}[t]
	\centering
	69.44/69.34
	
	(0.55/2.02)
\end{cellvarwidth} & \begin{cellvarwidth}[t]
	\centering
	70.03/\textbf{70.08}
	
	(0.78/1.74)
\end{cellvarwidth} & \begin{cellvarwidth}[t]
	\centering
	72.44/69.18
	
	(0.34/1.45)
\end{cellvarwidth} & \begin{cellvarwidth}[t]
	\centering
	72.88/68.96
	
	(0.27/1.42)
\end{cellvarwidth} & \begin{cellvarwidth}[t]
	\centering
	\textbf{73.21}/69.43
	
	(0.43/1.58)
\end{cellvarwidth}\tabularnewline
DrgCnsImp & 1885 & 12 & 7 & \begin{cellvarwidth}[t]
	\centering
	41.22/38.57
	
	(0.52/1.38)
\end{cellvarwidth} & \begin{cellvarwidth}[t]
	\centering
	43.83/38.20
	
	(0.43/3.13)
\end{cellvarwidth} & \begin{cellvarwidth}[t]
	\centering
	40.69/39.47
	
	(0.67/1.87)
\end{cellvarwidth} & \begin{cellvarwidth}[t]
	\centering
	41.07/38.99
	
	(0.69/1.61)
\end{cellvarwidth} & \begin{cellvarwidth}[t]
	\centering
	41.27/38.89
	
	(0.73/1.59)
\end{cellvarwidth} & \begin{cellvarwidth}[t]
	\centering
	46.18/\textbf{42.65}
	
	(2.43/4.38)
\end{cellvarwidth} & \begin{cellvarwidth}[t]
	\centering
	\textbf{46.25}/42.60
	
	(3.18/4.37)
\end{cellvarwidth} & \begin{cellvarwidth}[t]
	\centering
	46.06/40.80
	
	(4.01/2.32)
\end{cellvarwidth}\tabularnewline
DrgCnsSS & 1885 & 12 & 7 & \begin{cellvarwidth}[t]
	\centering
	52.06/51.03
	
	(0.53/0.024)
\end{cellvarwidth} & \begin{cellvarwidth}[t]
	\centering
	54.23/51.57
	
	(0.38/2.22)
\end{cellvarwidth} & \begin{cellvarwidth}[t]
	\centering
	51.54/51.88
	
	(0.36/2.75)
\end{cellvarwidth} & \begin{cellvarwidth}[t]
	\centering
	51.80/51.41
	
	(0.22/2.39)
\end{cellvarwidth} & \begin{cellvarwidth}[t]
	\centering
	51.80/51.41
	
	(0.22/2.39)
\end{cellvarwidth} & \begin{cellvarwidth}[t]
	\centering
	55.48/51.57
	
	(0.50/1.60)
\end{cellvarwidth} & \begin{cellvarwidth}[t]
	\centering
	55.82/51.36
	
	(0.60/1.33)
\end{cellvarwidth} & \begin{cellvarwidth}[t]
	\centering
	\textbf{56.21}/\textbf{52.42})
	
	(0.32/1.24
\end{cellvarwidth}\tabularnewline
EstObLvl & 2087 & 16 & 7 & \begin{cellvarwidth}[t]
	\centering
	65.07/63.16
	
	(0.12/2.07)
\end{cellvarwidth} & \begin{cellvarwidth}[t]
	\centering
	\textbf{72.51}/\textbf{70.05}
	
	(0.28/1.54)
\end{cellvarwidth} & \begin{cellvarwidth}[t]
	\centering
	61.25/60.62
	
	(0.35/1.40)
\end{cellvarwidth} & \begin{cellvarwidth}[t]
	\centering
	67.31/67.22
	
	(0.31/1.33)
\end{cellvarwidth} & \begin{cellvarwidth}[t]
	\centering
	69.50/68.66
	
	(0.23/1.4)
\end{cellvarwidth} & \begin{cellvarwidth}[t]
	\centering
	51.70/48.95
	
	(1.61/3.17)
\end{cellvarwidth} & \begin{cellvarwidth}[t]
	\centering
	49.61/46.89
	
	(1.75/2.51)
\end{cellvarwidth} & \begin{cellvarwidth}[t]
	\centering
	49.64/49.22
	
	(1.46/2.84)
\end{cellvarwidth}\tabularnewline
AiDS & 2139 & 23 & 2 & \begin{cellvarwidth}[t]
	\centering
	89.23/88.55
	
	(0.15/0.74)
\end{cellvarwidth} & \begin{cellvarwidth}[t]
	\centering
	\textbf{90.24}/\textbf{89.11}
	
	(0.20/0.86)
\end{cellvarwidth} & \begin{cellvarwidth}[t]
	\centering
	89.23/88.55
	
	(0.15/0.74)
\end{cellvarwidth} & \begin{cellvarwidth}[t]
	\centering
	89.36/88.60
	
	(0.27/0.80)
\end{cellvarwidth} & \begin{cellvarwidth}[t]
	\centering
	89.49/88.64
	
	(0.23/0.84)
\end{cellvarwidth} & \begin{cellvarwidth}[t]
	\centering
	86.99/85.93
	
	(1.87/2.23)
\end{cellvarwidth} & \begin{cellvarwidth}[t]
	\centering
	86.99/85.65
	
	(2.19/2.51)
\end{cellvarwidth} & \begin{cellvarwidth}[t]
	\centering
	87.02/85.23
	
	(1.32/2.44)
\end{cellvarwidth}\tabularnewline
AucVer & 2043 & 7 & 2 & \begin{cellvarwidth}[t]
	\centering
	92.08/90.56
	
	(0.65/2.22)
\end{cellvarwidth} & \begin{cellvarwidth}[t]
	\centering
	\textbf{94.98}/\textbf{93.55}
	
	(0.40/1.50)
\end{cellvarwidth} & \begin{cellvarwidth}[t]
	\centering
	91.91/90.27
	
	(0.64/2.16)
\end{cellvarwidth} & \begin{cellvarwidth}[t]
	\centering
	92.83/91.00
	
	(0.52/2.08)
\end{cellvarwidth} & \begin{cellvarwidth}[t]
	\centering
	93.05/91.30
	
	(0.46/2.21)
\end{cellvarwidth} & \begin{cellvarwidth}[t]
	\centering
	90.00/88.36
	
	(0.28/1.53)
\end{cellvarwidth} & \begin{cellvarwidth}[t]
	\centering
	90.11/88.66
	
	(0.48/1.38)
\end{cellvarwidth} & \begin{cellvarwidth}[t]
	\centering
	89.92/88.31
	
	(0.35/1.65)
\end{cellvarwidth}\tabularnewline
Ai4iMF & 10000 & 6 & 2 & \begin{cellvarwidth}[t]
	\centering
	97.41/97.23
	
	(0.05/0.48)
\end{cellvarwidth} & \begin{cellvarwidth}[t]
	\centering
	97.84/97.27
	
	(0.06/0.22)
\end{cellvarwidth} & \begin{cellvarwidth}[t]
	\centering
	97.22/97.06
	
	(0.28/0.39)
\end{cellvarwidth} & \begin{cellvarwidth}[t]
	\centering
	97.49/97.19
	
	(0.26/0.32)
\end{cellvarwidth} & \begin{cellvarwidth}[t]
	\centering
	97.56/97.26
	
	(0.19/0.38)
\end{cellvarwidth} & \begin{cellvarwidth}[t]
	\centering
	\textbf{97.92}/\textbf{98.01}
	
	(0.11/0.17)
\end{cellvarwidth} & \begin{cellvarwidth}[t]
	\centering
	97.88/97.92
	
	(0.12/0.17)
\end{cellvarwidth} & \begin{cellvarwidth}[t]
	\centering
	97.82/97.85
	
	(0.17/0.09)
\end{cellvarwidth}\tabularnewline
VoicePath & 704 & 2 & 2 & \begin{cellvarwidth}[t]
	\centering
	97.16/95.04
	
	(0.48/1.85)
\end{cellvarwidth} & \begin{cellvarwidth}[t]
	\centering
	98.15/96.03
	
	(0.43/1.83)
\end{cellvarwidth} & \begin{cellvarwidth}[t]
	\centering
	97.41/95.04
	
	(0.53/2.24)
\end{cellvarwidth} & \begin{cellvarwidth}[t]
	\centering
	97.58/95.18
	
	(0.47/2.26)
\end{cellvarwidth} & \begin{cellvarwidth}[t]
	\centering
	97.73/95.04
	
	(0.51/2.33)
\end{cellvarwidth} & \begin{cellvarwidth}[t]
	\centering
	98.12/97.89
	
	(0.27/1.42)
\end{cellvarwidth} & \begin{cellvarwidth}[t]
	\centering
	98.30/97.89
	
	(0.27/1.00)
\end{cellvarwidth} & \begin{cellvarwidth}[t]
	\centering
	\textbf{98.40}/\textbf{98.01}
	
	(0.25/1.27)
\end{cellvarwidth}\tabularnewline
WaveForm & 5000 & 21 & 3 & \begin{cellvarwidth}[t]
	\centering
	73.29/70.44
	
	(0.32/0.41)
\end{cellvarwidth} & \begin{cellvarwidth}[t]
	\centering
	\textbf{76.53}/\textbf{73.26}
	
	(0.16/1.03)
\end{cellvarwidth} & \begin{cellvarwidth}[t]
	\centering
	71.37/68.84
	
	(0.45/1.17)
\end{cellvarwidth} & \begin{cellvarwidth}[t]
	\centering
	72.36/69.84
	
	(0.62/1.16)
\end{cellvarwidth} & \begin{cellvarwidth}[t]
	\centering
	72.69/69.96
	
	(0.79/1.17)
\end{cellvarwidth} & \begin{cellvarwidth}[t]
	\centering
	64.37/62.70
	
	(0.82/1.11)
\end{cellvarwidth} & \begin{cellvarwidth}[t]
	\centering
	64.58/62.70
	
	(3.71/1.82)
\end{cellvarwidth} & \begin{cellvarwidth}[t]
	\centering
	66.12/62.70
	
	(2.52/3.64)
\end{cellvarwidth}\tabularnewline
		\end{tabular}
		\par\end{centering}
	\caption{(Part II) Five-fold cross-validation results on the UCI dataset. We compare
		the performance of our HODT algorithm, with $K$ (number of splitting
		rules) ranging from 4 to 6, trained using $\mathit{sodtWSH}$—against
		approximate methods:size- and depth-constrained CART algorithms (CART-size
		and CART-depth), as well as the state-of-the-art optimal axis-parallel
		decision tree algorithm, ConTree. The depth of the CART-depth and
		ConTree algorithms are fixed at 3. Results are reported as mean 0-1
		loss on the training and test sets in the format \emph{Training Error
			/ Test Error (Standard Deviation: Train / Test)}. The best-performing
		algorithm in each row is shown in \textbf{bold}.\label{Kgreaterthanthree_II}} 
\end{table}

\newpage
\section*{NeurIPS Paper Checklist}

The checklist is designed to encourage best practices for responsible machine learning research, addressing issues of reproducibility, transparency, research ethics, and societal impact. Do not remove the checklist: {\bf The papers not including the checklist will be desk rejected.} The checklist should follow the references and follow the (optional) supplemental material.  The checklist does NOT count towards the page
limit. 

Please read the checklist guidelines carefully for information on how to answer these questions. For each question in the checklist:
\begin{itemize}
    \item You should answer \answerYes{}, \answerNo{}, or \answerNA{}.
    \item \answerNA{} means either that the question is Not Applicable for that particular paper or the relevant information is Not Available.
    \item Please provide a short (1--2 sentence) justification right after your answer (even for \answerNA). 
\end{itemize}

{\bf The checklist answers are an integral part of your paper submission.} They are visible to the reviewers, area chairs, senior area chairs, and ethics reviewers. You will also be asked to include it (after eventual revisions) with the final version of your paper, and its final version will be published with the paper.

The reviewers of your paper will be asked to use the checklist as one of the factors in their evaluation. While \answerYes{} is generally preferable to \answerNo{}, it is perfectly acceptable to answer \answerNo{} provided a proper justification is given (e.g., error bars are not reported because it would be too computationally expensive'' or ``we were unable to find the license for the dataset we used''). In general, answering \answerNo{} or \answerNA{} is not grounds for rejection. While the questions are phrased in a binary way, we acknowledge that the true answer is often more nuanced, so please just use your best judgment and write a justification to elaborate. All supporting evidence can appear either in the main paper or the supplemental material, provided in appendix. If you answer \answerYes{} to a question, in the justification please point to the section(s) where related material for the question can be found.

IMPORTANT, please:
\begin{itemize}
    \item {\bf Delete this instruction block, but keep the section heading ``NeurIPS Paper Checklist"},
    \item  {\bf Keep the checklist subsection headings, questions/answers and guidelines below.}
    \item {\bf Do not modify the questions and only use the provided macros for your answers}.
\end{itemize}


\begin{enumerate}

\item {\bf Claims}
    \item[] Question: Do the main claims made in the abstract and introduction accurately reflect the paper's contributions and scope?
    \item[] Answer: \answerYes{} 
    \item[] Justification: Yes, assumptions are given in the problem definitions.
    \item[] Guidelines:
    \begin{itemize}
        \item The answer \answerNA{} means that the abstract and introduction do not include the claims made in the paper.
        \item The abstract and/or introduction should clearly state the claims made, including the contributions made in the paper and important assumptions and limitations. A \answerNo{} or \answerNA{} answer to this question will not be perceived well by the reviewers. 
        \item The claims made should match theoretical and experimental results, and reflect how much the results can be expected to generalize to other settings. 
        \item It is fine to include aspirational goals as motivation as long as it is clear that these goals are not attained by the paper. 
    \end{itemize}

\item {\bf Limitations}
    \item[] Question: Does the paper discuss the limitations of the work performed by the authors?
    \item[] Answer: \answerYes{} 
    \item[] Justification: The paper is limited to the size constrained decision tree problems, as state in problem definition.
    \item[] Guidelines:
    \begin{itemize}
        \item The answer \answerNA{} means that the paper has no limitation while the answer \answerNo{} means that the paper has limitations, but those are not discussed in the paper. 
        \item The authors are encouraged to create a separate ``Limitations'' section in their paper.
        \item The paper should point out any strong assumptions and how robust the results are to violations of these assumptions (e.g., independence assumptions, noiseless settings, model well-specification, asymptotic approximations only holding locally). The authors should reflect on how these assumptions might be violated in practice and what the implications would be.
        \item The authors should reflect on the scope of the claims made, e.g., if the approach was only tested on a few datasets or with a few runs. In general, empirical results often depend on implicit assumptions, which should be articulated.
        \item The authors should reflect on the factors that influence the performance of the approach. For example, a facial recognition algorithm may perform poorly when image resolution is low or images are taken in low lighting. Or a speech-to-text system might not be used reliably to provide closed captions for online lectures because it fails to handle technical jargon.
        \item The authors should discuss the computational efficiency of the proposed algorithms and how they scale with dataset size.
        \item If applicable, the authors should discuss possible limitations of their approach to address problems of privacy and fairness.
        \item While the authors might fear that complete honesty about limitations might be used by reviewers as grounds for rejection, a worse outcome might be that reviewers discover limitations that aren't acknowledged in the paper. The authors should use their best judgment and recognize that individual actions in favor of transparency play an important role in developing norms that preserve the integrity of the community. Reviewers will be specifically instructed to not penalize honesty concerning limitations.
    \end{itemize}

\item {\bf Theory assumptions and proofs}
    \item[] Question: For each theoretical result, does the paper provide the full set of assumptions and a complete (and correct) proof?
    \item[] Answer: \answerYes{} 
    \item[] Justification: Yes, the assumptions are given by the problem definition.
    \item[] Guidelines:
    \begin{itemize}
        \item The answer \answerNA{} means that the paper does not include theoretical results. 
        \item All the theorems, formulas, and proofs in the paper should be numbered and cross-referenced.
        \item All assumptions should be clearly stated or referenced in the statement of any theorems.
        \item The proofs can either appear in the main paper or the supplemental material, but if they appear in the supplemental material, the authors are encouraged to provide a short proof sketch to provide intuition. 
        \item Inversely, any informal proof provided in the core of the paper should be complemented by formal proofs provided in appendix or supplemental material.
        \item Theorems and Lemmas that the proof relies upon should be properly referenced. 
    \end{itemize}

    \item {\bf Experimental result reproducibility}
    \item[] Question: Does the paper fully disclose all the information needed to reproduce the main experimental results of the paper to the extent that it affects the main claims and/or conclusions of the paper (regardless of whether the code and data are provided or not)?
    \item[] Answer: \answerYes{} 
    \item[] Justification:  See Section 4 and Appendix E.
    \item[] Guidelines:
    \begin{itemize}
        \item The answer \answerNA{} means that the paper does not include experiments.
        \item If the paper includes experiments, a \answerNo{} answer to this question will not be perceived well by the reviewers: Making the paper reproducible is important, regardless of whether the code and data are provided or not.
        \item If the contribution is a dataset and\slash or model, the authors should describe the steps taken to make their results reproducible or verifiable. 
        \item Depending on the contribution, reproducibility can be accomplished in various ways. For example, if the contribution is a novel architecture, describing the architecture fully might suffice, or if the contribution is a specific model and empirical evaluation, it may be necessary to either make it possible for others to replicate the model with the same dataset, or provide access to the model. In general. releasing code and data is often one good way to accomplish this, but reproducibility can also be provided via detailed instructions for how to replicate the results, access to a hosted model (e.g., in the case of a large language model), releasing of a model checkpoint, or other means that are appropriate to the research performed.
        \item While NeurIPS does not require releasing code, the conference does require all submissions to provide some reasonable avenue for reproducibility, which may depend on the nature of the contribution. For example
        \begin{enumerate}
            \item If the contribution is primarily a new algorithm, the paper should make it clear how to reproduce that algorithm.
            \item If the contribution is primarily a new model architecture, the paper should describe the architecture clearly and fully.
            \item If the contribution is a new model (e.g., a large language model), then there should either be a way to access this model for reproducing the results or a way to reproduce the model (e.g., with an open-source dataset or instructions for how to construct the dataset).
            \item We recognize that reproducibility may be tricky in some cases, in which case authors are welcome to describe the particular way they provide for reproducibility. In the case of closed-source models, it may be that access to the model is limited in some way (e.g., to registered users), but it should be possible for other researchers to have some path to reproducing or verifying the results.
        \end{enumerate}
    \end{itemize}

\item {\bf Open access to data and code}
    \item[] Question: Does the paper provide open access to the data and code, with sufficient instructions to faithfully reproduce the main experimental results, as described in supplemental material?
    \item[] Answer: \answerYes{} 
    \item[] Justification: Yes, see supplementary materials.
    \item[] Guidelines:
    \begin{itemize}
        \item The answer \answerNA{} means that paper does not include experiments requiring code.
        \item Please see the NeurIPS code and data submission guidelines (\url{https://neurips.cc/public/guides/CodeSubmissionPolicy}) for more details.
        \item While we encourage the release of code and data, we understand that this might not be possible, so \answerNo{} is an acceptable answer. Papers cannot be rejected simply for not including code, unless this is central to the contribution (e.g., for a new open-source benchmark).
        \item The instructions should contain the exact command and environment needed to run to reproduce the results. See the NeurIPS code and data submission guidelines (\url{https://neurips.cc/public/guides/CodeSubmissionPolicy}) for more details.
        \item The authors should provide instructions on data access and preparation, including how to access the raw data, preprocessed data, intermediate data, and generated data, etc.
        \item The authors should provide scripts to reproduce all experimental results for the new proposed method and baselines. If only a subset of experiments are reproducible, they should state which ones are omitted from the script and why.
        \item At submission time, to preserve anonymity, the authors should release anonymized versions (if applicable).
        \item Providing as much information as possible in supplemental material (appended to the paper) is recommended, but including URLs to data and code is permitted.
    \end{itemize}

\item {\bf Experimental setting/details}
    \item[] Question: Does the paper specify all the training and test details (e.g., data splits, hyperparameters, how they were chosen, type of optimizer) necessary to understand the results?
    \item[] Answer: \answerYes{} 
    \item[] Justification: See experiments section.
    \item[] Guidelines:
    \begin{itemize}
        \item The answer \answerNA{} means that the paper does not include experiments.
        \item The experimental setting should be presented in the core of the paper to a level of detail that is necessary to appreciate the results and make sense of them.
        \item The full details can be provided either with the code, in appendix, or as supplemental material.
    \end{itemize}

\item {\bf Experiment statistical significance}
    \item[] Question: Does the paper report error bars suitably and correctly defined or other appropriate information about the statistical significance of the experiments?
    \item[] Answer: \answerNo{} 
    \item[] Justification: Our algorithm is a deterministic algorithm.
    \item[] Guidelines:
    \begin{itemize}
        \item The answer \answerNA{} means that the paper does not include experiments.
        \item The authors should answer \answerYes{} if the results are accompanied by error bars, confidence intervals, or statistical significance tests, at least for the experiments that support the main claims of the paper.
        \item The factors of variability that the error bars are capturing should be clearly stated (for example, train/test split, initialization, random drawing of some parameter, or overall run with given experimental conditions).
        \item The method for calculating the error bars should be explained (closed form formula, call to a library function, bootstrap, etc.)
        \item The assumptions made should be given (e.g., Normally distributed errors).
        \item It should be clear whether the error bar is the standard deviation or the standard error of the mean.
        \item It is OK to report 1-sigma error bars, but one should state it. The authors should preferably report a 2-sigma error bar than state that they have a 96\% CI, if the hypothesis of Normality of errors is not verified.
        \item For asymmetric distributions, the authors should be careful not to show in tables or figures symmetric error bars that would yield results that are out of range (e.g., negative error rates).
        \item If error bars are reported in tables or plots, the authors should explain in the text how they were calculated and reference the corresponding figures or tables in the text.
    \end{itemize}

\item {\bf Experiments compute resources}
    \item[] Question: For each experiment, does the paper provide sufficient information on the computer resources (type of compute workers, memory, time of execution) needed to reproduce the experiments?
    \item[] Answer: \answerYes{} 
    \item[] Justification: Yes, see section 4, and appendix E.
    \item[] Guidelines:
    \begin{itemize}
        \item The answer \answerNA{} means that the paper does not include experiments.
        \item The paper should indicate the type of compute workers CPU or GPU, internal cluster, or cloud provider, including relevant memory and storage.
        \item The paper should provide the amount of compute required for each of the individual experimental runs as well as estimate the total compute. 
        \item The paper should disclose whether the full research project required more compute than the experiments reported in the paper (e.g., preliminary or failed experiments that didn't make it into the paper). 
    \end{itemize}
    
\item {\bf Code of ethics}
    \item[] Question: Does the research conducted in the paper conform, in every respect, with the NeurIPS Code of Ethics \url{https://neurips.cc/public/EthicsGuidelines}?
    \item[] Answer: \answerYes{} 
    \item[] Justification: Yes, we agree.
    \item[] Guidelines:
    \begin{itemize}
        \item The answer \answerNA{} means that the authors have not reviewed the NeurIPS Code of Ethics.
        \item If the authors answer \answerNo, they should explain the special circumstances that require a deviation from the Code of Ethics.
        \item The authors should make sure to preserve anonymity (e.g., if there is a special consideration due to laws or regulations in their jurisdiction).
    \end{itemize}

\item {\bf Broader impacts}
    \item[] Question: Does the paper discuss both potential positive societal impacts and negative societal impacts of the work performed?
    \item[] Answer: \answerYes{} 
    \item[] Justification: This work introduces the first algorithm for solving the optimal hypersurface decision tree problem; it may have an impact in helping us understand decision trees.
    \item[] Guidelines:
    \begin{itemize}
        \item The answer \answerNA{} means that there is no societal impact of the work performed.
        \item If the authors answer \answerNA{} or \answerNo, they should explain why their work has no societal impact or why the paper does not address societal impact.
        \item Examples of negative societal impacts include potential malicious or unintended uses (e.g., disinformation, generating fake profiles, surveillance), fairness considerations (e.g., deployment of technologies that could make decisions that unfairly impact specific groups), privacy considerations, and security considerations.
        \item The conference expects that many papers will be foundational research and not tied to particular applications, let alone deployments. However, if there is a direct path to any negative applications, the authors should point it out. For example, it is legitimate to point out that an improvement in the quality of generative models could be used to generate Deepfakes for disinformation. On the other hand, it is not needed to point out that a generic algorithm for optimizing neural networks could enable people to train models that generate Deepfakes faster.
        \item The authors should consider possible harms that could arise when the technology is being used as intended and functioning correctly, harms that could arise when the technology is being used as intended but gives incorrect results, and harms following from (intentional or unintentional) misuse of the technology.
        \item If there are negative societal impacts, the authors could also discuss possible mitigation strategies (e.g., gated release of models, providing defenses in addition to attacks, mechanisms for monitoring misuse, mechanisms to monitor how a system learns from feedback over time, improving the efficiency and accessibility of ML).
    \end{itemize}
    
\item {\bf Safeguards}
    \item[] Question: Does the paper describe safeguards that have been put in place for responsible release of data or models that have a high risk for misuse (e.g., pre-trained language models, image generators, or scraped datasets)?
    \item[] Answer:  \answerNA{} 
    \item[] Justification: We do not have such risk in our research.
    \item[] Guidelines:
    \begin{itemize}
        \item The answer \answerNA{} means that the paper poses no such risks.
        \item Released models that have a high risk for misuse or dual-use should be released with necessary safeguards to allow for controlled use of the model, for example by requiring that users adhere to usage guidelines or restrictions to access the model or implementing safety filters. 
        \item Datasets that have been scraped from the Internet could pose safety risks. The authors should describe how they avoided releasing unsafe images.
        \item We recognize that providing effective safeguards is challenging, and many papers do not require this, but we encourage authors to take this into account and make a best faith effort.
    \end{itemize}

\item {\bf Licenses for existing assets}
    \item[] Question: Are the creators or original owners of assets (e.g., code, data, models), used in the paper, properly credited and are the license and terms of use explicitly mentioned and properly respected?
    \item[] Answer: \answerYes{} 
    \item[] Justification:  We use the datasets from UCI Machine Learning Repository.
    \item[] Guidelines:
    \begin{itemize}
        \item The answer \answerNA{} means that the paper does not use existing assets.
        \item The authors should cite the original paper that produced the code package or dataset.
        \item The authors should state which version of the asset is used and, if possible, include a URL.
        \item The name of the license (e.g., CC-BY 4.0) should be included for each asset.
        \item For scraped data from a particular source (e.g., website), the copyright and terms of service of that source should be provided.
        \item If assets are released, the license, copyright information, and terms of use in the package should be provided. For popular datasets, \url{paperswithcode.com/datasets} has curated licenses for some datasets. Their licensing guide can help determine the license of a dataset.
        \item For existing datasets that are re-packaged, both the original license and the license of the derived asset (if it has changed) should be provided.
        \item If this information is not available online, the authors are encouraged to reach out to the asset's creators.
    \end{itemize}

\item {\bf New assets}
    \item[] Question: Are new assets introduced in the paper well documented and is the documentation provided alongside the assets?
    \item[] Answer: \answerYes{} 
    \item[] Justification: Yes, see supplemental materials.
    \item[] Guidelines:
    \begin{itemize}
        \item The answer \answerNA{} means that the paper does not release new assets.
        \item Researchers should communicate the details of the dataset\slash code\slash model as part of their submissions via structured templates. This includes details about training, license, limitations, etc. 
        \item The paper should discuss whether and how consent was obtained from people whose asset is used.
        \item At submission time, remember to anonymize your assets (if applicable). You can either create an anonymized URL or include an anonymized zip file.
    \end{itemize}

\item {\bf Crowdsourcing and research with human subjects}
    \item[] Question: For crowdsourcing experiments and research with human subjects, does the paper include the full text of instructions given to participants and screenshots, if applicable, as well as details about compensation (if any)? 
    \item[] Answer: \answerNA{} 
    \item[] Justification:  Our research does not use crowdsourcing or conducted research with human subjects.
    \item[] Guidelines:
    \begin{itemize}
        \item The answer \answerNA{} means that the paper does not involve crowdsourcing nor research with human subjects.
        \item Including this information in the supplemental material is fine, but if the main contribution of the paper involves human subjects, then as much detail as possible should be included in the main paper. 
        \item According to the NeurIPS Code of Ethics, workers involved in data collection, curation, or other labor should be paid at least the minimum wage in the country of the data collector. 
    \end{itemize}

\item {\bf Institutional review board (IRB) approvals or equivalent for research with human subjects}
    \item[] Question: Does the paper describe potential risks incurred by study participants, whether such risks were disclosed to the subjects, and whether Institutional Review Board (IRB) approvals (or an equivalent approval/review based on the requirements of your country or institution) were obtained?
    \item[] Answer: \answerNA{} 
    \item[] Justification:  Our research does not involve research with human subjects.
    \item[] Guidelines:
    \begin{itemize}
        \item The answer \answerNA{} means that the paper does not involve crowdsourcing nor research with human subjects.
        \item Depending on the country in which research is conducted, IRB approval (or equivalent) may be required for any human subjects research. If you obtained IRB approval, you should clearly state this in the paper. 
        \item We recognize that the procedures for this may vary significantly between institutions and locations, and we expect authors to adhere to the NeurIPS Code of Ethics and the guidelines for their institution. 
        \item For initial submissions, do not include any information that would break anonymity (if applicable), such as the institution conducting the review.
    \end{itemize}

\item {\bf Declaration of LLM usage}
    \item[] Question: Does the paper describe the usage of LLMs if it is an important, original, or non-standard component of the core methods in this research? Note that if the LLM is used only for writing, editing, or formatting purposes and does \emph{not} impact the core methodology, scientific rigor, or originality of the research, declaration is not required.
    \item[] Answer:  \answerNA{} 
    \item[] Justification: This paper uses LLMs solely for grammatical polishing.
    \item[] Guidelines:
    \begin{itemize}
        \item The answer \answerNA{} means that the core method development in this research does not involve LLMs as any important, original, or non-standard components.
        \item Please refer to our LLM policy in the NeurIPS handbook for what should or should not be described.
    \end{itemize}

\end{enumerate}

\end{document}